\newtheorem{theorem}{Theorem}
\newtheorem{proposition}[theorem]{Proposition}%
\newtheorem{lemma}{Lemma}
\newtheorem{corollary}{Corollary}
\newtheorem{remark}{Remark}%
\newtheorem{definition}{Definition}%
\begin{document}

\title{Extended convexity and smoothness and their applications in deep learning}

\author{\name Binchuan Qi \email 2080068@tongji.edu.cn \\
        \AND
       \name Wei Gong (Corresponding author)  \email weigong@tongji.edu.cn\\ 
       \AND 
       \name Li Li \email lili@tongji.edu.cn\\
       \addr College of Electronics and Information Engineering, Tongji University, Shanghai, 201804, China}

\maketitle

\begin{abstract}
Classical assumptions like strong convexity and Lipschitz smoothness often fail to capture the nature of deep learning optimization problems, which are typically non-convex and non-smooth, making traditional analyses less applicable.
This study aims to elucidate the mechanisms of non-convex optimization in deep learning by extending the conventional notions of strong convexity and Lipschitz smoothness. By leveraging these concepts, we prove that, under the established constraints, the empirical risk minimization problem is equivalent to optimizing the local gradient norm and structural error, which together constitute the upper and lower bounds of the empirical risk.
Furthermore, our analysis demonstrates that the stochastic gradient descent (SGD) algorithm can effectively minimize the local gradient norm. Additionally, techniques like skip connections, over-parameterization, and random parameter initialization are shown to help control the structural error.
Ultimately, we validate the core conclusions of this paper through extensive experiments. Theoretical analysis and experimental results indicate that our findings provide new insights into the mechanisms of non-convex optimization in deep learning. 
\end{abstract}

\section{Introduction}\label{sec:introduction}

In deep learning, first-order optimization methods, including gradient descent (GD)~\cite{Carmon2017LowerBF}, stochastic gradient descent (SGD)~\cite{Ghadimi2013StochasticFA}, and advanced stochastic variance reduction techniques~\cite{Fang2018SPIDERNN,Wang2019SpiderBoostAM}, are widely employed to solve the non-convex and non-Lipschitz smooth optimization problems that arise during model training~\cite{Nemirovski2008RobustSA,Ge2015EscapingFS}. Within the classical optimization framework, these algorithms are typically analyzed for their ability to converge to stationary points, which may correspond to local minima, maxima, or saddle points~\cite{Lee2016GradientDO,Jentzen2018StrongEA}. However, empirical evidence consistently shows that, in practice, first-order methods exhibit remarkable efficiency in locating approximate global optima for the highly complex and non-convex loss landscapes encountered in deep learning~\cite{He2015DeepRL,Zhang2016UnderstandingDL,Bottou2016OptimizationMF,Belkin2018ReconcilingMM,Zhang2021UnderstandingDL}. This observed success challenges the limitations of classical optimization theory and calls for new theoretical insights into the mechanisms driving effective deep learning optimization.

To better understand the dynamics of training deep neural networks (DNNs), recent studies have explored the infinite-width regime through the lens of the Neural Tangent Kernel (NTK)~\cite{Jacot2018NeuralTK,Du2018GradientDP,AllenZhu2018ACT,Arora2019FineGrainedAO,mohamadi2023fast,liu2024ntk} and mean-field analysis~\cite{Sirignano2018MeanFA,Mei2018AMF,Chizat2018OnTG,nguyen2023rigorous,kim2024transformers}. These approaches provide valuable theoretical guarantees on convergence and training behavior. However, real-world DNNs operate at finite widths, limiting the practical applicability of infinite-width approximations~\cite{Seleznova2020AnalyzingFN,SeleznovaK21,Vyas2023EmpiricalLO}. Moreover, while it is widely believed that stochastic optimization introduces implicit regularization effects~\cite{Dauphin2014IdentifyingAA,Keskar2016OnLT,Chaudhari2017StochasticGD,Lee2019FirstorderMA,ding2024flat,le2024gradient,zou2024flatten} and that over-parameterization facilitates training~\cite{Du2018GradientDP,Yun2018SmallNI,Chizat2018OnLT,Arjevani2022AnnihilationOS}, a comprehensive theoretical explanation remains elusive. A systematic review by \cite{Oneto2023DoWR} concludes that the classical frameworks of optimization and learning theory are insufficient to fully explain the effectiveness of training modern deep models.

Inspired by the duality between strong convexity and Lipschitz smoothness~\cite{kakade2009duality,zhou2018fenchel}, as formalized in Lemma~\ref{lem:convex_smooth_dual}, we extend these classical notions using the Fenchel–Young loss framework~\cite{Blondel2019LearningWF}. 
Specifically, Lemma~\ref{lem:convex_smooth_dual} reveals that both properties characterize the rate of function variation by bounding it via $\|\cdot\|_2^2$. Building upon this insight, we generalize the concepts of convexity and smoothness by replacing $\|\cdot\|_2^2$ with a broader family of norm-based power functions $\|\cdot\|^r+c$, enabling us to define $\mathcal{H}(\phi,c_\phi)$-convexity and $\mathcal{H}(\Phi,c_\Phi)$-smoothness, where $r>1$ and $c\in \mathbb{R}_{\ge 0}$. Using these generalized properties, we analyze the non-convex optimization mechanisms in deep learning under more flexible and realistic assumptions.
We show that the empirical risk minimization problem in deep learning can be reformulated as a composite optimization problem governed by two key components: the local gradient norm and the structural error. Together, these terms define the upper and lower bounds of the empirical risk. Under our framework, effective optimization corresponds to minimizing both components simultaneously.

\paragraph{Contributions}
The main contributions of this work are summarized as follows:

\begin{itemize}
    \item We generalize the classical notions of Lipschitz smoothness and strong convexity using the Fenchel–Young loss framework, introducing $\mathcal{H}(\phi,c_\phi)$-convexity and $\mathcal{H}(\Phi,c_\Phi)$-smoothness, and investigate their mathematical properties.
    
    \item Under these generalized conditions, we impose practical constraints on loss functions and model architectures that align with real-world deep learning applications. We prove that under these constraints, the empirical risk minimization problem reduces to jointly optimizing the local gradient norm and structural error, which serve as tight bounds on the objective.
    
    \item We demonstrate theoretically and empirically that mini-batch SGD effectively reduces the local gradient norm. Furthermore, architectural techniques such as skip connections, random initialization, and over-parameterization contribute significantly to reducing the structural error.
    
    \item We validate our theoretical findings through comprehensive experimental studies across multiple datasets and model architectures, confirming the generality and robustness of our proposed framework.
\end{itemize}

Figure~\ref{fig:paper_outline} presents an overview of the core theorems in this paper and their logical interdependencies. Theorem~\ref{thm:st_H_eigenvalue_bound} establishes that minimizing the empirical risk $\mathcal{L}(\theta,s) - \mathcal{L}_*(s)$ is equivalent to minimizing both the local gradient norm and structural error, where $\mathcal{L}_*(s)$ denotes the global minimum of the empirical risk. Theorem~\ref{thm:general_with_class_sgd_convergence} demonstrates that SGD and its variants successfully reduce the local gradient norm. Proposition~\ref{prop:skip_con} and Theorem~\ref{thm:number_para} further establish how techniques like skip connections, dropout, and over-parameterization help control the structural error under realistic assumptions.

The core theorems of this paper and their logical relationships are illustrated in Figure~\ref{fig:paper_outline}. Theorem~\ref{thm:st_H_eigenvalue_bound} demonstrates that the empirical risk minimization problem $\min_\theta \mathcal{L}(\theta,s)-\mathcal{L}_*(s)$ is equivalent to minimizing both the local gradient norm and the structural error, where $\mathcal{L}_*(s)$ represents the global minimum value of the empirical risk. Theorem~\ref{thm:general_with_class_sgd_convergence} proves that SGD/mini-batch SGD can effectively reduce the local gradient norm. Proposition~\ref{prop:skip_con} and Theorem~\ref{thm:number_para} further establish that techniques such as skip connections, dropout, random parameter initialization, and over-parameterization can all contribute to reducing the structural error. 

\begin{figure}[!hbt]
\centering
\centerline{\includegraphics[width=0.9\columnwidth]{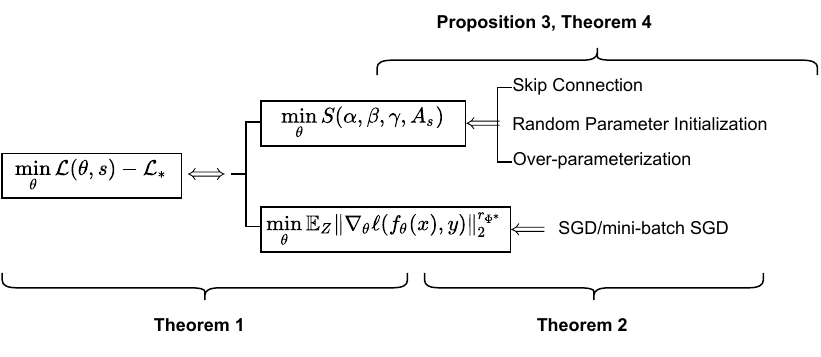}}
\caption{Logical structure of the core theorems in the paper.}
\label{fig:paper_outline}
\end{figure}
\paragraph{Organization}

The remainder of this paper is organized as follows:
 Section~\ref{sec:related_work} reviews related work.
 Section~\ref{sec:prelim} introduces foundational concepts and definitions.
 Sections~\ref{sec:def} and \ref{sec:methods} present the theoretical framework and main results.
 Section~\ref{sec:experiments} details the experimental setup and reports empirical findings.
 Section~\ref{sec:conclusion} summarizes the key contributions and discusses future directions.
 Appendix~\ref{appendix:other_related_work} provides an extended discussion of additional related work.
 Appendix~\ref{appendix:math_background} contains the mathematical background and supporting lemmas.
 Appendix~\ref{appendix:proof} includes detailed proofs of all theorems.
 Appendix~\ref{appendix:notation_table} offers a summary of notation used throughout the paper.

\section{Related Work}\label{sec:related_work}

In this section, we provide a focused review of key research lines that are closely related to the contributions of this work, namely, implicit regularization, over-parameterization, and extensions of strong convexity and Lipschitz smoothness. For complementary discussions on gradient-based optimization algorithms and Kurdyka-Łojasiewicz (KŁ) theory, we refer the interested reader to Appendix~\ref{appendix:other_related_work}.

\subsection{Implicit Regularization in SGD}

Despite its simplicity, SGD demonstrates remarkable effectiveness in solving complex and non-convex optimization problems in deep learning~\cite{Bottou2016OptimizationMF}. This has led to extensive research into the mechanisms that enable SGD to converge efficiently toward favorable solutions, even in highly non-convex landscapes.
A widely accepted hypothesis is that the inherent stochasticity of SGD introduces an implicit regularization effect, guiding the optimization trajectory toward flat minima. This behavior has been shown to influence both model generalization and fitting performance \cite{Dauphin2014IdentifyingAA,Keskar2016OnLT,Lee2019FirstorderMA,zhou2023class,ding2024flat,le2024gradient,zou2024flatten}.
One commonly adopted measure of flatness is the largest eigenvalue of the Hessian matrix of the loss function with respect to the model parameters on the training set, denoted as $\lambda_{\max}$ ~\cite{Jastrzebski2017ThreeFI,Lewkowycz2020TheLL,Dinh2017SharpMC}. However, it has been demonstrated that flatness can be arbitrarily altered through reparametrization of the model without changing its output function~\cite{dinh2017sharp}, raising questions about the validity of this metric as a reliable indicator of generalization. 
As of now, the theoretical mechanisms underlying implicit regularization have not been fully elucidated.

\subsection{Over-parameterization}

Another key factor associated with successful deep learning optimization is over-parameterization, the use of models with significantly more parameters than necessary to fit the data. Studies have shown that over-parameterization facilitates convergence to global optima and enhances generalization in DNNs \cite{Du2018GradientDP,Chizat2018OnLT,Arjevani2022AnnihilationOS}.
This insight has led to the development of the NTK theory~\cite{Jacot2018NeuralTK}, which provides a theoretical lens for analyzing the training dynamics of over-parameterized DNNs under gradient-based optimization. It is well established that infinitely wide neural networks are equivalent to Gaussian processes at initialization~\cite{Neal1996PriorsFI,Williams1996ComputingWI,Winther2000ComputingWF,Neal1995BayesianLF,Lee2017DeepNN}. In this regime, the evolution of trainable parameters during training can be characterized via the NTK, which captures how the model output responds to parameter updates.
Further studies have shown that under appropriate scaling and random initialization, the parameters of two-layer DNNs with infinite width remain close to their initial values throughout training, effectively behaving like linear models with infinite-dimensional features~\cite{Du2018GradientDP,Li2018LearningON,Du2018GradientDF,Arora2019FineGrainedAO}. This allows the training process to be analyzed using linear dynamics, enabling proofs of global convergence at a linear rate under gradient descent.
Subsequent work has extended these results to multi-layer architectures~\cite{AllenZhu2018LearningAG,Zou2018StochasticGD,AllenZhu2018ACT,mohamadi2023fast,Fu2023TheoreticalAO,liu2024ntk}, demonstrating that similar theoretical guarantees can be achieved for deeper models under idealized conditions.

An alternative approach to modeling infinitely wide DNNs is the mean-field perspective~\cite{Sirignano2018MeanFA,Mei2018AMF,Chizat2018OnTG,nguyen2023rigorous,kim2024transformers}. This line of work interprets noisy SGD as a gradient flow in the space of probability distributions over network parameters. Under suitable assumptions, this formulation establishes that noisy SGD converges to the unique global optimum of a convex objective function for two-layer networks with continuous width.

While NTK and mean-field theories provide foundational insights into the training dynamics of infinitely wide DNNs, their applicability to practical finite-width models remains limited. Recent empirical and theoretical findings~\cite{Seleznova2020AnalyzingFN,SeleznovaK21,Vyas2023EmpiricalLO} suggest that NTK-based analyses fail to accurately capture the behavior of finite-width networks trained with gradient-based methods, indicating the need for a fundamentally different conceptual framework.
Thus, despite significant progress, the interplay between over-parameterization and optimization dynamics in deep learning remains an open problem~\cite{Oneto2023DoWR}, motivating further theoretical and empirical investigations into the principles underlying effective deep learning training.

\subsection{Extensions of Strong Convexity and Lipschitz Smoothness}

Strong convexity plays a foundational role in optimization theory and its applications across various domains. It ensures unique solvability, stability, and fast convergence rates for first-order optimization methods. For an overview of its significance in pure and applied sciences, including mathematical programming, economics, game theory, and engineering, see~\cite{Alabdali2019CharacterizationsOU,Lin2003SomeEP,niculescu2006convex,noor2020higher,Mohsen2019StronglyCF,Zhu1996CoCoercivityAI,Noor2021PropertiesOH} and references therein.

Lin et al.~\cite{Lin2003SomeEP} were among the first to introduce the concept of higher-order strong convexity, motivated by applications in mathematical programs with equilibrium constraints. A function $F$ defined on a closed convex set $K$ is said to be higher-order strongly convex if there exists a constant $\mu \geq 0$ such that
\begin{equation}\label{eq:higher_order_convex}
    F(u + t(v - u)) \leq (1 - t)F(u) + tF(v) - \mu \varphi(t) \|v - u\|^p, \quad \forall u, v \in K, \; t \in [0,1], \; p \geq 1,
\end{equation}
where $\varphi(t) = t(1 - t)$. Notably, when $p = 2$, this definition reduces to the classical notion of strong convexity. Higher-order strong convexity has since found applications in diverse areas, including engineering design, economic equilibrium modeling, and multilevel games~\cite{Mohsen2019StronglyCF,Alabdali2019CharacterizationsOU,noor2020higher,Noor2021PropertiesOH}.

In classical optimization theory (e.g.,~\cite{Seidler1984ProblemCA,Nesterov2014IntroductoryLO}), the assumption of Lipschitz smoothness is central. A differentiable function $f$ is said to be Lipschitz smooth if its gradient is Lipschitz continuous, i.e.,
$$
\|\nabla f(x) - \nabla f(y)\| \le L \|x - y\| \quad \text{for all } x, y,
$$
or equivalently, $\|\nabla^2 f(x)\| \le L$ almost everywhere for some constant $L \ge 0$, known as the smoothness constant. This condition enables key theoretical results, such as the convergence guarantees of gradient descent (GD). For instance, it is well established that GD with step size $h = 1/L$ achieves an optimal rate (up to constant factors) for minimizing smooth non-convex functions~\cite{Carmon2017LowerBF}. Important examples of Lipschitz smooth functions include quadratic and logistic loss functions.

However, the Lipschitz smoothness condition is relatively restrictive, it limits functions to being upper-bounded by a quadratic growth model. As a result, many practical machine learning problems involving exponential or higher-order polynomial objectives cannot be adequately captured under this framework. To address this limitation, several generalized notions of smoothness have been proposed in the literature, allowing for broader classes of objective functions.
One notable generalization is the $(L_0, L_1)$-smoothness condition introduced by Zhang et al.~\cite{Zhang2019WhyGC}, which assumes:
$$
\|\nabla^2 f(x)\| \le L_0 + L_1 \|\nabla f(x)\|
$$
for constants $L_0, L_1 \ge 0$. This condition was inspired by empirical observations in large-scale language models and provides a more flexible growth bound that accommodates super-quadratic behaviors.
Subsequent studies have further analyzed this condition in both convex and non-convex settings~\cite{Zhang2020ImprovedAO,Qian2021UnderstandingGC,Zhao2021OnTC,crawshaw2022robustness}, and extended variance reduction techniques to accommodate $(L_0, L_1)$-smooth objectives~\cite{Reisizadeh2023VariancereducedCF}.
Chen et al.~\cite{Chen2023GeneralizedSmoothNO} proposed an alternative formulation termed $\alpha$-symmetric generalized smoothness, which offers similar expressiveness as the $(L_0, L_1)$-smoothness model.
Building upon these developments, Li et al.~\cite{Li2023ConvexAN} introduced the more general $\ell$-smoothness condition, which assumes:
$$
\|\nabla^2 f(x)\| \le \ell(\|\nabla f(x)\|)
$$
for some non-decreasing continuous function $\ell$. This formulation unifies and extends previous definitions, enabling the analysis of a wide range of non-Lipschitz-smooth objectives commonly encountered in deep learning and high-dimensional statistics.

\section{Preliminary}
\label{sec:prelim}
This section introduces the key notations, definitions, and basic setup used throughout the paper. We establish a consistent mathematical language to describe the instance space, hypothesis space, training data, empirical risk, structural matrix, structural error, and gradient norms.

\subsection{Notations}
We adopt the following notation conventions throughout this work:
\begin{itemize}

    \item The domain of a function $\Omega \colon \mathbb{R}^d \rightarrow \bar{\mathbb{R}}$ is denoted by $\mathrm{dom}(\Omega) := \{\mu \in \mathbb{R}^d \mid \Omega(\mu) < \infty \}$, where $\bar{\mathbb{R}} := \mathbb{R} \cup \{\infty\}$. The set of positive real numbers is denoted by $\mathbb{R}_{>0}$, and the set of non-negative real numbers is denoted by $\mathbb{R}_{\geq 0}$.

    \item For a proper convex function $f$, its sub-differential at $x$ is defined as:
    $$
    \partial f(x) = \left\{ s \in \mathbb{R}^n \mid f(y) - f(x) - \langle s, y - x \rangle \geq 0, \forall y \in \mathrm{dom}(f) \right\}.
    $$
    When $f$ is differentiable at $x$, we have $\partial f(x) = \{ \nabla f(x) \}$. While some activation functions (e.g., ReLU) and loss functions may exhibit points of non-differentiability, in practice these points are finite and can be handled using subgradient methods. Moreover, such functions can often be approximated by smooth alternatives, which allows us to proceed under the assumption that all relevant functions are bounded, differentiable, and closed within their open domains without loss of generality.

    \item The length of a vector $v$ is denoted $|v|$. Similarly, we denote the magnitude of a function’s output by $|f(x)|$, and the number of model parameters $\theta$ is also written as $|\theta|$.

    \item The maximum and minimum eigenvalues of a matrix $A$ are denoted by $\lambda_{\max}(A)$ and $\lambda_{\min}(A)$, respectively.

    \item The Legendre–Fenchel conjugate of a convex function $\Omega$ is given by:
    $$
    \Omega^*(\nu) := \sup_{\mu \in \mathrm{dom}(\Omega)} \left\{ \langle \nu, \mu \rangle - \Omega(\mu) \right\},
    $$
    following~\cite{Todd2003ConvexAA}. The gradient of $\Omega^*$ at $\nu$ is denoted by $\mu_\Omega^* = \nabla_\nu \Omega^*(\nu)$. In the special case where $\Omega(\mu) = \frac{1}{2} \|\mu\|_2^2$, it holds that $\mu = \mu_\Omega^*$.

\end{itemize}

\subsection{Basic Setting}
\label{subsec:basic_setting}

\paragraph{Instance Space.}  
The instance space is denoted as $\mathcal{Z} := \mathcal{X} \times \mathcal{Y}$, where $\mathcal{X}$ represents the input feature space and $\mathcal{Y}$ denotes the label set. This defines the domain of all possible inputs $x \in \mathcal{X}$ and their corresponding labels $y \in \mathcal{Y}$.

\paragraph{Hypothesis Space.}  
The hypothesis space is a collection of functions parameterized by $\Theta$, formally represented as $\mathcal{F}_{\Theta} = \{f_\theta : \theta \in \Theta\}$, where $\Theta$ denotes the parameter space. Here, $f_\theta$ (abbreviated as $f$) represents the model characterized by the parameter vector $\theta$. For a given input $x \in \mathcal{X}$, the model's output under the parameterization $\theta$ is denoted by $f_\theta(x)$ (or simply $f(x)$).

\paragraph{Training Dataset.}  
The training dataset is represented as an $n$-tuple $s^n = \{(z^{(i)})\}_{i=1}^n = \{(x^{(i)}, y^{(i)})\}_{i=1}^n$ (abbreviated as $s$), consisting of independent and identically distributed (i.i.d.) samples drawn from an unknown true distribution $\bar{q}$. The empirical probability mass function (PMF) of $Z$, derived from these samples, is denoted by $q^n$ (or simply $q$) and is defined as:
\[
q(z) := \frac{1}{n} \sum_{i=1}^n \mathbf{1}_{\{z\}}(z^{(i)}),
\]
where $\mathbf{1}_{\{z\}}(z^{(i)})$ is the indicator function that equals 1 if $z^{(i)} = z$ and 0 otherwise. By the (weak) law of large numbers, the empirical PMF $q$ converges in probability to the true distribution $\bar{q}$ as $n \to \infty$.

\paragraph{Empirical Risk.}  
The empirical risk, $\mathcal{L}(\theta, s)$, is defined as the average loss over the sample dataset:
\begin{equation}
\begin{aligned}\label{eq:emp_risk}
\mathcal{L}(\theta, s) &= \frac{1}{|s|} \sum_{(x, y) \in s} \ell(f_\theta(x), y) \\
&= \mathbb{E}_{XY \sim q} [\ell(f_\theta(X), Y)] = \mathbb{E}_{Z \sim q} [\mathcal{L}(\theta, Z)],
\end{aligned}
\end{equation}
where $\ell: \mathbb{R}^{m_f + m_y} \to \bar{\mathbb{R}}$ is the loss function that quantifies the discrepancy between the model's prediction $f_\theta(x)$ and the true label $y$. The model $f_\theta(x): \mathbb{R}^{m_x} \to \mathbb{R}^{m_f}$, parameterized by $\theta$, maps input features $x \in \mathbb{R}^{m_x}$ to predictions $f_\theta(x) \in \mathbb{R}^{m_f}$. Here, $m_x$, $m_y$, and $m_f$ denote the dimensions of the input $x$, the label $y$, and the model output $f_\theta(x)$, respectively. The loss associated with a single instance $z = (x, y)$ is denoted by $\mathcal{L}(\theta, z) = \ell(f_\theta(x), y)$.

\paragraph{Structural Matrix and Structural Error.}  
\label{def:structural_error}
We define the \textbf{structural matrix} $A_x$ corresponding to the model with input $x$ as:
\[
A_x := \nabla_\theta f_\theta(x)^\top \nabla_\theta f_\theta(x).
\]
This matrix captures the sensitivity of the model's output $f_\theta(x)$ with respect to changes in the parameter vector $\theta$.

The \textbf{structural error} of the model $f_\theta(x)$, denoted by $S(\alpha, \beta, \gamma, A_x)$, is defined as:
\begin{equation}
S(\alpha, \beta, \gamma, A_x) = \alpha D(A_x) + \beta U(A_x) + \gamma L(A_x),
\end{equation}
where:
\[
U(A_x) = -\log \lambda_{\min}(A_x), \quad L(A_x) = -\log \lambda_{\max}(A_x), \quad D(A_x) = U(A_x) - L(A_x).
\]
Here, $\alpha$, $\beta$, and $\gamma$ are weights associated with $D(A_x)$, $U(A_x)$, and $L(A_x)$, respectively.

To characterize the structural error of the dataset $s$, we introduce the following definitions:
\begin{equation}
\begin{aligned}
&\lambda_{\min}(A_s) = \min_{x \in s} \lambda_{\min}(A_x), \quad \lambda_{\max}(A_s) = \max_{x \in s} \lambda_{\max}(A_x), \\
&S(\alpha, \beta, \gamma, A_s) = \alpha D(A_s) + \beta U(A_s) + \gamma L(A_s),
\end{aligned}
\end{equation}
where:
\[
U(A_s) = -\log \lambda_{\min}(A_s), \quad L(A_s) = -\log \lambda_{\max}(A_s), \quad D(A_s) = U(A_s) - L(A_s).
\]

\paragraph{Gradient Norms.}  
Let $r > 1$ be any positive real number. We define the term:
\[
\mathbb{E}_{XY} \big[ \|\nabla_\theta \ell(f_{\theta}(X), Y)\|_2^{r} \big] \quad \text{(equivalently, } \mathbb{E}_{XY} \big[ \|\nabla_\theta \mathcal{L}(\theta, z)\|_2^{r} \big]\text{)}
\]
as the \textbf{local gradient norm}. This quantity measures the expected magnitude of the gradient of the loss function with respect to the model parameters for individual data samples $(x, y)$. In contrast, we refer to:
\[
\|\nabla_\theta \mathcal{L}(\theta, s)\|_2^{r}
\]
as the \textbf{global gradient norm}, which captures the gradient of the overall empirical risk $\mathcal{L}(\theta, s)$ with respect to the model parameters, averaged over the entire dataset $s$.

\section{Extending Strong Convexity and Smoothness via Norm Power Functions}

In this section, we introduce the concept of a norm power function and use it to generalize the classical notions of strong convexity and Lipschitz smoothness. This generalization gives rise to the concepts of $\mathcal{H}(\phi, c_\phi)$-convexity and $\mathcal{H}(\Phi, c_\Phi)$-smoothness. We further analyze their properties and implications in optimization theory.

\subsection{Norm power function}
\label{subsec:typically_homo_func}
We begin with the formal definition of a norm power function.
\begin{definition}[Norm power function]
A function $\Phi: \mathbb{R}^m \to \mathbb{R}$ is said to be a norm power function of order $r_{\Phi}$, denoted by $\Phi \in \mathcal{H}(r_{\Phi})$, if it can be expressed as:
\begin{equation}
    \Phi(\cdot) = \|\cdot\|^{r_\Phi},
\end{equation}
where $\|\cdot\|$ represents an arbitrary norm, and $r_\Phi > 1$. 

The associated normalized version for $\Phi \in \mathcal{H}(r_{\Phi})$ is given by:
\begin{equation}
    \bar{\Phi}(\mu) = (r_{\Phi} \Phi(\mu))^{1/r_{\Phi}}.
\end{equation}
\end{definition}

Common instances of norm power functions include $\|\mu\|_2^2$, $\|\mu\|_2^{r_\Phi}$, and $(\sum_{i=2}^T \|\mu\|_i)^{r_\Phi}$, where $\|\cdot\|_i$ denotes the $L_p$-norm with $p = i$. It is evident that any norm power function $\Phi(\mu)$ is convex and positively homogeneous of degree $r_\Phi$, satisfying $\Phi(a\mu) = |a|^{r_\Phi}\Phi(\mu)$ for all $a \in \mathbb{R}$.

The following lemma summarizes key properties of norm power functions and their normalized counterparts.
\begin{lemma}[Properties of norm power functions]
\label{lem:prop_thomo_fun}
Let $\Phi:\mathbb{R}^m\to \bar{\mathbb{R}} \in \mathcal{H}(r_{\Phi})$. Then:
\begin{enumerate}
    \item \label{lem:prop_thomo_fun:1} The Legendre-Fenchel conjugate $\Phi^*$ satisfies $\Phi^* \in \mathcal{H}(r_{\Phi^*})$, with $1/r_{\Phi} + 1/r_{\Phi^*} = 1$. \label{prop:n_loss_duality}
    
    \item \label{lem:prop_thomo_fun:2} The relationship between $\Phi$ and its conjugate is given by:
    \begin{equation}
        r_\Phi \Phi(\mu) = r_{\Phi^*} \Phi^*(\mu^*_\Phi) = \mu^\top \mu_\Phi^*,
    \end{equation}
    where $\mu^*_\Phi$ is the dual variable associated with $\mu$. \label{prop:homo_eq}
    
    \item \label{prop:triangle} The normalized norm power function satisfies the triangle inequality:
    \begin{equation}
        \bar{\Phi}(\mu + \nu) \leq \bar{\Phi}(\mu) + \bar{\Phi}(\nu).
    \end{equation}
    
    \item \label{lem:prop_thomo_fun:4} The product of the normalized norm power function and its conjugate satisfies:
    \begin{equation}
        \bar{\Phi}(\mu) \bar{\Phi}^*(\nu) \geq |\langle \mu, \nu \rangle|,
    \end{equation}
    where $\langle \cdot, \cdot \rangle$ denotes the inner product. 
    
    \item \label{lem:prop_thomo_fun:5} The equivalence between the $L_2$ norm and $\bar{\Phi}$ is established as:
    \begin{equation}
    \begin{aligned}
    m^{-1/2} \left(\sum_{i=1}^m \bar{\Phi}^*(\mathbf{e}_i)^2\right)^{-1/2} \|\mu\|_2 
    &\leq \bar{\Phi}(\mu) \\
    &\leq \left(\sum_{i=1}^m \bar{\Phi}(\mathbf{e}_i)^2\right)^{1/2} \|\mu\|_2,
    \end{aligned}
    \end{equation}
    where $\mathbf{e}_i$ denotes a standard basis vector (one-hot vector).
\end{enumerate}
\end{lemma}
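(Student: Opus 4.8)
The plan is to treat the five claims in sequence, since each later property leans on the earlier ones. Throughout write $r = r_\Phi$ and let $r^*$ denote the Hölder conjugate, $1/r + 1/r^* = 1$. The single most useful observation is that the normalized function
$\bar\Phi(\mu) = (r\Phi(\mu))^{1/r} = r^{1/r}\|\mu\|$
is simply a positive scalar multiple of the underlying norm, hence itself convex and absolutely homogeneous of degree one.

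For Claim~\ref{lem:prop_thomo_fun:1} I would compute the conjugate directly from its definition. Decomposing $\mu = t u$ with $t \ge 0$ and $\|u\| = 1$, the inner supremum over the unit sphere produces the dual norm, $\sup_{\|u\|=1}\langle \nu, u\rangle = \|\nu\|_*$, reducing $\Phi^*(\nu) = \sup_\mu\{\langle \nu,\mu\rangle - \|\mu\|^r\}$ to the scalar problem $\sup_{t\ge 0}\{t\|\nu\|_* - t^r\}$. Solving the first-order condition and simplifying via the identities $1/(r-1) = r^*-1$ and $r + r^* = r r^*$ gives the closed form $\Phi^*(\nu) = C\|\nu\|_*^{r^*}$ with an explicit constant $C = C(r) > 0$. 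Since $\|\cdot\|_*$ is a norm and $C>0$, the rescaled map $C^{1/r^*}\|\cdot\|_*$ is again a norm, so $\Phi^* \in \mathcal{H}(r^*)$ with $r_{\Phi^*} = r^*$, which is the asserted duality relation.

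Claims~\ref{prop:homo_eq}--\ref{lem:prop_thomo_fun:4} I would obtain from general convex-analytic facts rather than further explicit computation. Claim~\ref{prop:homo_eq} follows from Euler's identity for positively homogeneous functions, $\langle \mu_\Phi^*, \mu\rangle = r\Phi(\mu)$ (valid for any $\mu_\Phi^* \in \partial\Phi(\mu)$ of a degree-$r$ homogeneous convex function), combined with the Fenchel--Young equality $\Phi(\mu) + \Phi^*(\mu_\Phi^*) = \langle \mu, \mu_\Phi^*\rangle$, which forces $\Phi^*(\mu_\Phi^*) = (r-1)\Phi(\mu)$ and hence $r^*\Phi^*(\mu_\Phi^*) = r\Phi(\mu)$. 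Claim~\ref{prop:triangle} is immediate, since $\bar\Phi$ is a nonnegative multiple of a norm and inherits the triangle inequality. For Claim~\ref{lem:prop_thomo_fun:4} I would apply Fenchel--Young to the rescaled pair, $\langle \mu,\nu\rangle = \langle t\mu, \nu/t\rangle \le t^r\Phi(\mu) + t^{-r^*}\Phi^*(\nu)$, and minimize the right-hand side over $t > 0$; the optimal $t$ collapses the bound to exactly $\bar\Phi(\mu)\,\bar\Phi^*(\nu)$, and replacing $\nu$ by $-\nu$ supplies the absolute value. Equivalently, inserting the explicit constants from Claim~\ref{lem:prop_thomo_fun:1} shows the prefactors $r^{1/r}$ and $r^{-1/r}$ cancel, reducing the statement to the dual-norm inequality $\|\mu\|\,\|\nu\|_* \ge |\langle \mu,\nu\rangle|$.

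Finally, for Claim~\ref{lem:prop_thomo_fun:5} I would expand $\mu = \sum_i \mu_i \mathbf{e}_i$. The upper bound uses Claim~\ref{prop:triangle} and degree-one homogeneity, $\bar\Phi(\mu) \le \sum_i |\mu_i|\,\bar\Phi(\mathbf{e}_i)$, followed by Cauchy--Schwarz to factor out $\|\mu\|_2$. The lower bound uses Claim~\ref{lem:prop_thomo_fun:4} with $\nu = \mathbf{e}_i$, giving $|\mu_i| \le \bar\Phi(\mu)\,\bar\Phi^*(\mathbf{e}_i)$; squaring and summing over $i$ yields $\|\mu\|_2^2 \le \bar\Phi(\mu)^2 \sum_i \bar\Phi^*(\mathbf{e}_i)^2$, i.e.\ a lower bound on $\bar\Phi(\mu)$ that in fact dominates the stated one, the factor $m^{-1/2} \le 1$ only weakening it. I expect the main obstacle to be Claim~\ref{lem:prop_thomo_fun:1}: accommodating an \emph{arbitrary}, possibly non-smooth norm forces the argument through the dual norm and requires carefully justifying the scalar-times-norm representation of the conjugate and the resulting order $r^*$, since every subsequent claim depends on this identification being exact.
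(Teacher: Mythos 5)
Your proposal is correct, and on two of the five claims it takes a genuinely different route from the paper. For Claim 1 the paper never computes $\Phi^*$ in closed form: it conjugates the scaling identity $\Phi(k\mu)=k^{r_\Phi}\Phi(\mu)$ in two ways, equates the results to deduce that $\Phi^*$ is positively homogeneous of degree $r_{\Phi^*}=r_\Phi/(r_\Phi-1)$, and then appeals to nonnegativity and Legendre duality to conclude $\Phi^*\in\mathcal{H}(r_{\Phi^*})$. Your radial decomposition $\mu=tu$ together with the dual norm yields the explicit formula $\Phi^*(\nu)=(r_\Phi-1)\,r_\Phi^{-r_{\Phi^*}}\|\nu\|_*^{r_{\Phi^*}}$, which is more self-contained and in fact closes a step the paper leaves implicit: homogeneity plus convexity alone does not yet identify $(\Phi^*)^{1/r_{\Phi^*}}$ as a norm, whereas with your explicit constant this is immediate, since a positive multiple of the dual norm is a norm. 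Your observation that $\bar\Phi = r_\Phi^{1/r_\Phi}\|\cdot\|$ similarly trivializes Claim 3, which the paper re-derives from midpoint convexity and degree-one homogeneity. Claims 2 and 4 coincide with the paper's proofs (Euler's identity combined with the Fenchel--Young equality; minimizing $t^{r_\Phi}\Phi(\mu)+t^{-r_{\Phi^*}}\Phi^*(\nu)$ over $t>0$ and then substituting $-\nu$ for $\nu$). On the lower bound of Claim 5 you diverge again: choosing $\nu=\mathbf{e}_i$ in Claim 4, squaring and summing over $i$ gives $\bar\Phi(\mu)\ge\big(\sum_{i=1}^m\bar\Phi^*(\mathbf{e}_i)^2\big)^{-1/2}\|\mu\|_2$, which is stronger than the stated inequality by the factor $m^{-1/2}$; the paper instead takes $\nu=\mathrm{sgn}(\mu)$, uses $\langle\mu,\mathrm{sgn}(\mu)\rangle=\|\mu\|_1\ge\|\mu\|_2$, and must then bound $\bar\Phi^*(\mathrm{sgn}(\mu))\le\sqrt{m}\,\big(\sum_{i=1}^m\bar\Phi^*(\mathbf{e}_i)^2\big)^{1/2}$, which is precisely where the $m^{-1/2}$ loss originates. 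Both routes are valid since the lemma asserts the weaker bound; yours buys a sharper constant and a shorter argument, while the paper's reproduces the constant exactly as stated.
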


The proof of this lemma can be found in Appendix~\ref{appendix:proof_prop_thomo_fun}.

\subsection{\texorpdfstring{$\mathcal{H}(\phi,c)$}{}-Convexity and \texorpdfstring{$\mathcal{H}(\Phi,s)$}{}-Smoothness}
\label{subsec:extend_cs}

\subsubsection{Definitions and Properties}
\label{subsubsec:def_s_c}
Building upon the concept of norm power functions, we introduce the notions of $\mathcal{H}(\phi,c)$-convexity and $\mathcal{H}(\Phi,s)$-smoothness. These generalized definitions provide a flexible framework for analyzing the structural properties of functions in optimization and machine learning.

\begin{definition}
\label{def:g_convex_smooth}
We define the following extended notions of convexity and smoothness:
\begin{itemize}
    \item \textbf{$\mathcal{H}(\phi,c_\phi)$-convex function}. A function $G: \mathbb{R}^m \to \mathbb{R}$ is said to be an $\mathcal{H}(\phi,c_\phi)$-convex function if it satisfies:
    \begin{equation}
    \phi(\mu-\nu) - c_\phi \leq d_G(\mu, \nu_G^*), \quad \forall \mu \in \mathbb{R}^m, \, \forall \nu \in \mathbb{R}^m,
    \end{equation}
    where $c_\phi \in \mathbb{R}_{\geq 0}$ is the \textbf{convex relaxation factor}, and $\phi \in \mathcal{H}(r_\phi)$ is a norm power function. Here, $r_\phi$ is referred to as the \textbf{convex order}. If $c_\phi = 0$, the function $G$ is called a \textbf{$\mathcal{H}(\phi)$-convex function}, i.e.,
    \begin{equation}
    \phi(\mu-\nu) \leq d_G(\mu, \nu_G^*), \quad \forall \mu \in \mathbb{R}^m, \, \forall \nu \in \mathbb{R}^m.
    \end{equation}

    \item \textbf{$\mathcal{H}(\Phi,c_\Phi)$-smooth function}. A function $G: \mathbb{R}^m \to \mathbb{R}$ is said to be an $\mathcal{H}(\Phi,c_\Phi)$-smooth function if it satisfies:
    \begin{equation}
    d_G(\mu, \nu_G^*) \leq \Phi(\mu-\nu) + c_\Phi, \quad \forall \mu \in \mathbb{R}^m, \, \forall \nu \in \mathbb{R}^m,
    \end{equation}
    where $c_\Phi \in \mathbb{R}_{\geq 0}$ is the \textbf{smooth relaxation factor}, and $\Phi \in \mathcal{H}(r_\Phi)$ is a norm power function. Here, $r_\Phi$ is referred to as the \textbf{smooth order}. If $c_\Phi = 0$, the function $G$ is called a \textbf{$\mathcal{H}(\Phi)$-smooth function}, i.e.,
    \begin{equation}
    d_G(\mu, \nu_G^*) \leq \Phi(\mu-\nu), \quad \forall \mu \in \mathbb{R}^m, \, \forall \nu \in \mathbb{R}^m.
    \end{equation}
\end{itemize}
\end{definition}

\begin{remark}
While the Bregman divergence is widely used in machine learning and optimization, and it holds that $B_F(\mu, \nu) = d_F(\mu, \nu_\Omega^*)$ when $F$ is convex, the Bregman divergence inherently requires $F$ to be convex. In contrast, the Fenchel-Young loss does not impose this restriction, allowing it to encompass a broader range of scenarios. By adopting the Fenchel-Young loss as a framework to constrain the loss function and model, this paper achieves greater generality and flexibility in its analysis.

The parameters $c_\phi$ and $c_\Phi$ introduce relaxation factors, which allow for deviations from strict convexity or smoothness, making the framework applicable to a wider variety of practical settings.
\end{remark}
We now explore the relationship between classical notions of strong convexity and Lipschitz smoothness and their generalized counterparts, $\mathcal{H}(\phi,c_\phi)$-convexity and $\mathcal{H}(\Phi,c_\Phi)$-smoothness. This connection reveals that the classical definitions are special cases within our broader framework.
A function $G$ that is Lipschitz smooth corresponds to an $\mathcal{H}(\Phi)$-smooth function with $\Phi(\cdot) = \|\cdot\|_2^2$. 
Strongly convex functions and higher-order strongly convex functions~\cite{Lin2003SomeEP} are specific instances of $\mathcal{H}(\phi)$-convex functions. In particular: For strongly convex functions, $\phi(\cdot) = \|\cdot\|_2^2$; For higher-order strongly convex functions, $\phi(\cdot) = \|\cdot\|^p$, where $p > 2$.
For a twice-differentiable function $G$ with a positive definite Hessian matrix, the Taylor mean value theorem implies:
\[
d_G(\mu, \nu_G^*) = \frac{1}{2} (\mu - \nu)^\top \nabla_\xi^2 G(\xi) (\mu - \nu),
\]
where $\xi = t\mu + (1-t)\nu$ for $t \in [0, 1]$. Using Lemma~\ref{lem:eigen_bound}, it follows that:
\[
\frac{\lambda_{\min}}{2} \|\mu - \nu\|_2^2 \leq d_G(\mu, \nu_G^*) \leq \frac{\lambda_{\max}}{2} \|\mu - \nu\|_2^2,
\]
where $\lambda_{\min}$ and $\lambda_{\max}$ are the smallest and largest eigenvalues of the Hessian matrix $\nabla_\xi^2 G(\xi)$, respectively. This result shows that $G$ is both:  $\mathcal{H}(\lambda_{\max}\|\cdot\|_2^2/2)$-smooth, as the upper bound on $d_G(\mu, \nu_G^*)$ depends on $\lambda_{\max}$, and  $\mathcal{H}(\lambda_{\min}\|\cdot\|_2^2/2)$-convex, as the lower bound on $d_G(\mu, \nu_G^*)$ depends on $\lambda_{\min}$.

It is well established that a wide range of commonly used loss functions in statistics and machine learning, including CrossEntropy loss, Mean Squared Error (MSE) loss, and Perceptron loss, can be categorized under the framework of Fenchel-Young losses $d_F(y, f_\theta(x))$ through appropriate design of the function $F$~\cite{Blondel2019LearningWF}. To further investigate the generalized convexity and smoothness of Fenchel-Young losses, we present the following lemma:
\begin{lemma}
    \label{lem:fenchel_young_condition}
Given a function $g: \mathbb{R}^m \to \mathbb{R}$, let $G(\mu) = d_g(\mu, s)$, where $s$ is a constant vector. Then, it holds that:
\begin{equation}
    d_G(\mu,\nu_G^*)=d_g(\mu,\nu^*_g).
\end{equation}
\end{lemma}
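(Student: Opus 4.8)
The plan is to exploit the fact that fixing the second argument of the Fenchel–Young loss turns $G$ into a mere affine perturbation of $g$, and that such perturbations leave the loss invariant once the dual argument is shifted to compensate. Writing out the definition gives $G(\mu) = d_g(\mu, s) = g(\mu) + g^*(s) - \langle \mu, s\rangle$, so that $G$ equals $g$ plus the linear term $-\langle\mu,s\rangle$ and the additive constant $g^*(s)$. First I would record this explicit decomposition, which makes the conjugate computation transparent and, crucially, does not rely on convexity of $g$.

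Next I would compute the Legendre–Fenchel conjugate of $G$ directly from the supremum definition, absorbing the linear and constant pieces:
\begin{equation}
G^*(\eta) = \sup_{\mu}\left\{ \langle \eta, \mu\rangle - g(\mu) + \langle\mu, s\rangle - g^*(s)\right\} = g^*(\eta + s) - g^*(s),
\end{equation}
which is valid for arbitrary $g$ because the supremum definition is insensitive to convexity. Substituting $G$ and $G^*$ into the Fenchel–Young loss and cancelling the constant and linear contributions yields the key identity
\begin{equation}
d_G(\mu, \eta) = g(\mu) + g^*(\eta + s) - \langle\mu, \eta + s\rangle = d_g(\mu, \eta + s),
\end{equation}
holding for every dual point $\eta$. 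This is the heart of the argument: a linear shift of the generating function corresponds exactly to a translation of the dual argument of the loss.

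It then remains to align the specific dual points appearing in the statement. Since $G(\mu) = g(\mu) - \langle\mu, s\rangle + \mathrm{const}$, differentiating gives $\nu_G^* = \nabla G(\nu) = \nabla g(\nu) - s = \nu_g^* - s$. Applying the key identity with $\eta = \nu_G^*$ and inserting this relation produces $d_G(\mu, \nu_G^*) = d_g(\mu, \nu_G^* + s) = d_g(\mu, \nu_g^*)$, which is precisely the claim. Throughout I would invoke only the standing regularity assumptions of the preliminaries (differentiability and $s \in \mathrm{dom}(g^*)$) so that the gradients and conjugate values involved are well defined.

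I expect the main obstacle to be bookkeeping rather than genuine difficulty: the two points demanding care are the conjugate computation $G^*(\eta) = g^*(\eta+s) - g^*(s)$ and the compensating dual-point shift $\nu_G^* = \nu_g^* - s$, which must be matched so the $s$-translations cancel exactly. I would deliberately avoid the tempting shortcut of rewriting each side in the Bregman form $g(\mu) - g(\nu) - \langle\nabla g(\nu), \mu-\nu\rangle$, since, as the preceding remark stresses, the Fenchel–Young loss coincides with the Bregman divergence only when the generating function is convex; the conjugate-based derivation above instead establishes the identity for general, possibly non-convex, $g$, consistent with the intended generality of the framework.
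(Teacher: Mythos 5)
Your proof is correct, but it follows a genuinely different route from the paper's. The paper never computes $G^*$ explicitly: it starts from $d_G(\mu,\nu_G^*) = G(\mu)+G^*(\nu_G^*)-\langle\mu,\nu_G^*\rangle$, inserts $\pm\langle\nu,\nu_G^*\rangle$, and invokes the Fenchel--Young equality $G^*(\nu_G^*)=\langle\nu,\nu_G^*\rangle-G(\nu)$ to pass to the Bregman-type form $G(\mu)-G(\nu)-\langle\nu_G^*,\mu-\nu\rangle$; it then substitutes $G(\cdot)=d_g(\cdot,s)$ and $\nu_G^*=\nu_g^*-s$, cancels the $s$-terms to reach $g(\mu)-g(\nu)-\langle\nu_g^*,\mu-\nu\rangle$, and finally reads this off as $d_g(\mu,\nu_g^*)$ by applying the Fenchel--Young equality for $g$ in reverse. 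You instead conjugate the affinely perturbed function directly, $G^*(\eta)=g^*(\eta+s)-g^*(s)$, obtain the stronger translation identity $d_G(\mu,\eta)=d_g(\mu,\eta+s)$ valid for \emph{every} dual point $\eta$, and only at the end specialize $\eta=\nu_G^*=\nu_g^*-s$. The difference matters for the lemma's advertised generality: the paper's two conversion steps (into and out of Bregman form) each use the equality case of the Fenchel--Young inequality at a gradient point, which for nonconvex $g$ can fail, since it requires the tangent plane at $\nu$ to minorize $g$ globally; the paper's conclusion survives only because the two attainment gaps, $\Delta_G = G^*(\nu_G^*)+G(\nu)-\langle\nu,\nu_G^*\rangle$ and $\Delta_g = g^*(\nu_g^*)+g(\nu)-\langle\nu,\nu_g^*\rangle$, turn out to be equal and cancel --- a fact the paper neither states nor checks. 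Your derivation uses no attainment at all (the translation identity is purely definitional), so it is valid for arbitrary differentiable, possibly nonconvex $g$ with $s\in\mathrm{dom}(g^*)$, which is exactly the generality claimed in the lemma's statement and stressed in the remark preceding it. In short, what the paper's route buys is brevity; what yours buys is that the proof actually lives up to the non-convex setting the lemma is meant to cover.
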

The proof of this lemma is provided in Appendix~\ref{appendix:proof_fenchel_young_condition}. 
The above lemma demonstrates that the generalized smoothness and convexity of the Fenchel-Young loss $d_g(\mu, s)$ can be analyzed through its generating function $g$. Leveraging Lemma~\ref{lem:fenchel_young_condition} and the examples provided in Table~\ref{tab:fy_losses_examples}, we observe the following:

\begin{itemize}
    \item \textbf{MSE loss}: The MSE loss is both $\mathcal{H}(\|\cdot\|_2^2)$-convex and $\mathcal{H}(\|\cdot\|_2^2)$-smooth. This aligns with the classical understanding of the MSE loss as being strongly convex and Lipschitz-smooth.
    \item \textbf{CrossEntropy loss}: Based on Lemma~\ref{lem:pinsker} and Lemma~\ref{lem:kl_upper_bound}, the CrossEntropy loss and its equivalent Kullback-Leibler (KL) divergence are both $\mathcal{H}(\phi)$-convex and $\mathcal{H}(\Phi)$-smooth. Specifically:
    \begin{itemize}
        \item The smoothness parameter $\Phi(\cdot)$ is given by:
        \[
        \Phi(\cdot) = \frac{1}{\inf_{x \in \mathcal{X}} \mathrm{Softmax}(f_\theta(x))} \|\cdot\|_2^2,
        \]
        where $\mathrm{Softmax}(\cdot)$ denotes the Softmax function. 
        \item The convexity parameter $\phi(\cdot)$ is expressed as:
        \[
        \phi(\cdot) = \frac{1}{2\ln 2} \|\cdot\|_1^2.
        \]
    \end{itemize}
\end{itemize}

\subsubsection{Connecting Gradients and Optimal Solutions}
The following theorem establishes upper and lower bounds for the deviation of $G(\mu)$ from its global minimum $G_*$, thereby linking the gradient behavior of the function to its optimal solution.

\begin{lemma}
\label{lem:h_bound}
Let $G_* = \min_{\mu} G(\mu)$, $\mathcal{G}_* = \{\mu \mid G(\mu) = G_*\}$, and $\mu_* \in \mathcal{G}_*$ denote the set of minimizers and a specific minimizer, respectively. The following results hold:

\textbf{1. }\label{lem:h_bound:1} If $G(\mu)$ is $\mathcal{H}(\Phi,c_\Phi)$-smooth, then:
\[
\Phi^*(\mu_G^*) - c_\Phi \leq G(\mu) - G_* \leq \Phi(\mu - \mu_*) + c_\Phi.
\]

\textbf{2. }\label{lem:h_bound:2} If $G(\mu)$ is $\mathcal{H}(\phi,c_\phi)$-convex, then:
\[
\phi(\mu - \mu_*) - c_\phi \leq G(\mu) - G_* \leq \phi^*(\mu_G^*) + c_\phi.
\]

\textbf{3. }\label{lem:h_bound:3} If $G(\mu)$ is both $\mathcal{H}(\Phi,c_\Phi)$-smooth and $\mathcal{H}(\phi,c_\phi)$-convex, then:
\[
\Phi^*(\mu_G^*) - c_\Phi \leq G(\mu) - G_* \leq \phi^*(\mu_G^*) + c_\phi.
\]
\end{lemma}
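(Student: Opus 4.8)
The plan is to derive all three parts from two elementary facts about a minimizer $\mu_*$---namely $\nabla G(\mu_*)=0$ and $G(\mu_*)=G_*$---together with the Legendre--Fenchel (variational) characterization of the norm power functions. Reading $\mu_G^*=\nabla G(\mu)$, and unfolding the Fenchel--Young loss as $d_G(\mu,\nu_G^*)=G(\mu)+G^*(\nabla G(\nu))-\langle \mu,\nabla G(\nu)\rangle$, the substitution $\nu=\mu_*$ collapses the coupling term, and using $G^*(0)=\sup_x(-G(x))=-G_*$ it yields the key identity $d_G(\mu,(\mu_*)_G^*)=G(\mu)-G_*$. This identity requires no convexity and is the bridge between the abstract Fenchel--Young inequalities and the excess $G(\mu)-G_*$ appearing in the statement.

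First I would dispatch the two bounds that are expressed through $\mu-\mu_*$. Specializing the smoothness inequality $d_G(\mu,\nu_G^*)\le \Phi(\mu-\nu)+c_\Phi$ at $\nu=\mu_*$ and invoking the identity above gives at once the upper bound $G(\mu)-G_*\le \Phi(\mu-\mu_*)+c_\Phi$ of Part~1; specializing the convexity inequality $\phi(\mu-\nu)-c_\phi\le d_G(\mu,\nu_G^*)$ at $\nu=\mu_*$ gives the lower bound $\phi(\mu-\mu_*)-c_\phi\le G(\mu)-G_*$ of Part~2. Both are immediate.

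The two gradient-dependent bounds are where conjugation enters. For the lower bound of Part~1, I would rewrite smoothness in the descent form $G(a)\le G(\mu)+\langle \nabla G(\mu),a-\mu\rangle+\Phi(a-\mu)+c_\Phi$ (which follows from the definition via the Fenchel--Young inequality), minimize the right-hand side over $a$, and observe that $\min_\delta\{\langle \nabla G(\mu),\delta\rangle+\Phi(\delta)\}=-\Phi^*(-\nabla G(\mu))=-\Phi^*(\mu_G^*)$, the last equality using that $\Phi$, hence $\Phi^*$, is even by positive homogeneity. Since $G_*\le \min_a G(a)$, this produces $\Phi^*(\mu_G^*)-c_\Phi\le G(\mu)-G_*$. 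For the upper bound of Part~2, I would evaluate the convexity inequality at the pair $(\mu_*,\mu)$, rearrange to $G(\mu)-G_*\le \langle \nabla G(\mu),\mu-\mu_*\rangle-\phi(\mu-\mu_*)+c_\phi$, and bound the first two terms by $\sup_\delta\{\langle \nabla G(\mu),\delta\rangle-\phi(\delta)\}=\phi^*(\mu_G^*)$, giving $G(\mu)-G_*\le \phi^*(\mu_G^*)+c_\phi$. Part~3 then follows by simply stacking the smoothness-derived lower bound with the convexity-derived upper bound.

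The main obstacle is the conjugation step itself: recognizing each optimized ``linear $+$ norm power'' expression as a signed copy of the Legendre--Fenchel conjugate $\Phi^*$ or $\phi^*$, verifying that the extremum is attained so the variational identity is exact rather than one-sided, and discharging the sign through the even symmetry $\Phi^*(-p)=\Phi^*(p)$. I would support these with Lemma~\ref{lem:prop_thomo_fun}, in particular part~\ref{lem:prop_thomo_fun:1}, which guarantees that $\Phi^*$ and $\phi^*$ are again genuine norm power functions of the conjugate orders, so that every term in the statement is well defined.
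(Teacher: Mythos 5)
Your proposal is correct and follows essentially the same route as the paper's proof: both rest on $\nabla G(\mu_*)=0$ together with the identity $d_G(\mu,(\mu_*)_G^*)=G(\mu)-G_*$ (the paper phrases this as the Fenchel--Young expansion around $\mu_*$), specialize the smoothness/convexity inequalities at $\nu=\mu_*$ to get the displacement bounds, and obtain the gradient-dependent bounds by minimizing the linear-plus-norm-power expression over the displacement and discharging the sign via evenness. The only difference is cosmetic: you evaluate $\min_\delta\{\langle\mu_G^*,\delta\rangle+\Phi(\delta)\}=-\Phi^*(-\mu_G^*)$ directly from the definition of the Legendre--Fenchel conjugate, whereas the paper reaches the same value through the stationarity condition $z_\Phi^*=-\mu_G^*$ and Euler's homogeneous-function theorem (Lemma~\ref{lem:prop_thomo_fun}, part~\ref{lem:prop_thomo_fun:2}).
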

The proof of this lemma is provided in Appendix~\ref{appendix:proof_h_bound}.
\begin{remark}
It is worth noting that under the specific choice of the desingularizing function $\psi(s) := \frac{1}{1-\theta}s^{1-\theta}$ and the condition $\nabla f(\bar{\mu}) = 0$, the inequality describing the Kurdyka-Łojasiewicz (KŁ) property in~\eqref{eq:kl_property} takes the form:
\[
|\nabla f(u)| > (f(u) - f(\bar{\mu}))^\theta.
\]
The upper bound derived in Conclusion 2 of Lemma~\ref{lem:h_bound} may correspond to a special case of the KŁ property. However, this connection is not the focus of this paper and will not be further analyzed here.
\end{remark}

Building upon the above lemma, we derive the following corollary:
\begin{corollary}
If $G(\mu)$ is both $\mathcal{H}(\phi)$-convex and $\mathcal{H}(\Phi)$-smooth, where $\Phi(\mu) = \frac{k_\Phi}{2}\|\mu\|_2^2$ and $\phi(\mu) = \frac{k_\phi}{2}\|\mu\|_2^2$, with $k_\Phi, k_\phi \in \mathbb{R}_{>0}$, then it follows that:
\begin{equation}
    \frac{\|\mu_G^*\|_2^2}{2 k_\Phi} \leq G(\mu) - G_* \leq \frac{\|\mu_G^*\|_2^2}{2 k_\phi}.
\end{equation}
\end{corollary}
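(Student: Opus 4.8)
The plan is to obtain this as an immediate consequence of the third conclusion of Lemma~\ref{lem:h_bound}, followed by an elementary conjugate computation. Since $G$ is simultaneously $\mathcal{H}(\phi)$-convex and $\mathcal{H}(\Phi)$-smooth, the relaxation factors vanish, i.e. $c_\phi = c_\Phi = 0$. Specializing Conclusion~3 of Lemma~\ref{lem:h_bound} to this case yields directly
\[
\Phi^*(\mu_G^*) \leq G(\mu) - G_* \leq \phi^*(\mu_G^*),
\]
so the entire problem reduces to evaluating the Legendre--Fenchel conjugates of the two scaled squared Euclidean norms at the dual variable $\mu_G^*$.

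The second step is to compute $\Phi^*$ and $\phi^*$ explicitly. For $\Phi(\mu) = \frac{k_\Phi}{2}\|\mu\|_2^2$, I would evaluate $\Phi^*(\nu) = \sup_{\mu}\{\langle \nu,\mu\rangle - \frac{k_\Phi}{2}\|\mu\|_2^2\}$ via the first-order optimality condition $\nu - k_\Phi \mu = 0$, whose solution $\mu = \nu/k_\Phi$ (the objective being strictly concave in $\mu$) gives the closed form $\Phi^*(\nu) = \frac{1}{2k_\Phi}\|\nu\|_2^2$. The identical computation with $k_\Phi$ replaced by $k_\phi$ gives $\phi^*(\nu) = \frac{1}{2k_\phi}\|\nu\|_2^2$. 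As a consistency check, one may note this agrees with Lemma~\ref{lem:prop_thomo_fun}, Part~\ref{lem:prop_thomo_fun:1}: both $\Phi$ and $\phi$ are norm power functions of order $r = 2$ (a scaled Euclidean norm is still a norm), so their conjugates are again norm power functions of order $r^* = 2$, as confirmed by the closed forms above.

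Substituting $\Phi^*(\mu_G^*) = \frac{\|\mu_G^*\|_2^2}{2k_\Phi}$ and $\phi^*(\mu_G^*) = \frac{\|\mu_G^*\|_2^2}{2k_\phi}$ into the sandwiching inequality yields exactly the claimed bounds, completing the argument. There is no genuine obstacle here: the result is a direct specialization of Lemma~\ref{lem:h_bound}, and the only nontrivial ingredient is the conjugate of a scaled squared norm, which is a standard and routine calculation. The only point meriting a line of care is confirming that the supremum in the definition of $\Phi^*$ is indeed attained (rather than being $+\infty$), which follows because $k_\Phi, k_\phi > 0$ makes the inner objective strictly concave and coercive in $\mu$.
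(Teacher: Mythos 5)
Your proposal is correct and follows exactly the route the paper intends: the paper states this corollary as an immediate consequence of Conclusion~3 of Lemma~\ref{lem:h_bound} (with $c_\phi = c_\Phi = 0$) and leaves the conjugate computation implicit, which you carry out correctly, obtaining $\Phi^*(\nu) = \frac{1}{2k_\Phi}\|\nu\|_2^2$ and $\phi^*(\nu) = \frac{1}{2k_\phi}\|\nu\|_2^2$. Your additional checks (attainment of the supremum, consistency with Lemma~\ref{lem:prop_thomo_fun}) are sound but not needed beyond what the paper's argument already requires.
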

This corollary characterizes $G(\mu)$ as a Lipschitz-smooth ($L = k_\Phi$) and $c$-strongly convex ($c = k_\phi$) function, providing a clear connection between the generalized framework and classical optimization theory. These results highlight the utility of the $\mathcal{H}(\phi,c_\phi)$-convexity and $\mathcal{H}(\Phi,c_\Phi)$-smoothness framework in analyzing optimization landscapes and provide a foundation for understanding the interplay between gradients and optimal solutions in practical machine learning scenarios.

\section{Reformulation of Empirical Risk Minimization}
\label{sec:def}
In this section, we reformulate the empirical risk minimization (ERM) problem by incorporating the concepts of $\mathcal{H}(\phi,c_\phi)$-convexity, $\mathcal{H}(\Phi,s_\Phi)$-smoothness, and $\mathcal{H}(\Omega,c_\Omega)$-smoothness to impose constraints on both the loss function and the model. 
We define the empirical risk minimization problem and the constraints considered in this paper as follows:
\begin{definition}[Empirical Risk Minimization]
    \label{def:d_s_com_opt}
    We define the optimization problem studied in this paper as:
    \begin{equation}\label{eq:obj_st}
        \begin{aligned}
            \min_\theta \mathcal{L}(\theta, s).
        \end{aligned}
    \end{equation}
    The conditions that the loss function and the model must satisfy are as follows:
    \begin{itemize}
        \item \textbf{Condition 1:}~\label{con:1} $\ell(f_\theta(x), y)$ ($L(\theta,z)$) is $\mathcal{H}(\phi,c_\phi)$-convex and $\mathcal{H}(\Phi,c_\Phi)$-smooth with respect to the model's output $f_\theta(x)$.
        \item \textbf{Condition 2:}~\label{con:2} $\ell(f_\theta(x), y)$ is $\mathcal{H}(\Omega,c_\Omega)$-smooth with respect to $\theta$.
    \end{itemize}
\end{definition}

Next, we illustrate that the optimization problems corresponding to training deep models in practical scenarios fall under the category of the optimization problem defined above. 
To illustrate this, we consider Condition 2 as an example. Specifically, we can construct $\Omega(\cdot) = a\|\cdot\|_p^{r_\Omega}+c_\Omega$, where $a \geq 0$, based on the $L_p$-norm. Clearly, for most cases, by appropriately increasing the parameters $a$, $r_\Omega$ and $c_\Omega$, we can ensure that:
\begin{equation}
    \begin{aligned}
        d_{G_z}(\theta_1, (\theta_2)_{G_z}^*) \leq a\|\theta_2 - \theta_1\|_p^{r_\Omega}+c_\Omega,
    \end{aligned}
\end{equation}
where $G_z(\theta)=\ell(f_\theta(x),y)$.
This flexibility allows $\Omega(\cdot)$ to effectively bound the Fenchel-Young loss $d_{G_z}(\theta_1, (\theta_2)_{G_z}^*)$ for a wide range of functions $G_z(\theta)$. Similarly, we can also construct $\Phi(\cdot) = c\|\cdot\|_p^{r_\Phi} + c_\Phi$, $\phi(\cdot)=d\|\cdot\|_p^{r_\phi}-c_\phi$ to satisfy the conditions. As discussed in Section~\ref{subsubsec:def_s_c}, the MSE loss, CrossEntropy loss, and its equivalent KL divergence are all $\mathcal{H}(a\|\cdot\|_2^2)$-smooth, where $a \in \mathbb{R}_{> 0}$. Additionally, the Fenchel-Young loss generated by a convex function $g$ with bounded eigenvalues of its Hessian matrix is also $\mathcal{H}(a\|\cdot\|_2^2)$-smooth. Moreover, some uncommon strongly convex loss functions, such as $\|\cdot\|_2^8$, are both $\mathcal{H}(\phi, c_\phi)$-convex and $\mathcal{H}(\Phi, c_\Phi)$-smooth. However, since these functions are rarely used in practical deep learning applications, they will not be discussed further in this paper. In summary, the loss functions constrained by Condition 1 and Condition 2 encompass a wide range of commonly used loss functions in practice. This ensures that the conclusions derived based on them have broad applicability across various scenarios in statistics and machine learning.

\section{Optimization Mechanism of Deep Learning}
\label{sec:methods}

In this section, we analyze the optimization mechanisms underlying deep learning models. We first establish that the empirical risk minimization problem~\ref{def:d_s_com_opt} is equivalent to jointly optimizing two key components: the local gradient norm and the structural error. Subsequently, we demonstrate how SGD effectively minimizes the local gradient norm, while techniques such as skip connections, over-parameterization, and random parameter initialization play a crucial role in controlling the structural error.

\subsection{Bounds on Empirical Risk}

The following theorem establishes bounds that characterize the global optimum of the empirical risk minimization problem defined in Definition~\ref{def:d_s_com_opt}.
\begin{theorem}
   \label{thm:st_H_eigenvalue_bound}
Let $\mathcal{L}_*(z) = \min_{\theta} \mathcal{L}(\theta, z)$, $\mathcal{L}_*(s) = \min_{\theta} \mathcal{L}(\theta, s)$. Define the constants:
\[
C_\Phi = \frac{1}{r_{\Phi^*}} {m_f}^{-r_{\Phi^*}/2} \Big(\sum_{i=1}^{m_f} \bar{\Phi}(\mathbf{e}_i)^2\Big)^{-r_{\Phi^*}/2}, \quad 
C_\phi = \frac{1}{r_{\Phi^*}} \Big(\sum_{i=1}^{m_f} \bar{\phi}^*(\mathbf{e}_i)^2\Big)^{r_{\Phi^*}/2}.
\]
Individual Sample Bound:
If $\lambda_{\min}(A_x) \neq 0$, we have:
\begin{equation}
\begin{aligned}
C_\Phi \frac{\|\nabla_\theta \ell(f_\theta(x), y)\|_2^{r_{\Phi^*}}}{\lambda_{\max}(A_x)^{r_{\Phi^*}/2}} 
-c_\Phi\leq \mathcal{L}(\theta, z) - \mathcal{L}_*(z) 
\leq C_\phi \frac{\|\nabla_\theta \ell(f_\theta(x), y)\|_2^{r_{\Phi^*}}}{\lambda_{\min}(A_x)^{r_{\Phi^*}/2}}+c_\phi.
\end{aligned}
\end{equation}
Dataset-Wide Bound:
If $\lambda_{\min}(A_s) \neq 0$, we have:
\begin{equation}
\begin{aligned}
\frac{C_\Phi \mathbb{E}_Z \big[\|\nabla_\theta \ell(f_\theta(X), Y)\|_2^{r_{\Phi^*}}\big]}{\lambda_{\max}(A_s)^{r_{\Phi^*}/2}} -c_\Phi
\leq \mathcal{L}(\theta, s) - \mathcal{L}_*(s) 
\leq \frac{C_\phi \mathbb{E}_Z \big[\|\nabla_\theta \ell(f_\theta(X), Y)\|_2^{r_{\Phi^*}}\big]}{\lambda_{\min}(A_s)^{r_{\Phi^*}/2}}+c_\phi.
\end{aligned}
\end{equation}
\end{theorem}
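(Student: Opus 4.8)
The plan is to transfer the output-space bounds of Lemma~\ref{lem:h_bound} into parameter-space bounds by combining the chain rule with the Rayleigh-quotient characterization of $A_x$, and then to convert the conjugate norm-power terms into the Euclidean norm via Lemma~\ref{lem:prop_thomo_fun}. First I would fix a sample $z=(x,y)$ and regard the loss as a function of the model output, $G(u) := \ell(u,y)$ for $u \in \mathbb{R}^{m_f}$. By Condition~1, $G$ is simultaneously $\mathcal{H}(\phi,c_\phi)$-convex and $\mathcal{H}(\Phi,c_\Phi)$-smooth, so Conclusion~3 of Lemma~\ref{lem:h_bound}, applied at $u=f_\theta(x)$, gives
\[
\Phi^*(u_G^*) - c_\Phi \le G(f_\theta(x)) - G_* \le \phi^*(u_G^*) + c_\phi,
\]
where $u_G^* = \nabla_u \ell(f_\theta(x),y)$ and $G_* = \min_u \ell(u,y)$. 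Here I identify $G(f_\theta(x)) = \mathcal{L}(\theta,z)$ and, invoking model expressiveness so that $f_\theta(x)$ realizes the output minimizing $\ell(\cdot,y)$, identify $G_* = \mathcal{L}_*(z)$.

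Next I would pass from the output gradient $u_G^*$ to the parameter gradient. By the chain rule, $\nabla_\theta \ell(f_\theta(x),y) = \nabla_\theta f_\theta(x)\, u_G^*$, hence $\|\nabla_\theta \ell(f_\theta(x),y)\|_2^2 = (u_G^*)^\top A_x\, u_G^*$ with $A_x = \nabla_\theta f_\theta(x)^\top \nabla_\theta f_\theta(x)$. The Rayleigh-quotient bounds (Lemma~\ref{lem:eigen_bound}) then yield, whenever $\lambda_{\min}(A_x)\neq 0$,
\[
\frac{\|\nabla_\theta \ell\|_2^2}{\lambda_{\max}(A_x)} \le \|u_G^*\|_2^2 \le \frac{\|\nabla_\theta \ell\|_2^2}{\lambda_{\min}(A_x)}.
\]
In parallel, using $r_{\Phi^*}\Phi^*(\mu) = \bar{\Phi}^*(\mu)^{r_{\Phi^*}}$ (Conclusion~2 of Lemma~\ref{lem:prop_thomo_fun}) together with the lower half of the $L_2$-equivalence (Conclusion~5, applied to $\Phi^*$, whose conjugate is $\Phi$), I obtain $\Phi^*(u_G^*) \ge C_\Phi \|u_G^*\|_2^{r_{\Phi^*}}$; symmetrically, the upper half applied to $\phi^*$ gives $\phi^*(u_G^*) \le C_\phi \|u_G^*\|_2^{r_{\phi^*}}$. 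Assuming the convex and smooth orders coincide, $r_\phi = r_\Phi$ (so $r_{\phi^*}=r_{\Phi^*}$, as for the MSE and CrossEntropy examples), these constants are exactly the $C_\Phi$ and $C_\phi$ of the statement. Substituting the Rayleigh bounds on $\|u_G^*\|_2^{r_{\Phi^*}}$ into the two displays from Lemma~\ref{lem:h_bound} produces the individual-sample bound.

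For the dataset-wide bound I would take the expectation $\mathbb{E}_Z$ over the per-sample inequalities and use the monotonicity $\lambda_{\max}(A_X) \le \lambda_{\max}(A_s)$ and $\lambda_{\min}(A_X) \ge \lambda_{\min}(A_s)$ from the definitions of $\lambda_{\max}(A_s),\lambda_{\min}(A_s)$ to pull the eigenvalue factors outside the expectation in the favorable directions, leaving $C_\Phi \mathbb{E}_Z[\|\nabla_\theta\ell\|_2^{r_{\Phi^*}}]/\lambda_{\max}(A_s)^{r_{\Phi^*}/2}$ and $C_\phi \mathbb{E}_Z[\|\nabla_\theta\ell\|_2^{r_{\Phi^*}}]/\lambda_{\min}(A_s)^{r_{\Phi^*}/2}$. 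A final step rewrites $\mathbb{E}_Z[\mathcal{L}(\theta,Z) - \mathcal{L}_*(Z)]$ as $\mathcal{L}(\theta,s) - \mathcal{L}_*(s)$.

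I expect the main obstacle to be precisely this last identification, together with its single-sample analogue $G_* = \mathcal{L}_*(z)$: in general one only has $\mathcal{L}_*(s) = \min_\theta \mathbb{E}_Z[\mathcal{L}(\theta,Z)] \ge \mathbb{E}_Z[\min_\theta \mathcal{L}(\theta,Z)] = \mathbb{E}_Z[\mathcal{L}_*(Z)]$, which preserves only the upper bound. Equality, and hence the lower bound, requires a single parameter vector to minimize every sample's loss simultaneously, i.e.\ the interpolation/over-parameterization regime. This is the conceptual heart of the result: the convexity and smoothness of $\ell$ live in the output space, whereas the gradient and the structural matrix live in parameter space, and the two are reconciled only when the model is expressive enough to realize the output-space minimizers. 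By comparison, the chain-rule and Rayleigh-quotient manipulation and the norm-equivalence bookkeeping are routine.
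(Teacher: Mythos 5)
Your proposal follows essentially the same route as the paper's own proof: the chain rule plus the Rayleigh-quotient bounds of Lemma~\ref{lem:eigen_bound} to trade $\|\nabla_f \ell\|_2$ for $\|\nabla_\theta \ell\|_2/\lambda(A_x)^{1/2}$, the norm-power equivalences of Lemma~\ref{lem:prop_thomo_fun} (Conclusions~2 and~5) to produce $C_\Phi$ and $C_\phi$, Conclusion~3 of Lemma~\ref{lem:h_bound} for the sandwich, and finally an expectation with the dataset-wide eigenvalue extrema pulled out. The two caveats you flag are in fact more careful than the paper itself: its proof silently writes $\min_\theta \ell(f_\theta(x),y)$ where Lemma~\ref{lem:h_bound} only yields $\min_u \ell(u,y)$, silently exchanges $\mathbb{E}_Z[\mathcal{L}_*(Z)]$ for $\mathcal{L}_*(s)$ when passing to the dataset bound (both identifications requiring the expressiveness/interpolation regime you describe, without which only the upper bounds survive), and uses the exponent $r_{\Phi^*}$ in the $\phi^*$ bound without stating the assumption $r_\phi = r_\Phi$ that your proof makes explicit.
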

The proof can be found in Appendix~\ref{appendix:proof_st_H_eigenvalue_bound}.
Given that $C_\phi$ and $C_\Phi$ are constants when $\phi$ and $\Phi$ are specified, the above theorem indicates that by optimizing the local gradient norm $\mathbb{E}_Z \left[ \|\nabla_\theta \ell(f_\theta(X), Y)\|^{r_{\Phi^*}}_2 \right]$ and the structural error $S(\alpha, \beta, \gamma, A_s)$, we can effectively reduce $\mathcal{L}(\theta, s) - \mathcal{L}_*(s)$. The experiments in this paper also confirm that the optimization mechanism in deep learning, as described in the aforementioned theorem, is achieved by reducing the local gradient norm and the structural error to control the upper and lower bounds of the empirical risk. 

\subsection{Local Gradient Norm Minimization by SGD}
\label{subsec:convergence}

In this subsection, we aim to prove that SGD, including its batch update variant, can effectively minimize the local gradient norm:
\[
\mathbb{E}_Z \left[ \|\nabla_\theta \ell(f_{\theta}(x), y)\|^{r_{\Phi^*}}_2 \right].
\]
The specific approach is as follows: Since $\ell(f_{\theta}(x), y)$ is $\mathcal{H}(\Omega,c_\Omega)$-smooth with respect to $\theta$, we first demonstrate that the SGD algorithm can ensure the convergence of $\mathbb{E}_Z \left[ \|\nabla_\theta \ell(f_{\theta}(x), y)\|^{r_{\Omega^*}}_2 \right]$. 
Because both $\mathbb{E}_Z \left[ \|\nabla_\theta \ell(f_{\theta}(x), y)\|^{r_{\Phi^*}}_2 \right]$ and $\mathbb{E}_Z \left[ \|\nabla_\theta \ell(f_{\theta}(x), y)\|^{r_{\Omega^*}}_2 \right]$ measure the distance of the gradient $\nabla_\theta \ell(f_{\theta}(x), y)$ from the origin, their optimization effects are equivalent. Therefore, the SGD algorithm will also reduce the local gradient norm $\mathbb{E}_Z \left[ \|\nabla_\theta \ell(f_{\theta}(x), y)\|^{r_{\Phi^*}}_2 \right]$.

The SGD method with a constant stepsize $\alpha$ is formalized by the following update rule~\label{def:basic-sgd}:
\begin{equation}
    \theta_{k+1} = \theta_k - \alpha \nabla_\theta \mathcal{L}(\theta_k, s_k).
\end{equation}
Here, $\mathcal{L}(\theta_k, s_k) = \frac{1}{|s_k|} \sum_{z \in s_k} \ell(f_{\theta_k}(x), y)$; $\theta_k$ denotes the parameter value at iteration $k$; $s_k$ is a subset (mini-batch) sampled from the dataset $s$; and $|s_k|$ corresponds to the batch size.
We use $s \setminus s_k$ to represent the set of all elements in $s$ that are not in $s_k$. Consequently, we have $|s \setminus s_k| = |s| - |s_k|$. To describe the impact of a single batch update on samples outside the batch, we define the gradient correlation factor.
\begin{definition}[Gradient Correlation Factor]
\label{def:gradient_coor_factor}
After each update based on $s_k$, the objective function $\mathcal{L}(\theta,s)$ satisfies the following inequality:
$$
L(\theta_{k+1}, s \setminus s_k) - L(\theta_k, s \setminus s_k) \leq M,
$$
where $M \in \mathbb{R}_{\geq 0}$ quantifies the impact of a single batch update on the samples outside the batch. We refer to $M$ as the gradient correlation factor. 
\end{definition}
Generally, the smaller the batch size $|s_k|$, the larger the total number of samples $|s|$, and the greater the number of model parameters, especially non-shared parameters (such as those parameters in embedding modules), the smaller the value of $M$. 

The following theorem delineates the convergence behavior of SGD within the $\mathcal{H}(\Omega,c_\Omega)$-smooth setting. It is important to note that, in the framework of convex optimization, the convergence analysis of SGD typically focuses on $\|\nabla_\theta \mathcal{L}(\theta,s)\|_2^2$, measuring the norm of the gradient of the loss function with respect to the parameters. However, in this context, our analysis targets a different aspect: the convergence properties of $\mathbb{E} \|\nabla_\theta \mathcal{L}(\theta, s_k)\|_2^{r_{\Phi^*}}$, which involves examining the expected value of the gradient norm raised to the power of $r_{\Phi^*}$ for mini-batch updates.
\begin{theorem}\label{thm:general_with_class_sgd_convergence}
Let $n=|s|$, $m=|s_k|$, and $\|x\|_\Omega=r_{\Omega^*}^{-1/r_{\Omega^*}}\frac{(\|x\|_2^2)}{\bar{\Omega}(x)}$.  
Given that $\mathcal{L}(\theta,z) = \ell(f_{\theta}(x),y)$ is $\mathcal{H}(\Omega,c_\Omega)$-smooth with respect to $\theta$, the application of SGD as defined in~\ref{def:basic-sgd} leads to the following conclusions:
\begin{enumerate}
    \item The optimal learning rate is achieved when 
    $$
    \alpha = \left(\frac{\|\nabla_\theta \mathcal{L}(\theta_k,s_k)\|_2^2}{r_\Omega\Omega(\nabla_\theta \mathcal{L}(\theta_k,s_k))}\right)^{r_{\Omega^*}-1},
    $$
    and it follows that
    \begin{equation}
       \mathcal{L}(\theta_k, s_k) - \mathcal{L}(\theta_{k+1}, s_k)  \geq \|\nabla_\theta \mathcal{L}(\theta_k, s_k)\|^{r_{\Omega_*}}_\Omega-c_\Omega.
    \end{equation}

    \item The number of steps required for SGD to achieve 
    $$
    \mathbb{E} \|\nabla_\theta \mathcal{L}(\theta, s_k)\|_2^{r_{\Omega^*}} \leq \varepsilon^{r_{\Omega^*}} + \frac{n-m}{m}M+c_\Omega
    $$
    is $\mathcal{O}(\varepsilon^{-r_{\Omega^*}})$.

    \item There exists a constant $\gamma \in \mathbb{R}_{>0}$ such that 
$$
\frac{1}{\gamma}\|\mu\|_2^{r_{\Phi^*}} \leq \|\mu\|_\Omega^{r_{\Phi^*}}, \quad \forall x \in \mathbb{R}^{m_f}.
$$

When $m = 1$ (i.e., a single sample per batch), the condition
$$
\frac{1}{\gamma}\mathbb{E}_Z \|\nabla_\theta \ell(f_{\theta}(x),y)\|_2^{r_{\Omega^*}} \leq \mathbb{E}_Z \|\nabla_\theta \ell(f_{\theta}(x),y)\|_\Omega^{r_{\Omega^*}} \leq \varepsilon^{r_{\Omega^*}} + (n-1)M+c_\Omega
$$
is achieved in $\mathcal{O}(1/\varepsilon^{r_{\Omega^*}})$ steps. 

\end{enumerate}
\end{theorem}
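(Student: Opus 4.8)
The plan is to prove the three parts in order, with Part~1 furnishing the per-batch descent inequality that powers the telescoping arguments in Parts~2 and~3. For \textbf{Part 1}, the starting point is the descent form of $\mathcal{H}(\Omega,c_\Omega)$-smoothness (Condition~2, Definition~\ref{def:g_convex_smooth}) applied to $G=\mathcal{L}(\cdot,s_k)$ with $\mu=\theta_{k+1}$, $\nu=\theta_k$: the bound $d_G(\theta_{k+1},(\theta_k)_G^{*})\le\Omega(\theta_{k+1}-\theta_k)+c_\Omega$ unpacks to
\[
\mathcal{L}(\theta_{k+1},s_k)-\mathcal{L}(\theta_k,s_k)\le\langle\nabla_\theta\mathcal{L}(\theta_k,s_k),\theta_{k+1}-\theta_k\rangle+\Omega(\theta_{k+1}-\theta_k)+c_\Omega,
\]
which reduces to the classical descent lemma when $\Omega=\tfrac{L}{2}\|\cdot\|_2^2$. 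Substituting the update $\theta_{k+1}-\theta_k=-\alpha\nabla_\theta\mathcal{L}(\theta_k,s_k)$ and invoking the degree-$r_\Omega$ positive homogeneity of $\Omega$ gives $\mathcal{L}(\theta_k,s_k)-\mathcal{L}(\theta_{k+1},s_k)\ge\alpha\|\nabla_\theta\mathcal{L}(\theta_k,s_k)\|_2^2-\alpha^{r_\Omega}\Omega(\nabla_\theta\mathcal{L}(\theta_k,s_k))-c_\Omega$. Maximizing the right-hand side over $\alpha>0$, the first-order condition yields the stated optimal step after the conjugacy identity $1/(r_\Omega-1)=r_{\Omega^*}-1$; back-substitution and simplification through $\bar\Omega(\mu)=(r_\Omega\Omega(\mu))^{1/r_\Omega}$ and the definition of $\|\cdot\|_\Omega$ collapse the optimum to $\|\nabla_\theta\mathcal{L}(\theta_k,s_k)\|_\Omega^{r_{\Omega^*}}$, a careful but routine power-counting exercise.

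For \textbf{Part 2}, the descent from Part~1 controls only the mini-batch loss, so the key step is to pass to the full objective. I would decompose $\mathcal{L}(\theta,s)=\tfrac{m}{n}\mathcal{L}(\theta,s_k)+\tfrac{n-m}{n}\mathcal{L}(\theta,s\setminus s_k)$, bound the in-batch change by Part~1 and the out-of-batch change by the gradient correlation factor $M$ (Definition~\ref{def:gradient_coor_factor}), obtaining the per-step inequality
\[
\|\nabla_\theta\mathcal{L}(\theta_k,s_k)\|_\Omega^{r_{\Omega^*}}\le\tfrac{n}{m}\big(\mathcal{L}(\theta_k,s)-\mathcal{L}(\theta_{k+1},s)\big)+c_\Omega+\tfrac{n-m}{m}M.
\]
Summing over $k=0,\dots,K-1$, telescoping, and using $\mathcal{L}(\theta_K,s)\ge\mathcal{L}_*(s)$ collapses the first term to $\tfrac{n}{mK}(\mathcal{L}(\theta_0,s)-\mathcal{L}_*(s))=:C/K$. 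Averaging over iterates (equivalently, taking expectation over a uniformly sampled iterate and over the batch draws) gives $\mathbb{E}\|\nabla_\theta\mathcal{L}(\theta,s_k)\|_\Omega^{r_{\Omega^*}}\le C/K+c_\Omega+\tfrac{n-m}{m}M$, and requiring $C/K\le\varepsilon^{r_{\Omega^*}}$ forces $K=\mathcal{O}(\varepsilon^{-r_{\Omega^*}})$; the final conversion from $\|\cdot\|_\Omega$ to $\|\cdot\|_2$ uses the norm equivalence established next.

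For \textbf{Part 3}, the existence of $\gamma$ is immediate from Lemma~\ref{lem:prop_thomo_fun}, which gives a two-sided equivalence between $\bar\Omega$ and $\|\cdot\|_2$; substituting into $\|x\|_\Omega=r_{\Omega^*}^{-1/r_{\Omega^*}}\|x\|_2^2/\bar\Omega(x)$ shows that $\|\cdot\|_\Omega$ is equivalent to $\|\cdot\|_2$, which yields the left inequality $\tfrac{1}{\gamma}\|\mu\|_2^{r_{\Omega^*}}\le\|\mu\|_\Omega^{r_{\Omega^*}}$. The right inequality of the $m=1$ claim is Part~2 specialized to $m=1$, where $\tfrac{n-m}{m}=n-1$, and the iteration count $\mathcal{O}(1/\varepsilon^{r_{\Omega^*}})$ is inherited directly.

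\textbf{Main obstacle.} I expect the crux to be Part~2: Part~1 is a deterministic, single-batch descent, so the real work is the in-batch/out-of-batch decomposition and correctly isolating the irreducible noise floor $\tfrac{n-m}{m}M+c_\Omega$, which shows that SGD drives the gradient norm to this floor rather than to zero. Care is also needed in converting the deterministic telescoped average into the stated expectation (clarifying that the expectation runs over both the random mini-batches and a randomly selected iterate), and in tracking the exponent $r_{\Omega^*}$ through the norm-equivalence step so that it matches the $r_{\Phi^*}$ appearing in Theorem~\ref{thm:st_H_eigenvalue_bound}, thereby justifying that reducing $\mathbb{E}_Z\|\nabla_\theta\ell\|_\Omega^{r_{\Omega^*}}$ simultaneously reduces the local gradient norm of interest.
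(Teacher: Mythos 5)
Your proposal follows essentially the same route as the paper's proof: the same $\mathcal{H}(\Omega,c_\Omega)$-smoothness descent inequality with the SGD update and optimization over $\alpha$ (the paper's Lemma~\ref{lem:min_value}) for Part~1, the same in-batch/out-of-batch decomposition via the gradient correlation factor $M$ followed by telescoping and averaging over iterates for Part~2, and the same norm-equivalence argument from Lemma~\ref{lem:prop_thomo_fun} specialized to $m=1$ for Part~3. The proposal is correct, and your closing observation—that the bound is natively derived in $\|\cdot\|_\Omega$ and only converted to $\|\cdot\|_2$ through the equivalence constant $\gamma$—accurately identifies the same (slightly elided) step present in the paper's own argument.
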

The proof of this theorem is detailed in Appendix \ref{appendix:proof_general_with_class_sgd_convergence}.
The aforementioned theorem demonstrates that when the batch size is equal to 1, the SGD algorithm can effectively control $ \mathbb{E}_Z \|\nabla_\theta \ell(f_{\theta}(x),y)\|_2^{r_{\Omega^*}} $, which is equivalent to controlling the local gradient norm $\mathbb{E}_Z \left[ \|\nabla_\theta \ell(f_{\theta}(x), y)\|^{r_{\Omega^*}}_2 \right]$. This result highlights that the use of SGD is effective for minimizing the local gradient norm. Furthermore, reducing the batch size and the gradient correlation factor $M$ can facilitate the convergence of the local gradient norm.

\subsection{Structural Error Minimization}

SGD algorithms ensure the minimization of the local gradient norm. In this subsection, we analyze another critical factor in the non-convex optimization mechanism of deep learning: structural error. We demonstrate that the model's architecture, the number of parameters, and the independence among parameter gradients can all be leveraged to reduce structural error.

\subsubsection{Skip Connection}

During the training of neural networks, the magnitudes of the elements in the gradient vector $\nabla_\theta f_{\theta}(x)$ diminish alongside the reduction in backpropagated errors. According to Theorem~\ref{thm:st_H_eigenvalue_bound}, under conditions where the local gradient norm is held constant, the loss is controlled by the eigenvalues of the structural matrix. Specifically, larger eigenvalues correlate with smaller loss.
Consequently, one direct approach to mitigating structural errors lies in augmenting the eigenvalues of the structural matrix through modifications in network architecture.
From this perspective, Residual blocks~\cite{He2015DeepRL} represent a classic and highly effective architecture. By introducing skip connections, they are able to effectively enhance the eigenvalues of the structural matrix.

Here, we present a concise mathematical explanation. Following the random initialization of model parameters or upon reaching a stable convergence state, the gradients of the model $f_{\theta}(x)$ with respect to its parameters often tend toward zero. Suppose that $f_\theta$ is composed of $k$ sequential blocks, where the output of each block is represented by a function $h^i$, and the final output satisfies $h^{(k)} = f_{\theta}(x)$. For these $k$ blocks, we introduce skip connections, thereby constructing a new model $g_\theta(x)$.  
Using the chain rule, we have:
\begin{equation}
\begin{aligned}
    \nabla_{\theta^{(j)}} f_{\theta}(x) &= \nabla_{h^{(k-1)}} f_{\theta}(x) \nabla_{h^{(k-2)}} h^{(k-1)} \cdots \nabla_{\theta^{(j)}} h^{(j)} \\
    &= \left( \prod_{i=j}^{k-1} \nabla_{h^{(i)}} h^{(i+1)} \right) \nabla_{\theta^{(j)}} h^{(j)}, \\
    \nabla_{\theta^{(j)}} g_\theta(x) &= \left( \nabla_{h^{(k-1)}} f_{\theta}(x) + I \right) \left( \nabla_{h^{(k-2)}} h^{(k-1)} + I \right) \cdots \nabla_{\theta^j} h^{(j)} \\
    &= \left( \prod_{i=j}^{k-1} \left( \nabla_{h^{(i)}} h^{(i+1)} + I \right) \right) \nabla_{\theta^{(j)}} h^{(j)},
\end{aligned}
\end{equation}
where $\theta^{(j)}$ represents the parameters corresponding to the $j$-th block, and $I$ is the identity matrix. It is evident that when the elements of $\nabla_{h^{(i)}} h^{(i+1)}$ are close to zero, the elements of $\nabla_{\theta^j} f_{\theta}(x)$ also approach zero, causing the eigenvalues of the structural matrix to become close to zero as well. However, with the introduction of skip connections, $\nabla_{\theta^{(j)}} g_\theta(x) \approx \nabla_{\theta^{(j)}} h^{(j)}$, which prevents the decay caused by the multiplication of gradients in the chain rule. skip connections prevent the eigenvalues of the structural matrix from becoming excessively small due to SGD, which would otherwise lead to significant structural errors. Therefore, we can derive the following proposition:

\begin{proposition}
    \label{prop:skip_con}
    Skip connections are beneficial for reducing structural error.
\end{proposition}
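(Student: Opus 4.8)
The plan is to make precise the informal claim that skip connections keep the eigenvalues of the structural matrix from collapsing. First I would pin down what ``reducing structural error'' means: since $U(A_x)=-\log\lambda_{\min}(A_x)$ and $L(A_x)=-\log\lambda_{\max}(A_x)$ are decreasing in the corresponding eigenvalues, and $D(A_x)=\log(\lambda_{\max}(A_x)/\lambda_{\min}(A_x))$ is the log condition number, it suffices to exhibit, in the relevant gradient regime, that the residual model $g_\theta$ has $\lambda_{\min}(A_x^g)$ bounded away from zero whereas the plain model $f_\theta$ has $\lambda_{\min}(A_x^f)\to 0$. For nonnegative weights $\alpha,\beta,\gamma$ this lift of $\lambda_{\min}$ reduces $U$ and the divergent part of $D$, yielding $S(\alpha,\beta,\gamma,A_x^g)\le S(\alpha,\beta,\gamma,A_x^f)$. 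Because $A_x=\nabla_\theta f_\theta(x)^\top\nabla_\theta f_\theta(x)$, its eigenvalues are the squared singular values of the parameter-Jacobian, so the entire argument reduces to controlling the singular values of the stacked Jacobian.

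I would then work in the regime emphasized before the statement, where the inter-block Jacobians $E_i:=\nabla_{h^{(i)}}h^{(i+1)}$ are small, say $\|E_i\|\le\varepsilon$ with $\varepsilon<1$ (the situation after random initialization or near convergence, when back-propagated errors are small). Using the chain-rule identities already derived, the block-$j$ contribution is premultiplied by $P_j^f=\prod_{i=j}^{k-1}E_i$ without skip connections and by $P_j^g=\prod_{i=j}^{k-1}(I+E_i)$ with them. The key technical lemma I would prove is the pair of singular-value estimates $\|P_j^f\|\le\varepsilon^{\,k-j}$ and $(1-\varepsilon)^{k-j}\le\sigma_{\min}(P_j^g)\le\sigma_{\max}(P_j^g)\le(1+\varepsilon)^{k-j}$, the latter following from $\sigma_{\min}(I+E_i)\ge 1-\|E_i\|$ together with submultiplicativity. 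Thus for the early blocks (small $j$) the plain factor decays geometrically to zero, while the residual factor stays uniformly bounded above and below.

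To reach the structural matrix I would stack $\nabla_{\theta^{(j)}}f=P_j\,\nabla_{\theta^{(j)}}h^{(j)}$ over $j$ and use singular-value interlacing (or a Weyl-type bound on the column-concatenated Jacobian). For $f_\theta$, the collapse $\|P_j^f\|\le\varepsilon^{k-j}$ of the early blocks forces the smallest singular value of the stacked Jacobian to zero, so $\lambda_{\min}(A_x^f)\to 0$ and $U(A_x^f),D(A_x^f)\to\infty$, even though $\lambda_{\max}$ (dominated by the last block, which does not decay) stays bounded away from zero. For $g_\theta$, the uniform lower bound $\sigma_{\min}(P_j^g)\ge(1-\varepsilon)^{k-j}$ keeps every block's contribution of the order of its intrinsic gradient $\nabla_{\theta^{(j)}}h^{(j)}$, so $\lambda_{\min}(A_x^g)$ is bounded away from zero and both $U$ and $D$ remain finite and smaller, which combined with nonnegativity of the weights gives the claimed $S^g\le S^f$.

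The main obstacle I anticipate is turning the heuristic $\nabla_{\theta^{(j)}}g_\theta(x)\approx\nabla_{\theta^{(j)}}h^{(j)}$ into rigorous two-sided control of the full stacked Jacobian: the per-block estimates must be aggregated across blocks of differing dimensions, and one must preclude destructive cancellation between blocks when forming $A_x$. This forces a nondegeneracy assumption that at least one intrinsic block gradient $\nabla_{\theta^{(j)}}h^{(j)}$ has smallest singular value bounded below, since skip connections cannot repair a block that is itself rank-deficient; and it requires tracking the condition-number term through $\sigma_{\max}(P_j^g)/\sigma_{\min}(P_j^g)\le\big((1+\varepsilon)/(1-\varepsilon)\big)^{k-j}$ so that $D$ is controlled rather than merely $U$. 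Everything else reduces to the elementary operator-norm facts $\|AB\|\le\|A\|\,\|B\|$ and $1-\|E\|\le\sigma_{\min}(I+E)$.
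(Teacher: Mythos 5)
Your proposal follows the same route as the paper: the paper's entire ``proof'' of this proposition is the informal chain-rule discussion preceding it (the factorizations $\nabla_{\theta^{(j)}}f=\bigl(\prod_{i=j}^{k-1}\nabla_{h^{(i)}}h^{(i+1)}\bigr)\nabla_{\theta^{(j)}}h^{(j)}$ versus $\prod_{i=j}^{k-1}\bigl(\nabla_{h^{(i)}}h^{(i+1)}+I\bigr)\nabla_{\theta^{(j)}}h^{(j)}$ in the small-Jacobian regime), and your $\varepsilon$-bounds $\|P_j^f\|\le\varepsilon^{k-j}$ and $\sigma_{\min}(P_j^g)\ge(1-\varepsilon)^{k-j}$ are a faithful quantification of exactly that heuristic. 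However, your attempt to push this to a rigorous conclusion contains a genuine error at the decisive step. Writing the Jacobian as a block concatenation over parameter groups, the structural matrix is a sum of positive semidefinite per-block contributions, $A_x=\sum_j J_jJ_j^\top$ with $J_j=P_j\,\nabla_{\theta^{(j)}}h^{(j)}$. By Weyl's inequality, $\lambda_{\min}(A+B)\ge\lambda_{\min}(A)+\lambda_{\min}(B)$, so adding parameter blocks can only \emph{increase} every eigenvalue of $A_x$; in particular $\lambda_{\min}(A_x^f)\ge\lambda_{\min}\bigl(J_kJ_k^\top\bigr)$, where the final block carries no decay factor ($P_k=I$). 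Since you yourself grant that the last block ``does not decay'' (it is what keeps $\lambda_{\max}$ bounded below), and your nondegeneracy assumption naturally applies to it, the plain model's $\lambda_{\min}(A_x^f)$ is anchored away from zero by that block alone. The collapse of the early blocks therefore cannot ``force the smallest singular value of the stacked Jacobian to zero''; interlacing runs in the wrong direction for your claim, and the conclusion $U(A_x^f),D(A_x^f)\to\infty$ does not follow.

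What the geometric decay actually shows is one-sided: in the plain network the early blocks' PSD contributions to $A_x$ vanish, so depth adds nothing to the eigenvalues, whereas with skip connections every block contributes a term of the order of its intrinsic Jacobian, so $\lambda_{\min}(A_x^g)\ge\lambda_{\min}(A_x^f)$ up to the $(1\pm\varepsilon)^{k-j}$ factors, hence $U(A_x^g)\le U(A_x^f)$. That comparative statement (a comparison of lower bounds, not a divergence for the plain model) is the most that can be salvaged, and it is essentially what the paper's qualitative proposition asserts. Two smaller points: your claim that both $U$ and $D$ come out \emph{smaller} for the residual model contradicts the paper's own experimental observation that $D(A_x)$ \emph{increases} under skip connections (the paper argues the total error still drops because $U\gg L$); and your ``uniform'' lower bound $(1-\varepsilon)^{k-j}$ still tends to zero as depth grows for fixed $\varepsilon$, so it does not keep $\lambda_{\min}(A_x^g)$ bounded away from zero uniformly in $k$ unless $\varepsilon$ shrinks with depth.
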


\subsubsection{Parameter Number and Independence}
The number of parameters in a model, along with the independence of these parameters, can also be leveraged to reduce structural error. Our analysis is grounded in the following condition:
\begin{definition}[Gradient independence condition (GIC)]
\label{def:grad_indep_con}
Each column of $\nabla_\theta f(x)$ is approximated as uniformly sampled from a ball $\mathcal{B}^{|\theta|}_\epsilon: \{x \in \mathbb{R}^{|\theta|}, \|x\|_2 \le \epsilon\}$.
\end{definition}
This condition essentially posits that the derivatives of the model's output with respect to its parameters are finite and approximately independent. 
In practical scenarios, gradients computed during the training of neural networks are indeed finite due to mechanisms such as gradient clipping and the inherent properties of activation functions used (e.g., ReLU, sigmoid). Furthermore, under certain conditions, such as when using techniques like dropout, it is reasonable to assume that the dependence among parameter gradients is weakened. 
In the experimental section~\ref{sec:experiments}, we observe that under conditions of large-scale parameter counts, the GIC~\ref{def:grad_indep_con} is generally satisfied immediately after random initialization of the parameters. 
However, as the training process begins, backpropagated errors are relatively large, leading to significant changes in the parameters. During this phase, our experimental results suggest that the this condition  may not hold. 
As training continues and the model enters a more stable phase where the magnitude of backpropagated errors decreases, experimental results imply that the condition becomes approximately valid again.
Therefore, this condition serves as a simplification of the model's characteristics, which is reasonable and largely consistent with practical scenarios under certain circumstances. 

Below, we will derive a theorem based on this condition to describe how the number of parameters and inter-parameter correlations affect the structural error. 
\begin{theorem}
\label{thm:number_para}
If the model satisfies the GIC~\ref{def:grad_indep_con} and $|f_\theta(x)|\le |\theta|-1$, then with probability at least $1-O(1 / |f_\theta(x)|)$ the following holds:
\begin{equation}
    \begin{aligned}
    L(A_x)\leq U(A_x)&\le -\log (1-\frac{2 \log |f_\theta(x)|}{|\theta|})^2-\log \epsilon^2,\\
      D(A_x)=U(A_x&)-L(A_x)\le \log (Z(|\theta|,|f_\theta(x)|)+1),
    \end{aligned}
\end{equation}
where $Z(|\theta|,|f_\theta(x)|)=\frac{2|y|\sqrt{6 \log |f_\theta(x)|}}{\sqrt{|\theta|-1}} (1-\frac{2 \log |f_\theta(x)|}{|\theta|})^2$, is a decreasing function of $|\theta|$ and an increasing function of $|f_\theta(x)|$.
\end{theorem}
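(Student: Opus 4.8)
The plan is to read the structural matrix $A_x = \nabla_\theta f_\theta(x)^\top \nabla_\theta f_\theta(x)$ as the Gram matrix of the columns $v_1,\dots,v_k$ of $\nabla_\theta f_\theta(x)$, where I write $k := |f_\theta(x)|$ and $d := |\theta|$, so that under the GIic~\ref{def:grad_indep_con} the $v_i$ are independent uniform draws from $\mathcal{B}^d_\epsilon$ and $A_x$ is $k\times k$ with $k\le d-1$. The inequality $L(A_x)\le U(A_x)$ is immediate from $\lambda_{\min}(A_x)\le\lambda_{\max}(A_x)$, so the whole statement reduces to two high-probability estimates: a lower bound on $\lambda_{\min}(A_x)$ (which controls $U$) and a bound on the ratio $\lambda_{\max}(A_x)/\lambda_{\min}(A_x)$ (which controls $D=\log(\lambda_{\max}/\lambda_{\min})$). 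Since $(A_x)_{ii}=\|v_i\|_2^2$ and, in dimension $d\gg k$, independent ball vectors are nearly orthogonal, the guiding picture is $A_x\approx \epsilon^2 I$, and the condition $k\le d-1$ guarantees the columns are generically independent so that $\lambda_{\min}(A_x)>0$ and this picture is meaningful. The task is to make the two deviations from $\epsilon^2 I$, radial (diagonal) and angular (off-diagonal), quantitative.

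First I would handle the diagonal. For $v$ uniform in $\mathcal{B}^d_\epsilon$ the radius obeys $\Pr(\|v\|_2\le \epsilon(1-t))=(1-t)^d\le e^{-td}$, so taking $t=\tfrac{2\log k}{d}$ and a union bound over the $k$ columns gives $\Pr(\min_i\|v_i\|_2 < \epsilon(1-\tfrac{2\log k}{d}))\le k\cdot k^{-2}=1/k$. On this event every diagonal entry satisfies $(A_x)_{ii}\ge \epsilon^2(1-\tfrac{2\log k}{d})^2$, while $(A_x)_{ii}\le\epsilon^2$ deterministically. Because the off-diagonal mass established below is of lower order in the regime $k\le d-1$, the smallest eigenvalue is governed by the smallest diagonal entry, which yields $\lambda_{\min}(A_x)\ge \epsilon^2(1-\tfrac{2\log k}{d})^2$ and hence the claimed bound $U(A_x)=-\log\lambda_{\min}(A_x)\le -\log\epsilon^2-\log(1-\tfrac{2\log k}{d})^2$.

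Next I would control the off-diagonals $(A_x)_{ij}=\langle v_i,v_j\rangle$ for $i\ne j$. Writing $v_i=\|v_i\|_2\,u_i$ with $u_i$ uniform on the sphere, rotational invariance gives $\langle u_i,u_j\rangle\stackrel{d}{=}W$, the first coordinate of a uniform point on $S^{d-1}$, for which $\Pr(|W|\ge s)\le 2e^{-(d-1)s^2/2}$. Choosing $s=\sqrt{6\log k/(d-1)}$ makes each pair fail with probability at most $2k^{-3}$, so a union bound over the fewer than $k^2$ pairs fails with probability at most $1/k$; on this event $|(A_x)_{ij}|\le \epsilon^2 s$ for all $i\ne j$. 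Feeding the diagonal and off-diagonal estimates into Gershgorin's theorem (equivalently Weyl's inequality applied to the splitting $A_x=\mathrm{diag}((A_x)_{ii})+E$) bounds $\lambda_{\max}$ above and $\lambda_{\min}$ below by the diagonal values corrected by $(k-1)\epsilon^2 s$. Combining the two failure events (total probability $O(1/k)=O(1/|f_\theta(x)|)$) and simplifying then gives $\lambda_{\max}/\lambda_{\min}\le 1+Z$, i.e. $D(A_x)\le\log(Z+1)$.

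The main obstacle is this off-diagonal step: obtaining the angular concentration with a sharp enough constant, propagating it through the $\binom{k}{2}$-fold union bound so the total failure probability is genuinely $O(1/|f_\theta(x)|)$, and then the bookkeeping that converts the Gershgorin estimates on $\lambda_{\max}$ and $\lambda_{\min}$ into the precise closed form $Z=\tfrac{2|f_\theta(x)|\sqrt{6\log|f_\theta(x)|}}{\sqrt{|\theta|-1}}(1-\tfrac{2\log|f_\theta(x)|}{|\theta|})^2$. In particular, keeping the $U$-bound clean requires identifying $\lambda_{\min}$ with the smallest diagonal entry rather than carrying the full Gershgorin correction $-(k-1)\epsilon^2 s$, which is legitimate only in the near-orthogonal regime; the exact placement of the factor $(1-\tfrac{2\log k}{d})^2$ and the constant $2$ is delicate and reflects how one balances the diagonal spread against the off-diagonal mass. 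The stated monotonicity of $Z$ then follows by inspection, since increasing $|\theta|$ shrinks both $s=\sqrt{6\log k/(d-1)}$ and the correction (the $(d-1)^{-1/2}$ factor dominating), whereas increasing $|f_\theta(x)|$ enlarges the prefactor and $\log k$ faster than the correction $(1-\tfrac{2\log k}{d})^2$ decays.
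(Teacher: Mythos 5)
Your proposal follows essentially the same route as the paper's own proof: you view $A_x$ as the Gram matrix of the columns of $\nabla_\theta f_\theta(x)$, obtain a high-probability lower bound on the diagonal entries and an upper bound on the off-diagonal entries, and then apply Gershgorin's circle theorem (Lemma~\ref{lemma:gershgorin}) to control $\lambda_{\max}(A_x)-\lambda_{\min}(A_x)$ and hence $D(A_x)$. The only real difference is presentational: the radius-tail and angular-concentration estimates you derive by hand, together with their union bounds, are exactly the content of Lemma~\ref{prop:high_dim_2}, which the paper simply cites; your version is more self-contained but not a different argument.

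That said, the step you yourself flag as the main obstacle --- passing from the diagonal lower bound to $\lambda_{\min}(A_x)\ge\epsilon^2\bigl(1-\tfrac{2\log k}{d}\bigr)^2$, writing $k=|f_\theta(x)|$ and $d=|\theta|$ --- is a genuine gap, and it is not closed by the paper either. Under the stated hypothesis $k\le d-1$ alone, the Gershgorin radius $R_i\le(k-1)\epsilon^2\sqrt{6\log k/(d-1)}$ can be of order $\sqrt{d\log d}\,\epsilon^2\gg\epsilon^2$ when $k$ is comparable to $d$, so the off-diagonal mass is \emph{not} of lower order and $\lambda_{\min}$ is not governed by the smallest diagonal entry; your argument is valid only in a genuinely near-orthogonal regime such as $k=o\bigl(\sqrt{d/\log d}\bigr)$. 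The paper hides the same hole behind an invalid inference: from $\lambda_{\min}(A_x)\le (A_x)_{ii}\le\lambda_{\max}(A_x)$ and $(A_x)_{ii}\ge c$ it concludes both $\lambda_{\max}(A_x)\ge c$ (correct) and $\lambda_{\min}(A_x)\ge c$ (a non sequitur --- the inequality points the wrong way). Relatedly, your remark that the placement of the factor $\bigl(1-\tfrac{2\log k}{d}\bigr)^2$ is delicate is apt: dividing the Gershgorin bound on $\lambda_{\max}-\lambda_{\min}$ by the claimed lower bound on $\lambda_{\min}$ produces $\bigl(1-\tfrac{2\log k}{d}\bigr)^{-2}$, whereas the theorem statement and the paper's final line write $Z$ with the exponent $+2$; that discrepancy is in the paper itself. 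So your reconstruction matches the paper's proof in structure and also in its weakest links; to make either argument rigorous one needs a stronger restriction on $k$ relative to $d$, or a bound on $\lambda_{\min}$ that carries the Gershgorin correction explicitly.
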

The derivation for this theorem is presented in Appendix~\ref{appendix:proof_number_para}.

According to this theorem, the strategies to reduce structural error involve two key aspects: enhancing gradient independence to satisfy the GIC~\ref{def:grad_indep_con} and reducing the structural error by increasing the number of parameters $|\theta|$. 

Below, we analyze several deep learning techniques based on Theorem~\ref{thm:number_para}. 
Our examination includes insights derived from our designed experiments, which have validated several conclusions. It is important to note, however, that some of these conclusions are based on intuition and practical experience rather than rigorous theoretical derivation or comprehensive empirical validation. Despite this, we believe that presenting preliminary yet plausible analyses from a novel perspective can offer valuable insights and stimulate further discussion and research into these issues.

\textbf{Enhancing Gradient Independence}. To achieve the GIC, techniques that weaken inter-parameter correlations and promote independent gradient updates are essential. Some effective implementations include:
\begin{itemize}
    \item \textbf{Dropout}~\cite{Krizhevsky2012ImageNetCW}: By randomly deactivating neurons during training, dropout reduces co-adaptation among parameters, which in turn encourages more independent gradient updates. This process helps weaken the dependence among the columns of $\nabla_\theta f_{\theta}(x)$, thereby improving the rationality and applicability of the GIC. 
    \item \textbf{Random parameter initialization}. Proper weight initialization methods (e.g., Xavier or He initialization) ensure that gradients are well-scaled and less correlated at the start of training, facilitating adherence to the GIC.

\end{itemize}

\textbf{Increasing the Number of Parameters}. The number of parameters in a model influences not only its fitting capability but also plays a crucial role in its non-convex optimization ability.
Theorem~\ref{thm:number_para} indicates that the structural error decreases as the number of parameters increases, provided that the GIC is satisfied. This conclusion aligns with the pivotal viewpoint that over-parameterization plays a crucial role in the exceptional non-convex optimization and generalization capabilities of DNNs~\cite{Du2018GradientDP,Chizat2018OnLT,Arjevani2022AnnihilationOS}. Going further, this theorem provides theoretical insights into how over-parameterization influences a model's non-convex optimization capability. The experimental results in Figure~\ref{fig:lay_init_indicator} also validate this conclusion.

\section{Empirical Validation}
\label{sec:experiments}

This section presents empirical validation of the core theoretical conclusions established in this paper through a series of targeted experiments.

\subsection{Experimental Configuration}

As illustrated in Figure~\ref{fig:paper_outline}, Theorems~\ref{thm:st_H_eigenvalue_bound} and \ref{thm:general_with_class_sgd_convergence}, along with Proposition~\ref{prop:skip_con} and Theorem~\ref{thm:number_para}, collectively form the theoretical framework for understanding non-convex optimization mechanisms in deep learning. To validate these theoretical results empirically, we design experiments aimed at assessing the alignment between our derived bounds and observed training dynamics.

The primary objective of the experiments is to evaluate whether models with different architectures exhibit behavior consistent with the theoretical conclusions during the training process. Rather than focusing on final performance metrics or comparisons with existing algorithms, we use the MNIST dataset~\cite{LeCun1998GradientbasedLA} as a baseline case study and design model architectures tailored to the specific properties being verified. To further assess the data-independence of our theoretical findings, additional experiments are conducted using the standard ResNet18 architecture~\cite{He2015DeepRL} on CIFAR-10~\cite{Krizhevsky2009LearningML} and Fashion-MNIST~\cite{Xiao2017FashionMNISTAN} datasets.

Model architectures and their configuration parameters are summarized in Figure~\ref{fig:model_increase}. Specifically, Models b and d extend Models a and c by introducing skip connections. The parameter $k$ denotes the number of blocks (or layers), enabling control over model depth.

All models are trained using the SGD optimizer with a learning rate of 0.01 and momentum of 0.9. Experiments were implemented in Python 3.7 using PyTorch 2.2.2, and executed on a GeForce RTX 2080 Ti GPU.

\subsection{Verification of Optimization Mechanism}

This subsection presents experimental designs aimed at validating the optimization mechanism described in Theorem~\ref{thm:st_H_eigenvalue_bound}. According to the theorem, minimizing both the local gradient norm and the structural error allows for effective control of the upper and lower bounds of the loss function, leading to successful optimization. Additionally, we verify that SGD indeed reduces the local gradient norm, as predicted by Theorem~\ref{thm:general_with_class_sgd_convergence}.

\subsubsection{Experimental Design}

In this experiment, we utilize the Softmax CrossEntropy loss function with a specified class count of $k=1$ and a batch size of 64. 
Based on Theorem~\ref{thm:st_H_eigenvalue_bound}, under the use of the CrossEntropy loss function, $\mathcal{L}_*(s)$ and $\mathcal{L}_*(z)$ both equal zero. Consequently, the local gradient norm and structural error serve as upper and lower bounds for $\mathcal{L}(\theta,s)$ and $\mathcal{L}(\theta,z)$, respectively.

To verify whether SGD optimizes the loss by reducing the gradient norm and controlling these bounds, we track the following quantities throughout training:
\begin{itemize}
    \item $\log \ell(f_\theta(x), y)$: the logarithmic loss,
    \item $U(A_x) + \log \|\nabla_\theta \ell(f_\theta(x), y)\|_2^2$: the upper bound,
    \item $L(A_x) + \log \|\nabla_\theta \ell(f_\theta(x), y)\|_2^2$: the lower bound,
    \item $\log \|\mathrm{Softmax}(f_\theta(x)) - y\|_2^2$: squared error in the output probability distribution.
\end{itemize}
If the upper and lower bounds decrease and converge toward zero during training, it would substantiate Theorem~\ref{thm:st_H_eigenvalue_bound}, confirming that minimizing the loss is effectively achieved by jointly reducing the gradient norm and structural error.

To quantify the evolving relationships among these quantities during training, we compute the local Pearson correlation coefficient using a sliding window approach. Given a window of length $m$, the local Pearson correlation coefficient between two variables $X$ and $Y$ is defined as:

\begin{equation}
    r_w = \frac{\sum_{i=1}^m (X_i - \bar{X})(Y_i - \bar{Y})}{\sqrt{\sum_{i=1}^m (X_i - \bar{X})^2} \sqrt{\sum_{i=1}^m (Y_i - \bar{Y})^2}},
\end{equation}
where $\bar{X} = \frac{1}{m} \sum_{i=1}^m X_i$ and $\bar{Y} = \frac{1}{m} \sum_{i=1}^m Y_i$. The value of $r_w$ ranges from $-1$ to $1$, indicating the strength and direction of the linear relationship between $X$ and $Y$: $r_w = 1$ implies perfect positive correlation, $r_w = -1$ implies perfect negative correlation, and $r_w = 0$ indicates no linear correlation. In our analysis, we employ a sliding window of length 50 to compute the correlation between the loss and its bounds.

\begin{figure}[ht]
\begin{center}
\centerline{\includegraphics[width=\columnwidth]{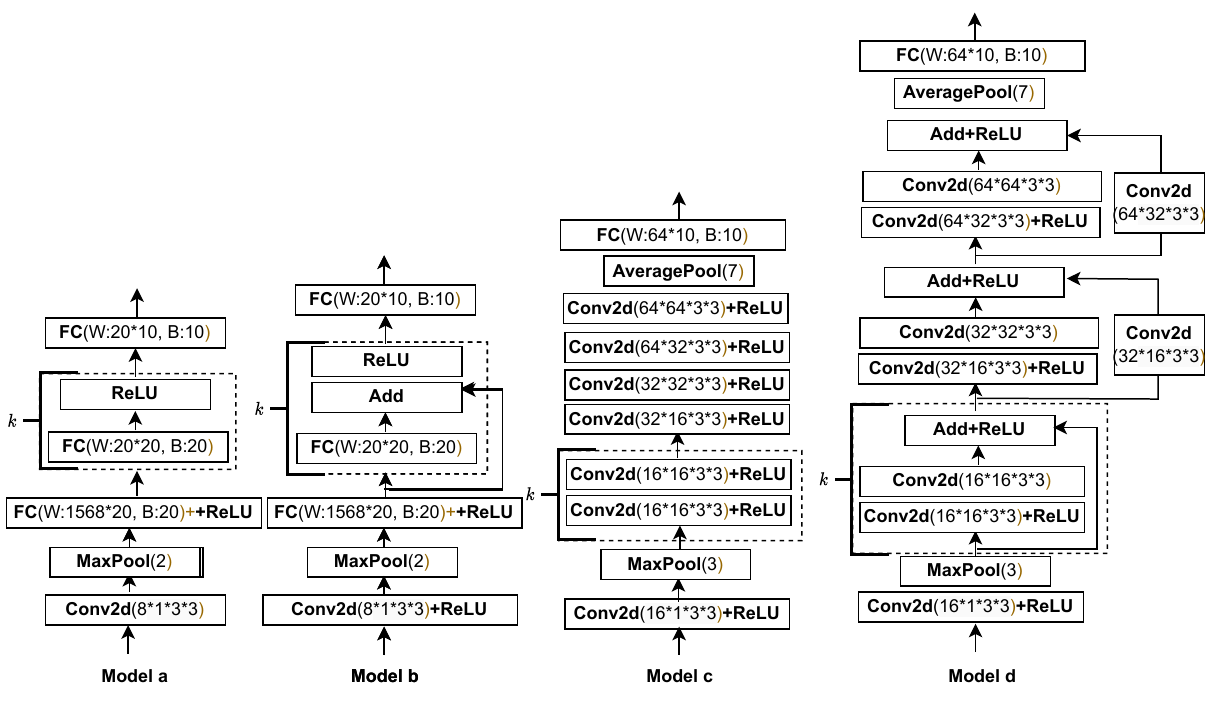}}
\caption{Model architectures and configuration parameters.}
\label{fig:model_increase}
\end{center}
\vskip -0.2in
\end{figure}

\subsubsection{Experimental Result}

Figure~\ref{fig:convergence_indicator} illustrates the evolution of gradient norm and structural error during training, while Figure~\ref{fig:convergence_bound} plots the corresponding changes in the loss and its bounds. The dynamic evolution of the local Pearson correlation coefficients is shown in Figure~\ref{fig:model_correlation} and Figure~\ref{fig:data_impact}.

\begin{figure}[ht]
	\centering
	\begin{minipage}{1.0\linewidth}
		\centering
            \centerline{\includegraphics[width=\columnwidth]{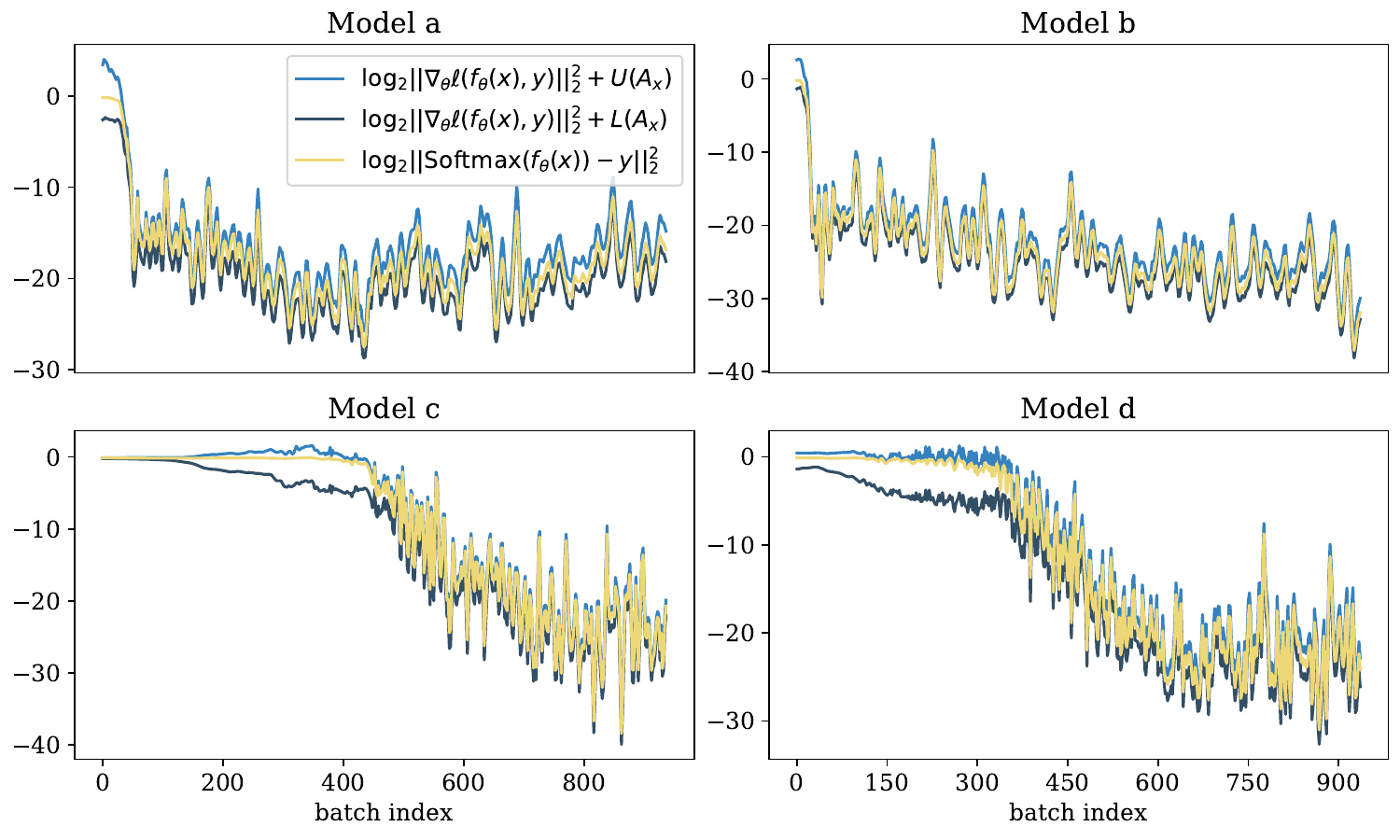}}
            \caption{Changes of bounds during the training process.}
            \label{fig:convergence_bound}
            \vspace{5mm} 
	\end{minipage}
	\begin{minipage}{1.0\linewidth}
		\centering
            \centerline{\includegraphics[width=1.0\columnwidth]{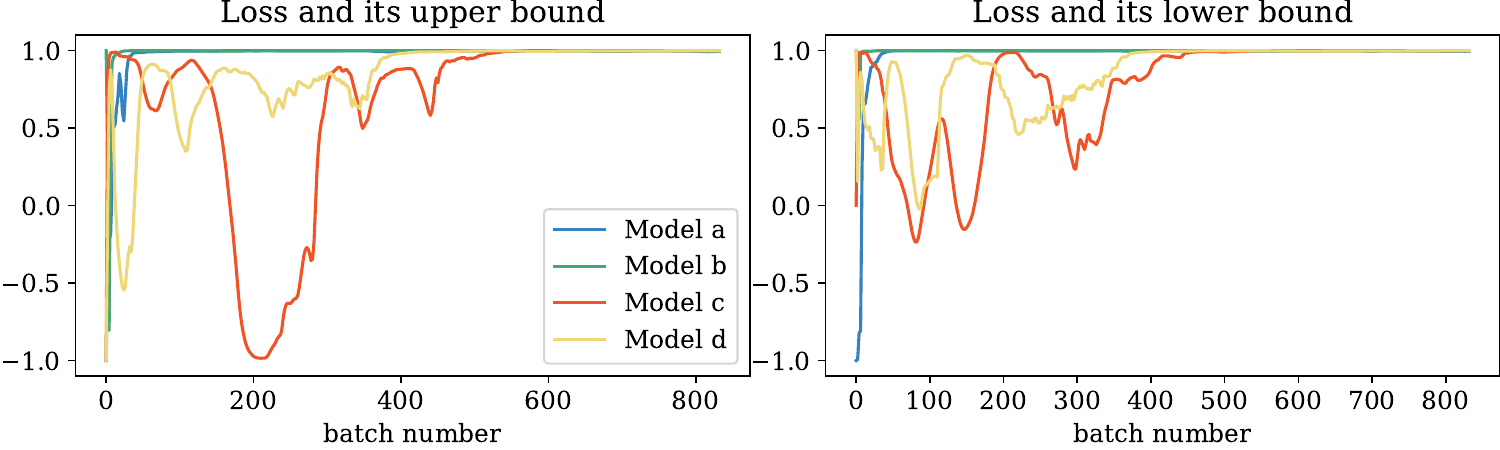}}
            \caption{Local Pearson correlation coefficient curves during the training process.}
            \label{fig:model_correlation}
	\end{minipage}
\vskip -0.1in
\end{figure}

\begin{figure}[ht]
\centering
    \centerline{\includegraphics[width=1.0\linewidth]{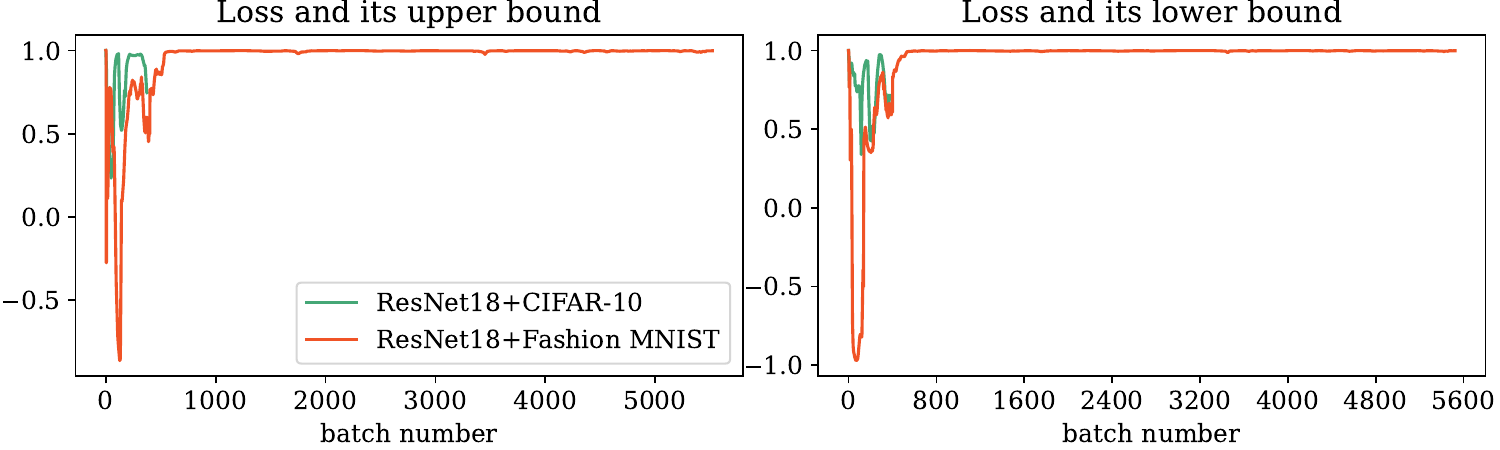}}
    \caption{Local Pearson correlation coefficient curves for different datasets.}
    \label{fig:data_impact}
\end{figure}

Key observations from Figures~\ref{fig:convergence_bound} and \ref{fig:model_correlation} include:

\begin{enumerate}
    \item As training progresses, the variation in the loss becomes increasingly aligned with the trends of its upper and lower bounds. This suggests that the theoretical bounds accurately reflect the behavior of the actual loss function during optimization.
    
    \item The gap between the upper and lower bounds narrows significantly over time, indicating reduced uncertainty or variability in the loss as training proceeds.
    
    \item By the 500th iteration, the Pearson correlation coefficients between $\log \|\mathrm{Softmax}(f_{\theta}(x))-y\|_2^2$ and its bounds approach 1 across all models. This strong positive correlation confirms the tightness of the theoretical bounds and validates their utility in characterizing the loss dynamics.
\end{enumerate}

These results provide empirical support for Theorem~\ref{thm:st_H_eigenvalue_bound}, demonstrating that minimizing loss can be effectively achieved by controlling its theoretical upper and lower bounds.

\begin{figure}[ht]
\centering
    \centerline{\includegraphics[width=1.0\linewidth]{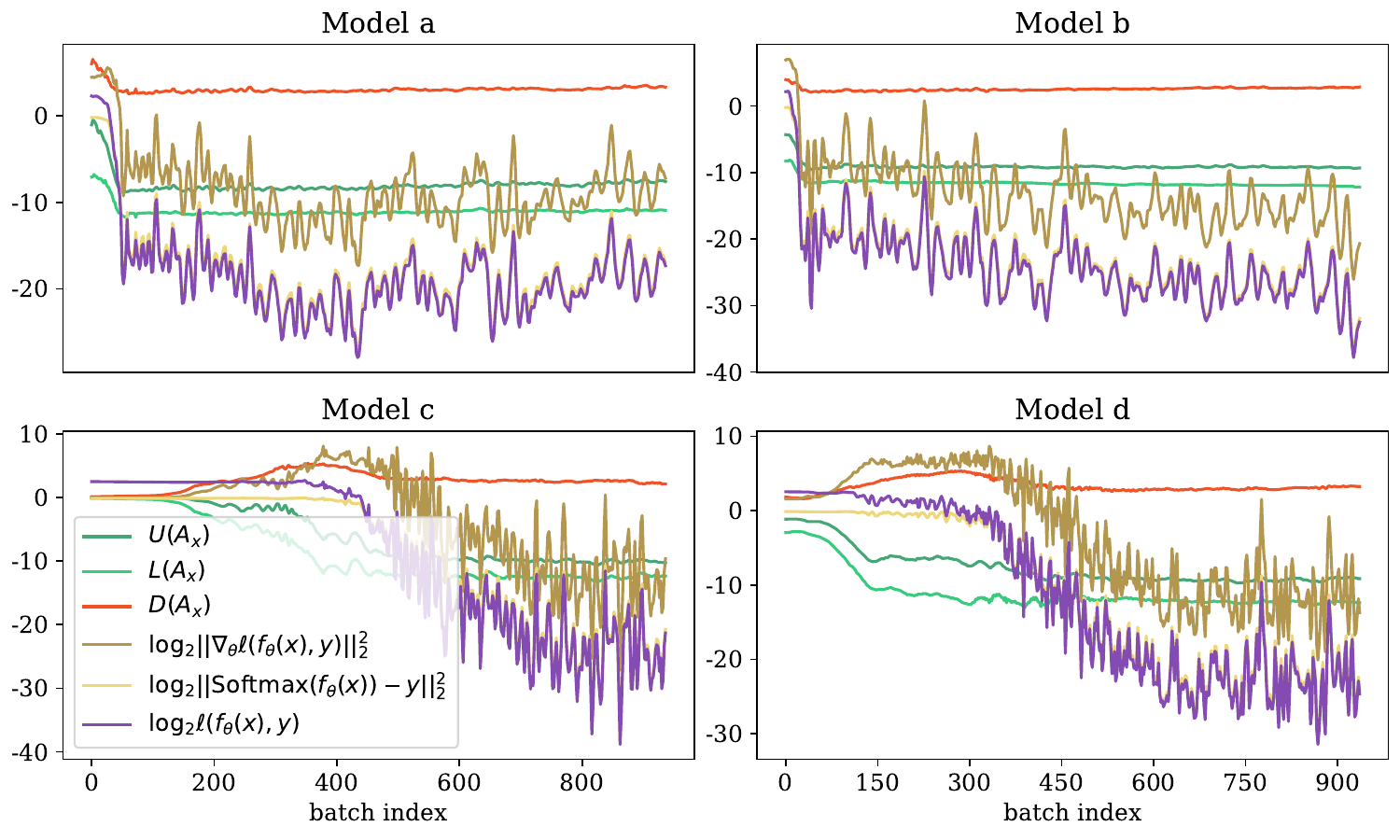}}
    \caption{Changes in indicators during the training process.}
    \label{fig:convergence_indicator}
\end{figure}
Additionally, Figure~\ref{fig:convergence_indicator} shows the changes in structural error and local gradient norm throughout the training process. These results confirm the predictions of Theorem~\ref{thm:general_with_class_sgd_convergence}, demonstrating that Mini-batch SGD successfully reduces the local gradient norm, thereby facilitating effective optimization of the non-convex objective.

Finally, Figure~\ref{fig:data_impact} shows that when using the standard ResNet18 and other datasets, the local Pearson correlation coefficients of $\log \|\mathrm{Softmax}(f_\theta(x))- y\|_2^2$ with its upper and lower bounds also gradually converge to 1. This indicates that the above conclusions are independent of the dataset.

\subsection{Verfication on Structural Error Minimization}

To empirically validate the theoretical findings in Proposition~\ref{prop:skip_con} and Theorem~\ref{thm:number_para}, we analyze changes in structural error during both network initialization and training phases. This approach allows us to empirically verify that skip connections and increasing the number of parameters under the GIC can indeed reduce structural error.

\subsubsection{Experimental Result on Initialization}

We evaluate the behavior of structural error as the depth of the models increases under standard He initialization. The resulting trends are shown in Figure~\ref{fig:lay_init_indicator}. Our observations are as follows:

\begin{enumerate}
    \item For Models a and c, networks without skip connections, as the number of layers increases, the upper bound $U(A_x)$, lower bound $L(A_x)$, and gradient norm $D(A_x)$ all decrease rapidly. Consequently, the structural error diminishes and approaches zero. This result aligns precisely with the implications of Theorem~\ref{thm:number_para}, which predicts that increasing model depth under GIC leads to reduced structural error.
    
    \item In contrast, Models b and d incorporate skip connections, which amplify gradient magnitudes and mitigate the vanishing gradient problem. As a result, while $U(A_x)$ and $L(A_x)$ still decrease with depth, $D(A_x)$ increases. Given that $U(A_x) \gg L(A_x)$, the dominant factor in structural error remains $U(A_x)$, leading to an overall reduction in structural error. These results confirm the role of skip connections in reducing structural error as described in Proposition~\ref{prop:skip_con}. Furthermore, they suggest that skip connections violate the GIC, thereby invalidating the assumptions of Theorem~\ref{thm:number_para}.
\end{enumerate}

\begin{figure}[ht]  
\centering  
\centerline{\includegraphics[width=\columnwidth]{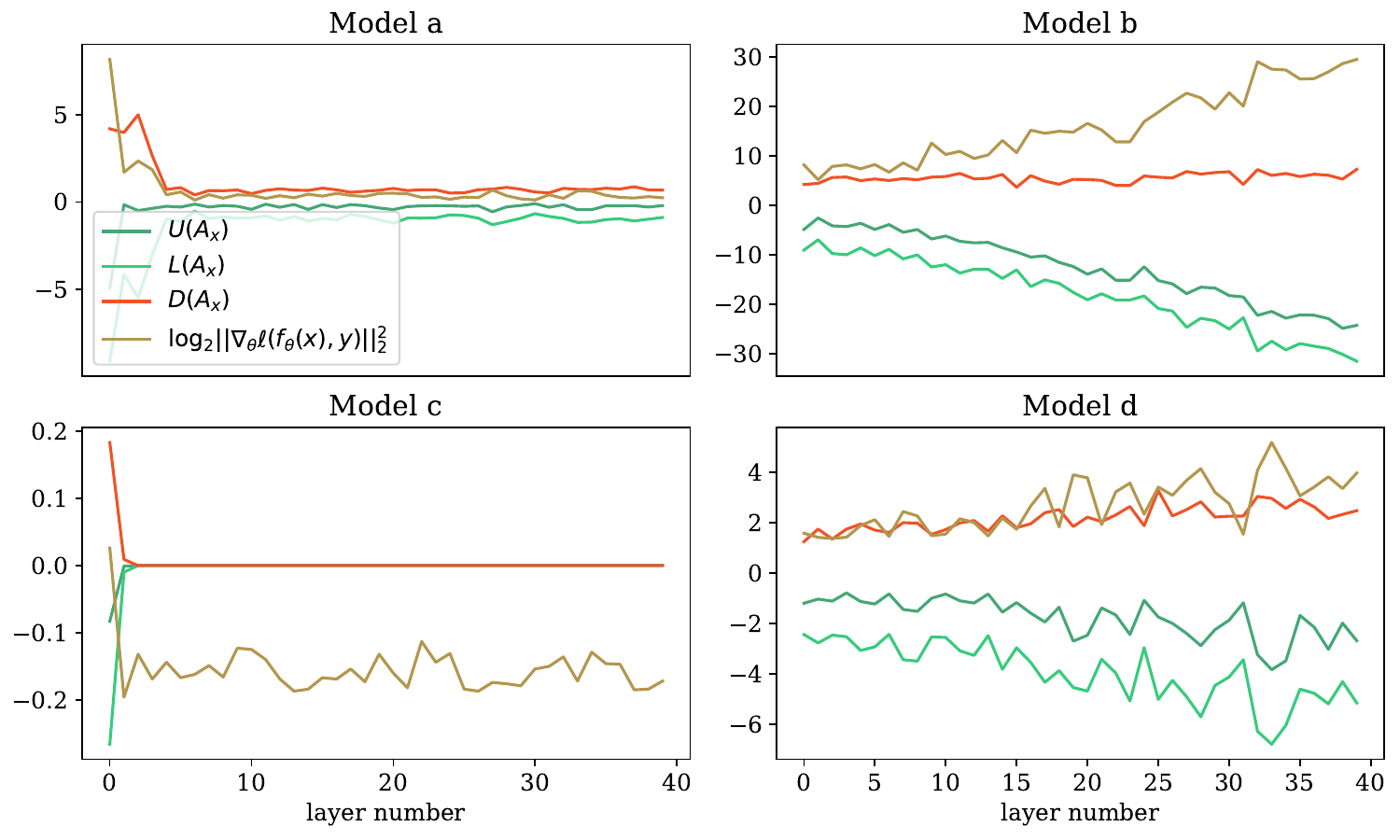}}  
\caption{Changes in indicators during the increase in model depth.}  
\label{fig:lay_init_indicator}  
\vskip -0.1in  
\end{figure}

\subsubsection{Experimental Result on Training Dynamics}

To investigate how parameter count affects training dynamics, we vary the block count $k$ from 0 to 5 in Models a and b and train each configuration. The evolution of loss bounds is depicted in Figures~\ref{fig:layer_nores_bound} and \ref{fig:layer_res_bound}, while local Pearson correlation coefficients between loss and its bounds, computed using a sliding window of length 50, are shown in Figures~\ref{fig:nores_layer_correlation} and \ref{fig:res_layer_correlation}.

\begin{figure}[htbp]
	\centering
	\begin{minipage}{1.0\linewidth}
		\centering
		\includegraphics[width=1.0\columnwidth]{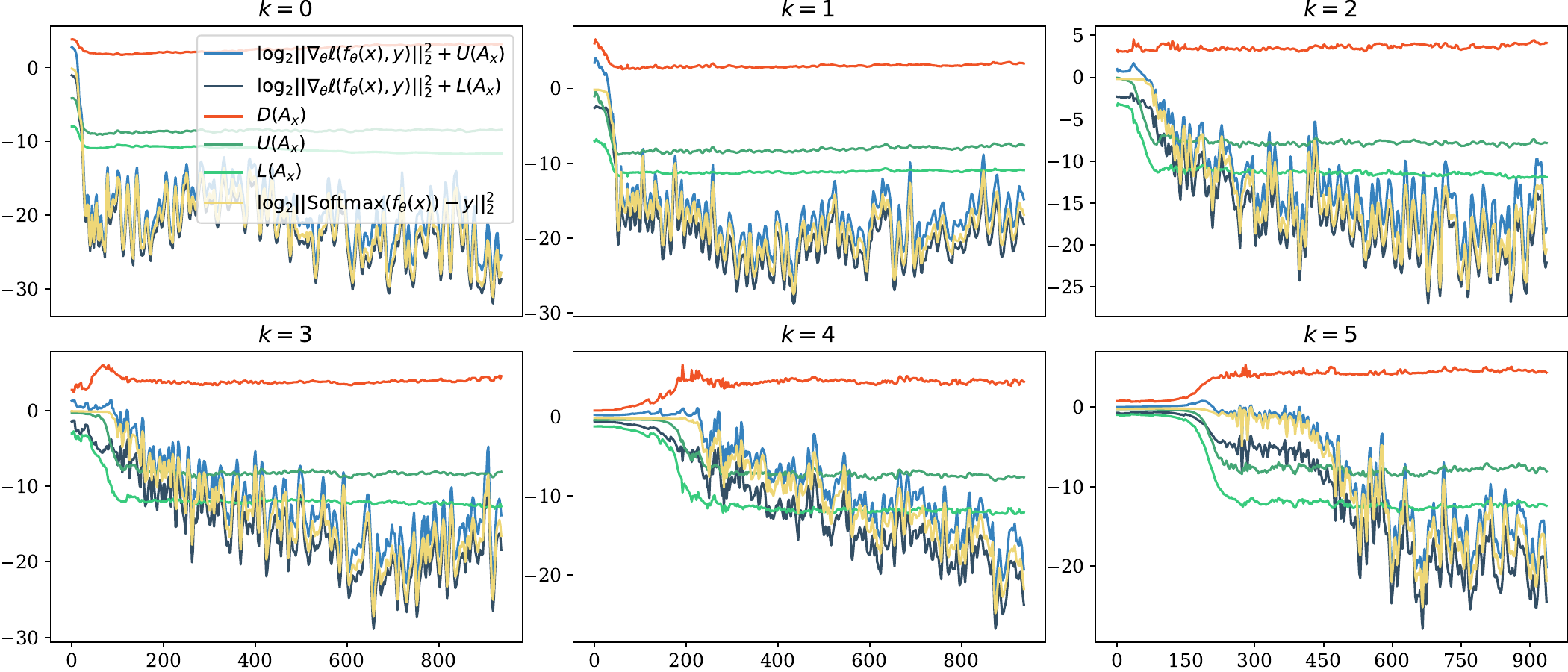}
		\caption{Bounds of model a with increasing blocks.}
		\label{fig:layer_nores_bound}
	\end{minipage}

	\begin{minipage}{1.0\linewidth}
		\centering
		\includegraphics[width=1.0\columnwidth]{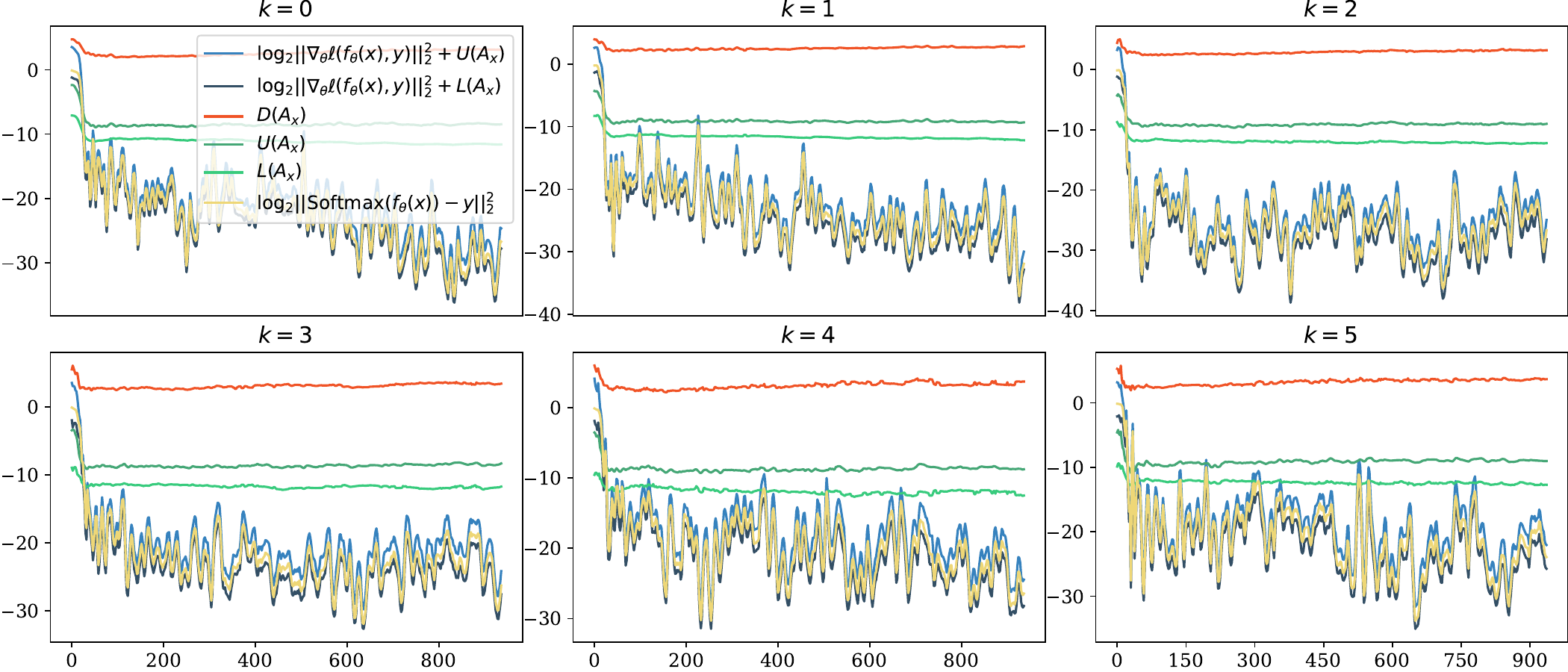}
		\caption{Bounds of model b with increasing blocks.}
		\label{fig:layer_res_bound}
	\end{minipage}
\end{figure}

\setlength{\parskip}{0.2cm plus4mm minus3mm}
\begin{figure}[htbp]
	\centering
	\begin{minipage}{0.9\linewidth}
		\centering
		\includegraphics[width=1.0\columnwidth]{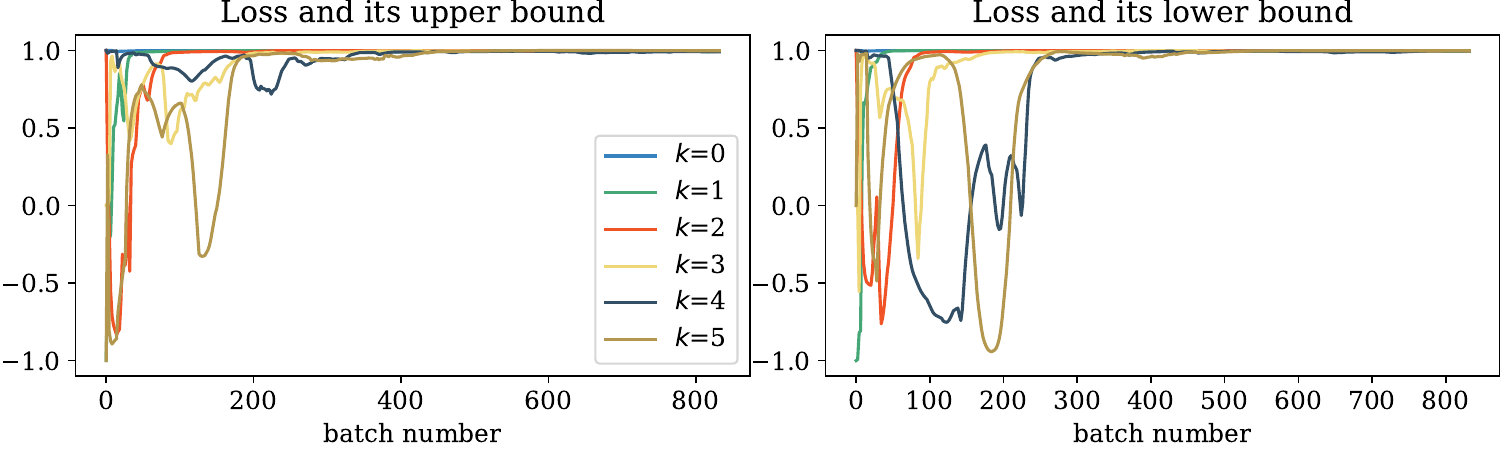}
		\caption{Local Pearson correlation coefficient curves of model a with increasing blocks.}
		\label{fig:nores_layer_correlation}
	\end{minipage}
	\begin{minipage}{0.9\linewidth}
		\centering
		\includegraphics[width=1.0\columnwidth]{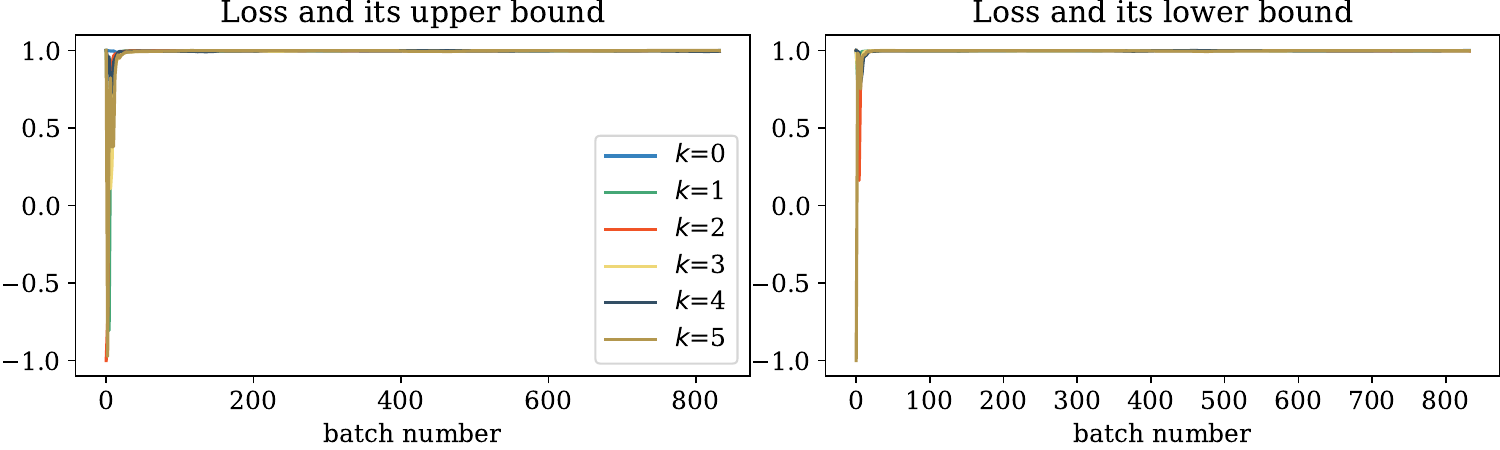}
		\caption{Local Pearson correlation coefficient curves of model b with increasing blocks.}
		\label{fig:res_layer_correlation}
	\end{minipage}
\end{figure}

\begin{figure}[!ht]
\centering
\centerline{\includegraphics[width=\columnwidth]{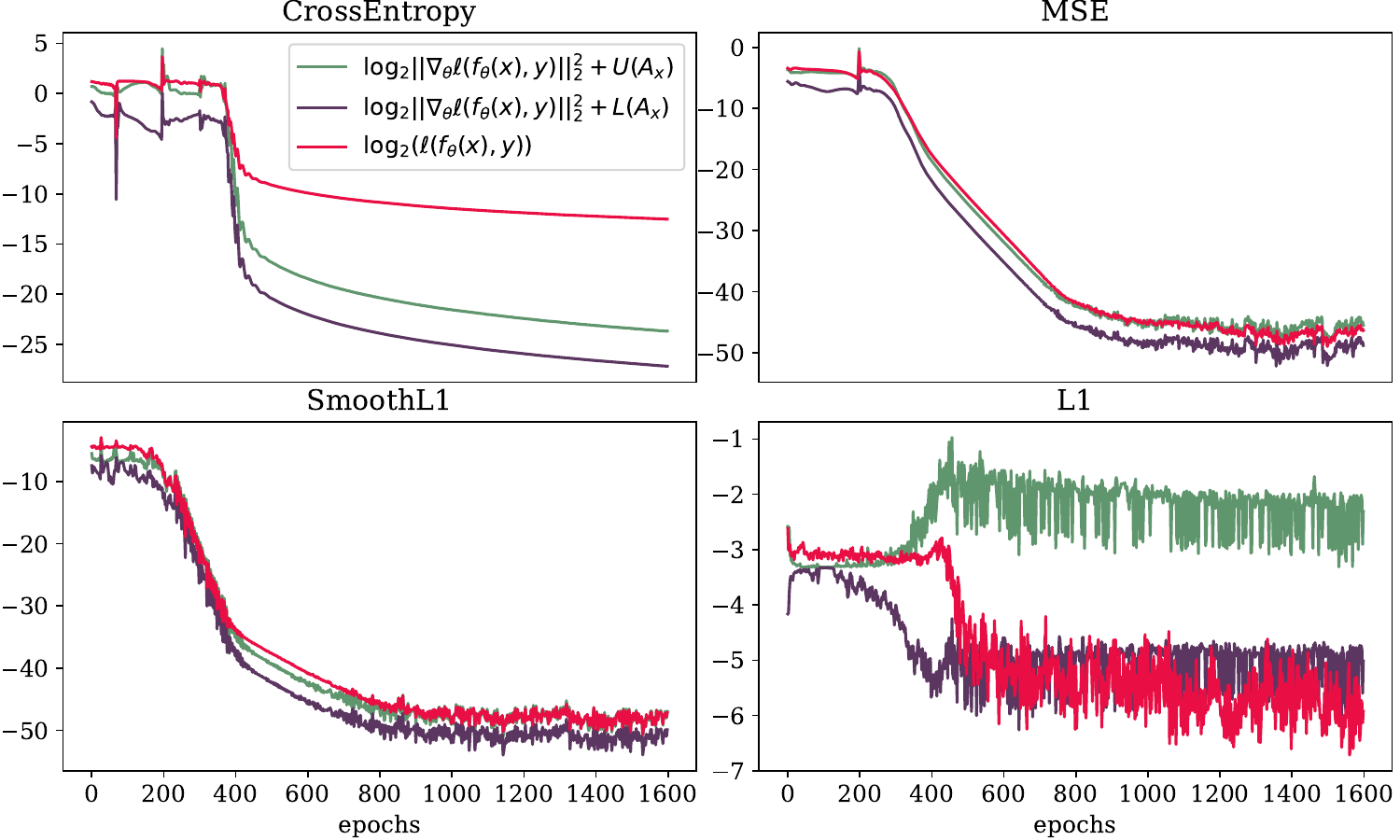}}
\caption{Analysis of $U(A_x)$, $L(A_x)$, local gradient norm, and optimization objective for the built-in loss functions.}
\label{fig:native_loss_convergence}
\end{figure}

\begin{figure}[!ht]
\centering
\centerline{\includegraphics[width=\columnwidth]{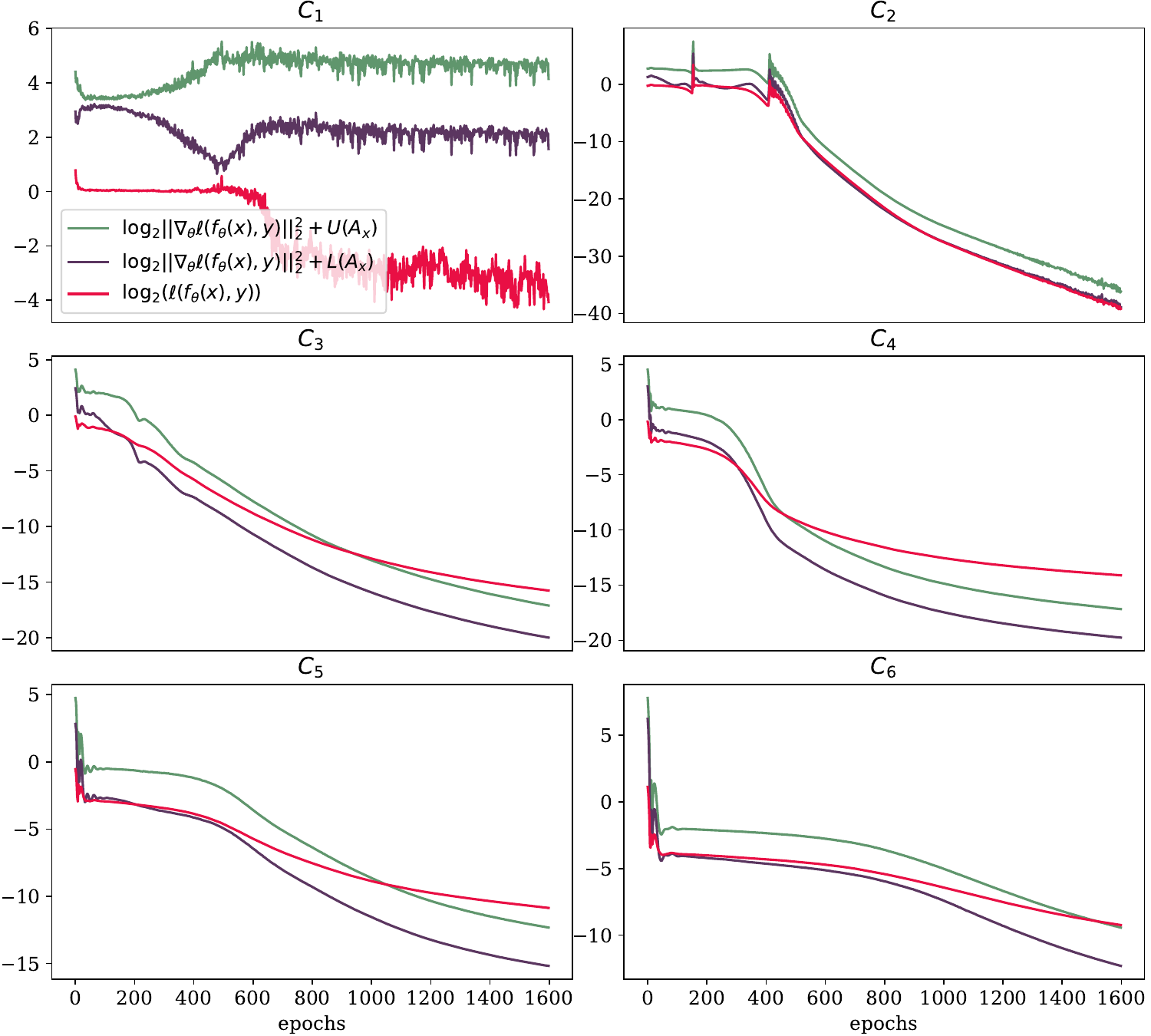}}
\caption{Analysis of $U(A_x)$, $L(A_x)$, local gradient norm, and optimization objective for the customized loss functions.}
\label{fig:custom_loss_convergence}
\end{figure}

Based on these results, we derive the following conclusions:

\begin{itemize}
    \item \textbf{Without skip connections, increasing model depth prolongs the convergence time of structural error and delays loss reduction.} As shown in Figure~\ref{fig:layer_nores_bound}, for Model a, the onset of loss reduction corresponds closely to the point at which structural error converges. Increasing depth extends this convergence period, delaying effective optimization. This observation supports Theorem~\ref{thm:st_H_eigenvalue_bound}, which posits that loss minimization depends on controlling its upper and lower bounds.

    \item \textbf{Skip connections break the initial GIC but accelerate model's convergence.} At the beginning of training, even when increasing the number of layers, Model b does not show a significant reduction in structural error early on. We attribute this to the disruption of GIC caused by skip connections, rendering Theorem~\ref{thm:number_para} inapplicable. However, the presence of skip connections addresses the vanishing gradient problem and reduces the structural error, thereby accelerating the model's convergence.
\end{itemize}

To ensure robustness of our findings and verify that the observed phenomena are not specific to the CrossEntropy loss, we conducted additional experiments using a variety of loss functions. These include built-in PyTorch losses such as MSE loss, Softmax CrossEntropy loss, L1 loss, and SmoothL1 loss, as well as a custom-defined family of losses denoted $\|f_{\theta}(x) - y\|_k^k$, where $k$ ranges from 1 to 6.

For this analysis, we selected Model d from Figure~\ref{fig:model_increase} with $k = 1$. All models were trained for 1600 epochs using SGD with consistent hyperparameters.
The convergence patterns of the loss functions and their corresponding bounds are illustrated in Figures~\ref{fig:native_loss_convergence} and \ref{fig:custom_loss_convergence}. Across all tested loss types, the loss decreases closely follow the trends of their theoretical upper and lower bounds. This consistency strongly indicates that the optimization mechanisms discussed in this paper are general principles applicable across a broad range of loss formulations.

\section{Conclusion}
\label{sec:conclusion}

In this paper, we introduce and investigate the notions of $\mathcal{H}(\phi, c_\phi)$-convexity and $\mathcal{H}(\Phi, c_\Phi)$-smoothness, which extend classical concepts such as Lipschitz smoothness and strong convexity. 
These formulations enable a more flexible and realistic characterization of loss functions and models in deep learning settings.
By leveraging these generalized properties, we establish practical constraints on both the loss functions and model architectures within the framework of empirical risk minimization, ensuring alignment with real-world deep learning applications. We then show theoretically that effective optimization in such non-convex settings can be achieved by jointly reducing the local gradient norm and the structural error.
Moreover, we demonstrate that SGD plays a crucial role in reducing the local gradient norm, while architectural and training techniques,such as over-parameterization, skip connections, and random initialization, contribute significantly to minimizing the structural error. Finally, our core theoretical findings are substantiated through comprehensive empirical validation.
We believe that the insights presented in this work not only deepen the current understanding of non-convex optimization in deep learning but also open new avenues for the development of more effective training methodologies and theoretical analyses in this rapidly evolving field.

\acks{This work was supported in part by the National Key Research and Development Program of China under No. 2022YFA1004700, Shanghai Municipal Science and Technology, China Major Project under grant 2021SHZDZX0100, the National Natural Science Foundation of China under Grant Nos.~62403360, 72171172, 92367101, 62088101, iF open-funds from Xinghuo Eco and China Institute of Communications.
}

\section*{Declarations}

\subsection{Funding}
This work was supported in part by the National Key Research and Development Program of China under No. 2022YFA1004700, Shanghai Municipal Science and Technology, China Major Project under grant 2021SHZDZX0100, the National Natural Science Foundation of China under Grant Nos.~62403360, 72171172, 92367101, 62088101, iF open-funds from Xinghuo Eco and China Institute of Communications.

\subsection{Competing interests}
The authors declare that they have no known competing financial interests or personal relationships that could have appeared to influence the work reported in this paper.

\subsection{Ethics approval and consent to participate}
Not applicable.

\subsection{Data availability}
The datasets generated and/or analyzed in the present study are accessible via the GitHub repository at \href{https://yann.lecun.com/exdb/mnist/}{minist} and the website at \href{https://yann.lecun.com/exdb/mnist/}{minist}.

\subsection{Materials availability}
Not applicable.

\subsection{Code availability}
The source code will be provided after acceptance for reproducibility purposes.

\subsection{Author contribution}
Binchuan Qi: Conceptualization, Methodology, Writing original draft. 
Wei Gong: Supervision, Review, Declarations. 
Li Li: Supervision, Review, Declarations.

\appendix

\section{Other Related Works}
\label{appendix:other_related_work}

\subsection{Gradient-based Optimization} 
The classical gradient-based optimization problems concerning standard Lipschitz smooth functions have been extensively studied for both convex~\cite{Darzentas1983ProblemCA,Nesterov2014IntroductoryLO} and non-convex functions. 
In the convex scenario, the objective is to identify an $\varepsilon$-sub-optimal point $x$ such that $f(x)-\inf_x f(x)\le \varepsilon$. For convex Lipschitz smooth problems, it has been established that GD attains a gradient complexity of $\mathcal{O}(1/\varepsilon)$. When dealing with strongly convex functions, GD achieves a complexity of $\mathcal{O}(\kappa\log (1/\varepsilon))$, where $\kappa$ denotes the condition number.
In the non-convex domain, the aim is to locate an $\varepsilon$-stationary point $x$ satisfying $\|\nabla f(x)\|\le \varepsilon$, given that identifying a global minimum is generally NP-hard. For deterministic non-convex Lipschitz smooth problems, it is well-established that GD reaches the optimal complexity of $\mathcal{O}(1/\varepsilon^2)$, matching the lower bound presented in the paper~\cite{Carmon2017LowerBF}.
Within the stochastic context, for an unbiased stochastic gradient with bounded variance, SGD achieves the optimal complexity of $\mathcal{O}(1/\varepsilon^4)$~\cite{Ghadimi2013StochasticFA,Arjevani2019LowerBF}. Fang et al.~\cite{Fang2018SPIDERNN} introduced the first variance reduction algorithm, SPIDER, which attains the optimal sample complexity of $\mathcal{O}(\varepsilon^{-3})$ under the stronger expected smoothness assumption. Concurrently, several other variance reduction algorithms have been devised for stochastic non-convex optimization, achieving the optimal sample complexity, such as SARAH~\cite{Nguyen2017SARAHAN}, SpiderBoost~\cite{Wang2019SpiderBoostAM}, STORM~\cite{Cutkosky2019MomentumBasedVR}, and SNVRG\cite{Zhou2018StochasticNV}.
For objective functions exhibiting $(L_0,L_1)$-smoothness, Zhang et al.~\cite{Zhang2019WhyGC} also put forth clipped GD and normalized GD, which maintain the optimal iteration complexity of $\mathcal{O}(1/\varepsilon^{2})$, and proposed clipped SGD, which likewise achieves a sample complexity of $\mathcal{O}(1/\varepsilon^{4})$. Zhang et al.~\cite{Zhang2020ImprovedAO} developed a general framework for clipped GD/SGD with momentum acceleration, achieving identical complexities for both deterministic and stochastic optimization. Reisizadeh et al.~\cite{Reisizadeh2023VariancereducedCF} managed to reduce the sample complexity to $\mathcal{O}(1/\varepsilon^{3})$ by integrating the SPIDER variance reduction technique with gradient clipping. For objective functions characterized by $\ell$-smoothness, Li et al.~\cite{Li2023ConvexAN} achieves the same complexity for GD/SGD as under the classical Lipschitz smoothness condition.

\subsection{KŁ Theory}
In the context of non-convex and non-smooth scenarios, the KŁ property is of significant importance and has been extensively studied in variational analysis and optimization.  
For an extended-real-valued function $f$ and a point $\bar{x}$ where $f$ is finite and locally lower semicontinuous, the KL property of $f$ at $\bar{x}$ is characterized by the existence of some $\epsilon > 0$ and $\nu \in (0, +\infty]$ such that the KŁ inequality
\begin{equation}\label{eq:kl_property}
    \psi'(f(x) - f(\bar{x})) d(0, \partial f(x)) \geq 1
\end{equation}
holds for all $x$ satisfying $\|x - \bar{x}\| \leq \epsilon$ and $f(\bar{x}) < f(x) < f(\bar{x}) + \nu$. Here, $d(0, \partial f(x))$ denotes the distance of $0$ from the limiting subdifferential $\partial f(x)$ of $f$ at $x$, and $\psi: [0, \nu) \to \mathbb{R}_+$, referred to as a desingularizing function, is assumed to be continuous, concave, and of class $\mathcal{C}^1$ on $(0, \nu)$, with $\psi(0) = 0$ and $\psi' > 0$ over $(0, \nu)$.
The foundational work on the KŁ property originated from Łojasiewicz \cite{lojasiewicz1963propriete} and Kurdyka \cite{kurdyka1998gradients}, who initially studied differentiable functions. 
Subsequently, the KŁ property was extended to nonsmooth functions in the work~\cite{bolte2007lojasiewicz,bolte2007clarke}. 
By employing non-smooth desingularizing functions, a generalized version of the concave KŁ property, along with its exact modulus, was introduced and studied by \cite{wang2022exact}. Among the widely used desingularizing functions is $\psi(s) := \frac{1}{\mu} s^{1-\theta}$, where $\theta \in [0, 1)$ is the exponent and $\mu > 0$ is the modulus. In this case, the corresponding KŁ inequality can be expressed as:
\begin{equation}
    d(0, \partial f(x)) \geq \frac{\mu}{1-\theta} (f(x) - f(\bar{x}))^{\theta}.
\end{equation}
The KŁ property, along with its associated exponent (especially the case when $\theta = 1/2$) and modulus, plays a pivotal role in estimating the local convergence rates of many first-order optimization algorithms. This connection has been extensively explored in the literature, including works such as \cite{attouch2009convergence,attouch2010proximal,attouch2013convergence,li2016douglas,li2018calculus,li2023variational}, and references therein. 
Currently, there has been some research on the use of the KŁ property within a stochastic framework, and significant progress has been made \cite{Gadat2017OPTIMAL,driggs2021springfaststochasticproximal,NEURIPS2022_65ae674d,2023CONVERGENCE,li2023convergencerandomreshufflingkurdykalojasiewicz,fest2024stochasticusekurdykalojasiewiczproperty}. These studies explore how the KŁ property can be leveraged to analyze the stochastic, nonconvex optimization problems, providing valuable insights into the optimization mechanisms of such problems. 
However, directly applying the KŁ property to analyze the optimization mechanisms specific to deep learning remains rare. 

The primary focus of this paper is to theoretically analyze the optimization mechanisms specific to deep learning. While the application and improvement of the KŁ property are not within the scope of this study, the proposed $\mathcal{H}(\phi)$-convexity shares similarities with the KŁ property under the case where the desingularizing function is given by $\psi(s):=\frac{1}{\mu}s^{1-\theta}$. The analysis of the similarities and differences between them are provided in Subsection~\ref{subsec:extend_cs}.

\section{Mathematical Background}
\label{appendix:math_background}

\subsection{Norm}

Let $\mathbb{S}$, $\mathbb{S}_{+}$, and $\mathbb{S}_{++}$ denote the sets of symmetric, symmetric positive semidefinite, and symmetric positive definite $n \times n$ matrices, respectively.

A function $f: \mathbb{R}^n \rightarrow \mathbb{R}$ is a norm if it satisfies the following three properties:
\begin{enumerate}
    \item positive definiteness: $f(x)>0, \forall x \neq 0$, and $f(0)=0$.
    \item 1-homogeneity: $f(\lambda x)=|\lambda| f(x), \forall x \in \mathbb{R}^n, \forall \lambda \in \mathbb{R}$.
    \item triangle inequality: $f(x+y) \leq f(x)+f(y), \forall x, y \in \mathbb{R}^n$.
\end{enumerate}

Some well-known examples of norms include the 1-norm, $f(x)=\sum_{i=1}^n\left|x_i\right|$, the 2-norm, $f(x)=\sqrt{\sum_{i=1}^n x_i^2}$, and the $\infty$-norm, $f(x)=\max _i\left|x_i\right|$. These norms are part of a family parametrized by a constant traditionally denoted $p$, with $p \geq 1$ : the $L_p$-norm is defined by
$$
\|x\|_p=\left(\left|x_1\right|^p+\cdots+\left|x_n\right|^p\right)^{1 / p}
$$
Another important family of norms are the quadratic norms. For $P \in \mathbb{S}_{++}^n$, we define the $P$-quadratic norm as
$$
\|x\|_P=\left(x^T P x\right)^{1 / 2}=\left\|P^{1 / 2} x\right\|_2
$$

The unit ball of a quadratic norm is an ellipsoid (and conversely, if the unit ball of a norm is an ellipsoid, the norm is a quadratic norm).

\begin{lemma}
    \label{lem:norm_eq}
    Suppose that $\|\cdot\|_{\mathrm{a}}$ and $\|\cdot\|_{\mathrm{b}}$ are norms on $\mathbb{R}^n$. A basic result of analysis is that there exist positive constants $\alpha$ and $\beta$ such that, for all $x \in \mathbb{R}^n$~\cite{boyd2004convex},
$$
\alpha\|x\|_{\mathrm{a}} \leq\|x\|_{\mathrm{b}} \leq \beta\|x\|_{\mathrm{a}} .
$$
\end{lemma}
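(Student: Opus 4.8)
The plan is to prove the general claim via the standard reduction to the Euclidean norm. Since norm equivalence is a symmetric and transitive relation, it suffices to show that an \emph{arbitrary} norm $\|\cdot\|$ on $\mathbb{R}^n$ is equivalent to the Euclidean norm $\|\cdot\|_2$. Once this is established separately for $\|\cdot\|_{\mathrm{a}}$ and $\|\cdot\|_{\mathrm{b}}$, chaining the two resulting pairs of inequalities through $\|\cdot\|_2$ produces the desired constants $\alpha$ and $\beta$ immediately.

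First I would establish one of the two bounds cheaply. Writing $x = \sum_{i=1}^n x_i \mathbf{e}_i$ in the standard basis and applying the triangle inequality together with 1-homogeneity gives $\|x\| \le \sum_{i=1}^n |x_i|\,\|\mathbf{e}_i\|$, and then Cauchy--Schwarz yields $\|x\| \le C\|x\|_2$ with $C = \left(\sum_{i=1}^n \|\mathbf{e}_i\|^2\right)^{1/2}$. This inequality does double duty: through the reverse triangle inequality $\bigl|\,\|x\|-\|y\|\,\bigr| \le \|x-y\| \le C\|x-y\|_2$, it also shows that the map $x \mapsto \|x\|$ is (Lipschitz) continuous with respect to the Euclidean topology.

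The harder direction, the lower bound $m\|x\|_2 \le \|x\|$, is where the real work lies, and it rests on a compactness argument. The Euclidean unit sphere $S = \{x \in \mathbb{R}^n : \|x\|_2 = 1\}$ is closed and bounded, hence compact by Heine--Borel. Since $\|\cdot\|$ is continuous on $S$ by the previous step, the extreme value theorem guarantees that it attains a minimum value $m$ on $S$; crucially, positive definiteness forces $m>0$, because $0 \notin S$ and the norm vanishes only at the origin. Homogeneity then transfers this to all of $\mathbb{R}^n$: for $x \ne 0$, applying the bound to the unit vector $u = x/\|x\|_2 \in S$ gives $m \le \|u\| = \|x\|/\|x\|_2$, i.e. $m\|x\|_2 \le \|x\|$, while the case $x=0$ is trivial. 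Combining with the upper bound, any norm satisfies $m\|x\|_2 \le \|x\| \le M\|x\|_2$ for positive constants $m,M$.

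Finally, denoting by $m_{\mathrm{a}}, M_{\mathrm{a}}$ and $m_{\mathrm{b}}, M_{\mathrm{b}}$ the constants obtained from the two norms, I would combine $m_{\mathrm{b}}\|x\|_2 \le \|x\|_{\mathrm{b}} \le M_{\mathrm{b}}\|x\|_2$ with $m_{\mathrm{a}}\|x\|_2 \le \|x\|_{\mathrm{a}} \le M_{\mathrm{a}}\|x\|_2$ to obtain $\alpha = m_{\mathrm{b}}/M_{\mathrm{a}}$ and $\beta = M_{\mathrm{b}}/m_{\mathrm{a}}$. The main obstacle is really the strict positivity of the minimum $m$: this is precisely the point where finite-dimensionality enters, via compactness of $S$, and it is the step that fails in infinite dimensions.
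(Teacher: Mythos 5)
Your proof is correct and complete: the reduction to equivalence with the Euclidean norm, the upper bound via the triangle inequality and Cauchy--Schwarz, the resulting Lipschitz continuity, the compactness/extreme-value argument yielding a strictly positive minimum on the Euclidean unit sphere, and the final chaining of constants are all sound, and you correctly identify compactness as the step where finite-dimensionality is essential. Note that the paper itself gives no proof of this lemma---it is stated as a known background fact with a citation to \cite{boyd2004convex}---so your argument simply supplies the standard proof that the cited reference relies on; there is no alternative route in the paper to compare against.
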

We conclude that any norms on any finite-dimensional vector space are equivalent, but on infinite-dimensional vector spaces, the result need not hold.

\subsection{Legendre-Fenchel Conjugate}
\begin{definition}[Legendre-Fenchel conjugate]
    The Legendre-Fenchel conjugate of a function $\Omega$ is denoted by 
    \begin{equation*}
        \Omega^*(\nu):= \sup_{\mu\in \mathrm{dom}(\Omega)}\langle \mu,\nu \rangle-\Omega(\mu).
    \end{equation*}
\end{definition}
By default, $\Omega$ is a continuous strictly convex function, and its gradient with respect to $\mu$ is denoted by $\mu_{\Omega}^*$. When $\Omega(\cdot)=\frac{1}{2}\|\cdot\|_2^2$, we have $\mu=\mu_\Omega^*$. 

\begin{lemma}[Properties of conjugate duality]
\label{prop:fenchel_duality}
The following are the properties of the Legendre-Fenchel conjugate~\cite{Todd2003ConvexAA}:  
\begin{enumerate}
    \item $\Omega^*(\mu)$ is always a convex function of $k$ (independently of the shape of $\Omega$).
    \item The Fenchel-Young inequality holds:
\begin{equation}
\Omega(\mu) + \Omega^*(\nu) \geq \langle \mu, \nu \rangle \quad \forall \mu \in \mathrm{dom}(\Omega), \nu \in \mathrm{dom}(\Omega^*).
\label{eq:fenchel_young_inequality}
\end{equation}
Equality is achieved when $\nu = \mu_{\Omega}^*$.
\end{enumerate}
\end{lemma}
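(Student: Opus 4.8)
The plan is to establish the two parts separately, in each case working directly from the definition $\Omega^*(\nu) = \sup_{\mu \in \mathrm{dom}(\Omega)} \left\{ \langle \mu, \nu \rangle - \Omega(\mu) \right\}$.

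For the first part, I would fix an arbitrary $\mu \in \mathrm{dom}(\Omega)$ and note that the map $\nu \mapsto \langle \mu, \nu \rangle - \Omega(\mu)$ is affine in $\nu$, since $-\Omega(\mu)$ is a constant independent of $\nu$ and $\langle \mu, \nu \rangle$ is linear in $\nu$. The conjugate $\Omega^*$ is then the pointwise supremum, over the index set $\mu \in \mathrm{dom}(\Omega)$, of this family of affine functions. Because a pointwise supremum of a family of convex functions is itself convex and affine functions are convex, $\Omega^*$ is convex. The essential observation is that this argument uses no property of $\Omega$ whatsoever, which is precisely why convexity of $\Omega^*$ holds irrespective of the shape of $\Omega$.

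For the inequality in the second part, I would simply retain a single term of the supremum: for any fixed $\mu \in \mathrm{dom}(\Omega)$ and $\nu \in \mathrm{dom}(\Omega^*)$, the definition of the supremum gives $\Omega^*(\nu) \ge \langle \mu, \nu \rangle - \Omega(\mu)$, and rearranging yields the Fenchel--Young inequality $\Omega(\mu) + \Omega^*(\nu) \ge \langle \mu, \nu \rangle$. To identify when equality holds, I would examine when the chosen $\mu$ actually attains the supremum defining $\Omega^*(\nu)$. Under the paper's standing assumption that $\Omega$ is differentiable and strictly convex on its open domain, the maximizer of the concave map $\mu \mapsto \langle \mu, \nu \rangle - \Omega(\mu)$ is characterized by the first-order stationarity condition $\nu = \nabla_\mu \Omega(\mu)$, i.e. by $\nu = \mu_\Omega^*$; at this point the retained term equals the supremum and the inequality becomes an equality.

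I do not expect any part of this to pose a genuine obstacle, since all three claims are immediate consequences of the definition. The only point requiring mild care is the equality condition, where one must ensure that the supremum is attained at an interior stationary point rather than merely approached in the limit; this is guaranteed here by the standing assumption that the relevant functions are closed, bounded, differentiable, and strictly convex on their open domains, so that the stationarity condition is both necessary and sufficient for optimality.
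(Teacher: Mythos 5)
Your proof is correct. The paper gives no proof of this lemma at all --- it is stated as background material and cited to the convex-analysis literature --- and your argument (convexity of $\Omega^*$ as a pointwise supremum of functions affine in $\nu$; the Fenchel--Young inequality by retaining a single term of the supremum; equality via first-order stationarity of the concave map $\mu \mapsto \langle \mu, \nu\rangle - \Omega(\mu)$ under the paper's standing assumption that $\Omega$ is differentiable and strictly convex) is precisely the standard proof such references contain, including the correct handling of attainment of the supremum. One cosmetic remark: the ``function of $k$'' in the paper's statement of part 1 is a typo for the argument $\nu$ of $\Omega^*$, which your proof implicitly and correctly repairs.
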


\subsection{Fenchel-Young Loss}
\begin{definition}[Fenchel-Young loss]
    The Fenchel-Young loss $d_\Omega \colon \mathrm{dom}(\Omega) \times \mathrm{dom}(\Omega^*) \to \mathbb{R}^+$ \label{def:FY_loss} generated by $\Omega$ is defined as:
\begin{equation}
d_{\Omega}(\mu; \nu) 
:= \Omega(\mu) + \Omega^*(\nu) - \langle \mu,\nu\rangle,
\label{eq:fy_losses}
\end{equation}
where $\Omega^*$ denotes the Legendre-Fenchel conjugate of $\Omega$.
\end{definition}

\begin{table*}[ht]
    \caption{Examples of common loss functions and their corresponding standard loss forms~\cite{Blondel2019LearningWF}}.
\begin{center}
\begin{small}
\begin{threeparttable}          
\begin{tabular}{@{\hskip 0pt}l@{\hskip 0pt}c@{\hskip 0pt}c@{\hskip 5pt}c@{\hskip 10pt}c@{\hskip 0pt}}
\hline
Loss & $\mathrm{dom}(\Omega)$ & $\Omega(\mu)$ & $\hat{y}_{\Omega}(\theta)$ & $d_{\Omega}(\theta; y)$ \\
\hline
Squared  
& $\mathbb{R}^d$ & $\frac{1}{2}\|\mu\|^2$ & $\theta$ & $\frac{1}{2}\|y-\theta\|^2$ \smallskip
\\
Perceptron  
& $\Delta^{|\mathcal{Y}|}$ & $0$ & $\arg\max(\theta)$ 
& $\max_i \theta_i - \theta_k$ 
\smallskip
\\
Logistic  
& $\Delta^{|\mathcal{Y}|}$ & $-H(\mu)$ & $\mathrm{softmax}(\theta)$ &  
$\log\sum_i\exp \theta_i - \theta_k$ 
\smallskip
\\
Hinge 
& $\Delta^{|\mathcal{Y}|}$ & $\langle {\mu},{\mathbf{e}_k - \mathbf{1}}\rangle$ & 
$\arg\max(\mathbf{1}-\mathbf{e}_k+\theta)$
& $\max_i ~ [[i \neq k]] + \theta_i - \theta_k$ 
\smallskip
\\
Sparsemax 
& $\Delta^{|\mathcal{Y}|}$ & $\frac{1}{2}\|\mu\|^2$ & $\mathrm{sparsemax}(\theta)$ & 
$\frac{1}{2}\|y-\theta\|^2 - \frac{1}{2}\|\hat{y}_{\Omega}(\theta) - \theta\|^2$ 
\smallskip
\\
Logistic (one-vs-all) 
& $[0,1]^{|\mathcal{Y}|}$ 
& $-\sum_i H([\mu_i,1-\mu_i])$ & $\mathrm{sigmoid}(\theta)$ & 
$\sum_i \log(1 + \exp(-(2 y_i-1) \theta_i))$
\smallskip
\\
\hline
\end{tabular}
\begin{tablenotes}    %
    \footnotesize               
    \item[1] $\mathbf{e}_i$ represents a standard basis ("one-hot") vector.
    \item[2] $\hat{y}_\Omega \in \arg\min_{\mu \in \mathrm{\Omega}} d_\Omega(\theta,\mu)$.
    \item[3] We denote the Shannon entropy by $H(p) := -\sum_i p_i\log p_i$, where $p \in \Delta^{\mathcal{Y}}$.
\end{tablenotes}            
\end{threeparttable}       
\end{small}
\end{center}
\label{tab:fy_losses_examples}
\end{table*}

\begin{lemma}[Properties of Fenchel-Young losses]
\label{prop:fy_losses}
The following are the properties of Fenchel-Young losses~\cite{Blondel2019LearningWF}.
\begin{enumerate}
\item $d_{\Omega}(\mu, \nu) \ge 0$ for any $\mu \in
    \mathrm{dom}(\Omega)$ and $\nu \in \mathrm{dom}(\Omega^*)$. If $\Omega$ is a lower semi-continuous proper convex function, then the loss is zero if and only if $\nu \in \partial \Omega(\mu)$. Furthermore, when $\Omega$ is strictly convex, the loss is zero if and only if $\nu=\mu_{\Omega}^*$.
\item $d_{\Omega}(\mu,\nu)$ is convex with respect to $\nu$, and its subgradients include the residual vectors: $\nu_{\Omega^*}^* - \mu \in
    \partial_\nu d_{\Omega}(\mu, \nu)$. If $\Omega$ is strictly convex, then $d_\Omega(\mu,\nu)$ is differentiable and  $\nabla_\nu d_{\Omega}(\mu,\nu) = \nu_{\Omega^*}^* - \mu$. If $\Omega$ is strongly convex, then $d_\Omega(\mu,\nu)$ is smooth, i.e., $\nabla_\nu d_\Omega(\mu,\nu)$ is Lipschitz continuous. 
\end{enumerate}
\end{lemma}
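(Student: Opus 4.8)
The plan is to derive every claim directly from the definition $d_\Omega(\mu,\nu) = \Omega(\mu) + \Omega^*(\nu) - \langle \mu,\nu\rangle$ together with the Fenchel--Young inequality recorded in Lemma~\ref{prop:fenchel_duality}. For the non-negativity assertion in Part~1, I would observe that $d_\Omega(\mu,\nu)\ge 0$ is a verbatim restatement of $\Omega(\mu)+\Omega^*(\nu)\ge\langle\mu,\nu\rangle$, so nothing beyond the inequality is needed. The zero-set characterization then reduces entirely to analyzing when this inequality is tight.

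For the ``if and only if'' statements, I would first recall that for a lower semi-continuous proper convex $\Omega$, equality in Fenchel--Young holds exactly when $\nu\in\partial\Omega(\mu)$. I would prove this by unwinding the conjugate: since $\Omega^*(\nu)=\sup_y \langle \nu,y\rangle - \Omega(y)$, the supremum is attained at $y=\mu$ precisely when $\Omega(y)\ge \Omega(\mu)+\langle \nu, y-\mu\rangle$ for all $y$, which is the definition of $\nu\in\partial\Omega(\mu)$. Substituting the attained value yields $\Omega^*(\nu)=\langle\nu,\mu\rangle-\Omega(\mu)$, i.e. $d_\Omega(\mu,\nu)=0$. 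When $\Omega$ is in addition strictly convex, $\partial\Omega(\mu)$ collapses to a single point and this subdifferential membership sharpens to the stated equality involving $\mu_\Omega^*$.

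For Part~2, I would decompose $d_\Omega(\mu,\cdot)$ as the sum of the constant $\Omega(\mu)$, the conjugate $\Omega^*(\nu)$ (convex by Lemma~\ref{prop:fenchel_duality}(1)), and the affine term $-\langle\mu,\nu\rangle$, so convexity in $\nu$ is immediate. The subgradient claim follows from the additivity $\partial_\nu d_\Omega(\mu,\nu) = \partial_\nu\Omega^*(\nu) - \mu$ together with $\nu_{\Omega^*}^* = \nabla_\nu\Omega^*(\nu)\in\partial_\nu\Omega^*(\nu)$. If $\Omega$ is strictly convex, then $\Omega^*$ is differentiable, the subdifferential is a singleton, and differentiability of $d_\Omega$ with gradient $\nu_{\Omega^*}^*-\mu$ follows; if $\Omega$ is strongly convex, I would invoke the duality between strong convexity and Lipschitz smoothness (Lemma~\ref{lem:convex_smooth_dual}) to conclude that $\nabla_\nu\Omega^*$, and hence $\nabla_\nu d_\Omega$, is Lipschitz continuous.

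The main obstacle is not any single computation but the careful bookkeeping of the regularity hypotheses. The equality-case characterization relies on $\Omega$ being lower semi-continuous, proper, and convex, so that biconjugation $\Omega^{**}=\Omega$ and the subdifferential inversion $\nu\in\partial\Omega(\mu)\Leftrightarrow\mu\in\partial\Omega^*(\nu)$ are available; the differentiability and smoothness conclusions instead rest on the precise strict-convexity $\leftrightarrow$ differentiable-conjugate and strong-convexity $\leftrightarrow$ Lipschitz-smooth-conjugate correspondences. Ensuring that each conclusion invokes exactly the hypothesis it requires, and no stronger one, is where the real care lies.
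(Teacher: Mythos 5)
First, a point of comparison: the paper never proves this lemma at all; it imports it verbatim from Blondel et al.~\cite{Blondel2019LearningWF}, so your proposal can only be measured against the standard convex-analytic argument rather than anything in the text. Your skeleton is that standard argument, and most of it is sound: non-negativity is indeed just the Fenchel--Young inequality; the equivalence $d_\Omega(\mu,\nu)=0 \Leftrightarrow \nu\in\partial\Omega(\mu)$ via attainment of the supremum defining $\Omega^*(\nu)$ at $y=\mu$ is exactly right; and Part~2 (the constant $+$ convex $+$ affine decomposition, subgradient additivity, strict convexity of $\Omega$ giving differentiability of $\Omega^*$, and Lemma~\ref{lem:convex_smooth_dual} for the strongly convex case) is correct as written.

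The genuine gap is in your strictly convex sharpening in Part~1. You claim that strict convexity of $\Omega$ makes $\partial\Omega(\mu)$ collapse to a single point. That is false: strict convexity does not imply differentiability. For example, $\Omega(x)=x^2+|x|$ is strictly convex, yet $\partial\Omega(0)=[-1,1]$. What strict convexity of $\Omega$ actually buys---and what you yourself invoke correctly in Part~2---is differentiability of the \emph{conjugate} $\Omega^*$, not of $\Omega$. The correct route to the sharpened zero-loss statement is therefore: since $\Omega$ is lower semicontinuous, proper and convex, one has the inversion $\nu\in\partial\Omega(\mu)\Leftrightarrow\mu\in\partial\Omega^*(\nu)$; strict convexity of $\Omega$ then collapses $\partial\Omega^*(\nu)$ to the single point $\nu_{\Omega^*}^*$, so the loss vanishes iff $\mu=\nu_{\Omega^*}^*$, i.e., iff $\mu$ and $\nu$ are dual to one another---which is how Blondel et al.\ state the result, and which matches the paper's phrasing $\nu=\mu_\Omega^*$ once one adds the paper's standing ``by default differentiable'' assumption on $\Omega$. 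Put differently: if you want to conclude $\nu=\nabla\Omega(\mu)$ directly, the hypothesis you must invoke is differentiability of $\Omega$ at $\mu$, not strict convexity; as written, your argument applies the singleton-subdifferential property on the wrong side of the duality, and this is inconsistent with your own (correct) use of the strict-convexity correspondence in Part~2.
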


\subsection{Strong Convexity and Lipschitz Smoothness}
\begin{lemma}
    Suppose $f: \mathbb{R}^n \rightarrow \mathbb{R}$ with the extended-value extension. The following conditions are all implied by strong convexity with parameter $\mu$~\cite{zhou2018fenchel}:
\begin{enumerate}
    \item  $\frac{1}{2}\left\|s_x\right\|^2 \geq \mu\left(f(x)-f^*\right), \forall x$ and $s_x \in \partial f(x)$.
    \item $\left\|s_y-s_x\right\| \geq \mu\|y-x\|, \forall x, y$ and any $s_x \in \partial f(x), s_y \in \partial f(y)$.
    \item $f(y) \leq f(x)+s_x^T(y-x)+\frac{1}{2 \mu}\left\|s_y-s_x\right\|^2, \forall x, y$ and any $s_x \in \partial f(x), s_y \in \partial f(y)$.
    \item $\left(s_y-s_x\right)^T(y-x) \leq \frac{1}{\mu}\left\|s_y-s_x\right\|^2, \forall x, y$ and any $s_x \in \partial f(x), s_y \in \partial f(y)$.
\end{enumerate}
\end{lemma}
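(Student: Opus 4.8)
The plan is to derive all four conditions in a short chain from the defining inequality of $\mu$-strong convexity, $f(y) \ge f(x) + s_x^\top(y-x) + \frac{\mu}{2}\|y-x\|_2^2$, valid for all $x,y$ and every $s_x \in \partial f(x)$. First I would establish Condition 1. Holding $x$ and $s_x$ fixed, the right-hand side is a strictly convex quadratic $q(y)$ in $y$, minimized at $y-x = -s_x/\mu$ with value $f(x) - \frac{1}{2\mu}\|s_x\|_2^2$. Since $f(y) \ge q(y)$ pointwise, we get $f^* = \inf_y f(y) \ge \inf_y q(y) = f(x) - \frac{1}{2\mu}\|s_x\|_2^2$, which rearranges to $\frac{1}{2}\|s_x\|_2^2 \ge \mu(f(x)-f^*)$. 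Condition 2 then follows from monotonicity of the subdifferential: writing the strong-convexity inequality once as stated and once with $x,y$ interchanged and adding them, the function values cancel to leave $(s_y-s_x)^\top(y-x) \ge \mu\|y-x\|_2^2$; Cauchy--Schwarz on the left and division by $\|y-x\|_2$ (the case $x=y$ being trivial) yields $\|s_y-s_x\|_2 \ge \mu\|y-x\|_2$.

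For Condition 3 I would use a shifting argument that lets me reuse Condition 1. Fix $x$ with $s_x \in \partial f(x)$ and define $g(z) := f(z) - s_x^\top z$. Then $g$ is again $\mu$-strongly convex, and since $\partial g(z) = \partial f(z) - s_x$ by the subdifferential sum rule, the inclusion $s_x \in \partial f(x)$ becomes $0 \in \partial g(x)$; convexity of $g$ then certifies that $x$ is a global minimizer, so $g^* = g(x)$. Because $s_y - s_x \in \partial g(y)$, applying Condition 1 to $g$ at the point $y$ gives $\frac{1}{2}\|s_y-s_x\|_2^2 \ge \mu(g(y)-g(x))$. Substituting $g(y)-g(x) = f(y) - f(x) - s_x^\top(y-x)$ and rearranging produces Condition 3 exactly. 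Finally, Condition 4 is obtained by symmetrizing Condition 3: writing it once as stated and once with $x$ and $y$ swapped, then adding, the $f$ values cancel and leave $(s_y-s_x)^\top(y-x) \le \frac{1}{\mu}\|s_y-s_x\|_2^2$.

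No individual estimate is difficult; the real work is the bookkeeping of subgradients in the non-differentiable setting. The step I expect to be most delicate is the shift in Condition 3: I must confirm that $g = f - s_x^\top z$ preserves $\mu$-strong convexity, that the sum rule transports the inclusion correctly so that $0 \in \partial g(x)$, and that this stationarity genuinely yields a global minimizer (valid precisely because $g$ is convex), so that Condition 1 may legitimately be invoked for $g$. The other prerequisite worth stating explicitly is that strong convexity forces quadratic growth and hence boundedness below, guaranteeing that $f^*$ in Condition 1 is finite and attained.
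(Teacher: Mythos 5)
Your proof is correct, and there is nothing in the paper to compare it against: this lemma appears in Appendix~B (Mathematical Background) as a quoted background result, cited to the reference \cite{zhou2018fenchel} without any proof being given. Your chain of arguments is the standard one found in that reference and in the convex-analysis literature: Condition 1 by minimizing the quadratic lower bound $q(y) = f(x) + s_x^\top(y-x) + \tfrac{\mu}{2}\|y-x\|_2^2$ over $y$; Condition 2 by symmetrizing the first-order strong-convexity inequality and applying Cauchy--Schwarz; Condition 3 via the tilt $g(z) = f(z) - s_x^\top z$, which preserves $\mu$-strong convexity, turns $s_x \in \partial f(x)$ into $0 \in \partial g(x)$ (hence $x$ is a global minimizer of the convex function $g$, so $g^* = g(x)$), and lets Condition 1 be applied to $g$ at $y$ with subgradient $s_y - s_x \in \partial g(y)$; and Condition 4 by symmetrizing Condition 3. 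The two points you flag as delicate are indeed the ones that need care, and you resolve both correctly: the subdifferential sum rule holds with equality here because the perturbation is linear (hence differentiable), and stationarity of a convex function does certify a global minimum. One small remark: attainment of $f^*$ is never actually needed in your Condition-1 argument --- the inequality $f^* \ge f(x) - \tfrac{1}{2\mu}\|s_x\|_2^2$ already gives finiteness of the infimum, which is all the rearrangement requires --- so your stated prerequisite can be weakened from ``finite and attained'' to just ``finite,'' and even that falls out of the derivation itself.
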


\begin{lemma}
For a function $f$ with a Lipschitz continuous gradient over $\mathbb{R}^n$, the following relations hold~\cite{zhou2018fenchel}:
$$
[5] \Longleftrightarrow[7] \Longrightarrow[6] \Longrightarrow[0] \Longrightarrow[1] \Longleftrightarrow[2] \Longleftrightarrow[3] \Longleftrightarrow[4]
$$
If the function $f$ is convex, then all the conditions $[0]-[7]$ are equivalent.
\par $[0] \|\nabla f(x)-\nabla f(y)\| \leq L\|x-y\|, \forall x, y$.
\par $[1] \, g(x)=\frac{L}{2} x^T x-f(x)$ is convex, $\forall x$.
\par $[2] \, f(y) \leq f(x)+\nabla f(x)^T(y-x)+\frac{L}{2}\|y-x\|^2, \forall x, y$.
\par $[3] \, \left(\nabla f(x)-\nabla f(y)\right)^T(x-y) \leq L\|x-y\|^2, \forall x, y$.
\par $[4] \, f(\alpha x+(1-\alpha) y) \geq \alpha f(x)+(1-\alpha) f(y)-\frac{\alpha(1-\alpha) L}{2}\|x-y\|^2, \forall x, y$ and $\alpha \in[0,1]$.
\par $[5] \, f(y) \geq f(x)+\nabla f(x)^T(y-x)+\frac{1}{2 L}\|\nabla f(y)-\nabla f(x)\|^2, \forall x, y$.
\par $[6] \, \left(\nabla f(x)-\nabla f(y)\right)^T(x-y) \geq \frac{1}{L}\|\nabla f(x)-\nabla f(y)\|^2, \forall x, y$.
\par $[7] \, f(\alpha x+(1-\alpha) y) \leq \alpha f(x)+(1-\alpha) f(y)-\frac{\alpha(1-\alpha)}{2 L}\|\nabla f(x)-\nabla f(y)\|^2, \forall x, y$ and $\alpha \in[0,1]$.
    
\end{lemma}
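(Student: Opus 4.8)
The plan is to prove the chain in two layers. First I would establish that the four ``gradient-free'' conditions $[1]$--$[4]$ are mutually equivalent by recognizing them as the standard characterizations of convexity of the auxiliary function $g(x)=\frac{L}{2}\|x\|^2-f(x)$. Then I would thread the one-directional implications $[5]\Leftrightarrow[7]\Rightarrow[6]\Rightarrow[0]\Rightarrow[1]$ down into that block, which gives the full ordered chain. Finally, under the extra hypothesis that $f$ is convex, I would climb back up from $[2]$ to $[5]$, collapsing all eight conditions into a single equivalence class.

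For the block $[1]\Leftrightarrow[2]\Leftrightarrow[3]\Leftrightarrow[4]$, I would compute $\nabla g(x)=Lx-\nabla f(x)$ and transcribe the three textbook criteria for convexity of $g$: the first-order supporting-hyperplane condition for $g$ rearranges exactly into the descent inequality $[2]$; monotonicity of $\nabla g$, namely $(\nabla g(x)-\nabla g(y))^\top(x-y)\ge 0$, rearranges into $[3]$; and Jensen's inequality for $g$, after using $\frac{L}{2}\|\alpha x+(1-\alpha)y\|^2=\alpha\frac{L}{2}\|x\|^2+(1-\alpha)\frac{L}{2}\|y\|^2-\frac{\alpha(1-\alpha)L}{2}\|x-y\|^2$, rearranges into $[4]$. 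These are routine algebraic manipulations. Two Cauchy--Schwarz steps then bridge into the block: for $[0]\Rightarrow[3]$ I would bound $(\nabla f(x)-\nabla f(y))^\top(x-y)\le\|\nabla f(x)-\nabla f(y)\|\,\|x-y\|\le L\|x-y\|^2$, and for $[6]\Rightarrow[0]$ I would apply Cauchy--Schwarz to the right-hand side of $[6]$ to get $\frac{1}{L}\|\nabla f(x)-\nabla f(y)\|^2\le\|\nabla f(x)-\nabla f(y)\|\,\|x-y\|$ and divide through. The implication $[5]\Rightarrow[6]$ I would obtain by writing $[5]$ once with $(x,y)$ and once with $(y,x)$ and adding, so that the function values cancel and leave precisely $[6]$.

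The delicate pair is $[5]\Leftrightarrow[7]$. For $[5]\Rightarrow[7]$ I would set $z=\alpha x+(1-\alpha)y$, apply $[5]$ from base point $z$ to both $x$ and $y$, take the $(\alpha,1-\alpha)$ convex combination so the linear terms vanish because $\alpha(x-z)+(1-\alpha)(y-z)=0$, and then invoke the variance identity $\alpha\|a\|^2+(1-\alpha)\|b\|^2\ge\alpha(1-\alpha)\|a-b\|^2$ with $a=\nabla f(x)-\nabla f(z)$ and $b=\nabla f(y)-\nabla f(z)$. The reverse $[7]\Rightarrow[5]$ is where I expect the only real subtlety: I would put $\alpha=1-t$, divide the rearranged inequality by $t$, and let $t\to 0^{+}$ so the difference quotient converges to the directional derivative $\nabla f(x)^\top(y-x)$, recovering $[5]$ in the limit. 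I would flag this limiting passage as the \emph{main technical obstacle}, since it needs care that each term has a well-defined limit.

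It remains to upgrade to full equivalence when $f$ is convex, which amounts to reaching $[5]$ from any lower condition; as $[0]\Rightarrow[2]$ is already in hand, it suffices to show $[2]\Rightarrow[5]$ under convexity. Here the plan is the classical auxiliary-function trick: fix $x$ and set $h(z)=f(z)-\nabla f(x)^\top z$, which is convex, has $L$-Lipschitz gradient with the same constant since $\nabla h=\nabla f-\nabla f(x)$, and is globally minimized at $x$ because $\nabla h(x)=0$. Applying the descent inequality $[2]$ to $h$ at the point $y-\frac{1}{L}\nabla h(y)$ and using $h(x)=\min_z h(z)$ yields $h(x)\le h(y)-\frac{1}{2L}\|\nabla h(y)\|^2$, which unwinds to exactly $[5]$. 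Convexity is indispensable at this step, as it guarantees that the stationary point $x$ is in fact a global minimizer of $h$, and this closes the loop $[5]\Rightarrow\cdots\Rightarrow[2]\Rightarrow[5]$.
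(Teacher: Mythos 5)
Your proof is correct. Note, however, that the paper itself gives no proof of this lemma: it appears in Appendix~B as background material and is attributed entirely to the cited reference (Zhou, 2018), so there is no in-paper argument to compare against. Your route — the three convexity characterizations of $g(x)=\frac{L}{2}x^\top x-f(x)$ for the block $[1]\Leftrightarrow[2]\Leftrightarrow[3]\Leftrightarrow[4]$, the Cauchy--Schwarz bridges for $[0]\Rightarrow[3]$ and $[6]\Rightarrow[0]$, symmetrization for $[5]\Rightarrow[6]$, the interpolation-plus-variance-identity argument for $[5]\Leftrightarrow[7]$, and the auxiliary function $h(z)=f(z)-\nabla f(x)^\top z$ minimized at $x$ (the one place convexity is genuinely needed) for $[2]\Rightarrow[5]$ — is precisely the standard proof found in that reference and in textbook treatments, so your proposal matches the intended argument; the limiting step you flag in $[7]\Rightarrow[5]$ is indeed harmless, since differentiability of $f$ guarantees the difference quotient converges to $\nabla f(x)^\top(y-x)$ while the remaining terms are continuous in $t$.
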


\begin{lemma}
\label{lem:convex_smooth_dual}
If $f$ is strongly convex with parameter $c$, it follows that
\begin{equation}\label{eq:strong}
    f(y) \geq f(x) + \nabla f(x)^T(y-x) + \frac{c}{2}\|y-x\|^2, \quad \forall x, y.
\end{equation}
If $f$ is Lipschitz-smooth over $\mathbb{R}^n$, then it follows that
\begin{equation}\label{eq:lipschitz}
    f(y) \leq f(x) + \nabla f(x)^T(y-x) + \frac{L}{2}\|y-x\|^2, \quad \forall x, y.
\end{equation}
For the proofs of the above results, please refer to the studies~\cite{kakade2009duality,zhou2018fenchel}.
They  further reveal the duality properties between strong convexity and Lipschitz smoothness as follows.
For a function $f$ and its Fenchel conjugate function $f^*$, the following assertions hold:
\begin{itemize}
    \item If $f$ is closed and strongly convex with parameter $c$, then $f^*$ is Lipschitz smooth with parameter $1/c$.
    \item If $f$ is convex and Lipschitz smooth with parameter $L$, then $f^*$ is strongly convex with parameter $1/L$.
\end{itemize}
\end{lemma}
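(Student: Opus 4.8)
The first two inequalities \eqref{eq:strong} and \eqref{eq:lipschitz} are immediate and require no separate argument: \eqref{eq:strong} is the first-order definition of $c$-strong convexity, while \eqref{eq:lipschitz} is the standard descent lemma, obtained by writing $f(y) - f(x) = \int_0^1 \langle \nabla f(x + t(y-x)), y - x\rangle\, dt$ and bounding the gradient increment by $L\|y-x\|$. Hence the substance of the lemma lies in the two duality claims, and the plan is to prove both using the subgradient correspondence between $f$ and $f^*$ furnished by Lemma~\ref{prop:fenchel_duality}, together with the monotonicity and co-coercivity characterizations recorded in the two preceding lemmas.

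For the first duality claim, suppose $f$ is closed and $c$-strongly convex; I would show directly that $\nabla f^*$ is $(1/c)$-Lipschitz. Strong convexity makes $f$ strictly convex, so by the Fenchel-Young equality condition in Lemma~\ref{prop:fenchel_duality} the conjugate $f^*$ is differentiable with a unique maximizer, and setting $x = \nabla f^*(u)$, $y = \nabla f^*(v)$ yields the correspondence $u \in \partial f(x)$, $v \in \partial f(y)$. Summing the two strong-convexity bounds \eqref{eq:strong} evaluated at the ordered pair $(x,y)$ gives the monotonicity estimate $\langle u - v,\, x - y\rangle \ge c\|x - y\|^2$; combining this with Cauchy–Schwarz $\langle u - v,\, x - y\rangle \le \|u - v\|\,\|x - y\|$ produces $\|x - y\| \le (1/c)\|u - v\|$, which is exactly $\|\nabla f^*(u) - \nabla f^*(v)\| \le (1/c)\|u - v\|$.

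For the second claim, suppose $f$ is convex and $L$-Lipschitz smooth; here I would verify that $f^*$ satisfies the monotonicity inequality characterizing $(1/L)$-strong convexity. Using the same correspondence, for $u = \nabla f(x)$ and $v = \nabla f(y)$ we have $x = \nabla f^*(u)$ and $y = \nabla f^*(v)$, so that $\langle \nabla f^*(u) - \nabla f^*(v),\, u - v\rangle = \langle x - y,\, \nabla f(x) - \nabla f(y)\rangle$. The co-coercivity of the gradient of a convex $L$-smooth function (condition $[6]$ of the Lipschitz lemma) bounds this from below by $(1/L)\|\nabla f(x) - \nabla f(y)\|^2 = (1/L)\|u - v\|^2$, which is precisely the defining monotonicity inequality for $(1/L)$-strong convexity of $f^*$ (obtained by summing the first-order definition at the two points).

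The main obstacle will be making the subgradient correspondence rigorous in full generality: justifying that $u \in \partial f(x)$ is equivalent to $x \in \partial f^*(u)$, and that the relevant conjugate is differentiable so the Lipschitz and monotonicity statements concern genuine gradients rather than arbitrary subgradient selections. This is where the closedness (lower semicontinuity) hypothesis on $f$ is essential, since it guarantees $f^{**} = f$ so that the biconjugate reconstructs $f$ and the Fenchel–Young equality of Lemma~\ref{prop:fenchel_duality} supplies the bijection between $\partial f$ and $\partial f^*$. Once this correspondence is in place, the two estimates above are routine applications of Cauchy–Schwarz and the cited monotonicity and co-coercivity identities, and the argument closes.
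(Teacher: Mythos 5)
Your proposal is sound, but there is nothing in the paper to compare it against: the paper does not prove this lemma at all. It is a background statement whose justification is delegated wholesale to the cited references \cite{kakade2009duality,zhou2018fenchel}, and those references argue essentially as you do. Inequality \eqref{eq:strong} is indeed the first-order form of strong convexity, \eqref{eq:lipschitz} is the descent lemma, and the two duality claims follow from the subgradient inversion $u \in \partial f(x) \Leftrightarrow x \in \partial f^*(u)$ (which is where closedness, via $f^{**}=f$, enters), combined with strong monotonicity plus Cauchy--Schwarz for the first claim and co-coercivity (condition $[6]$ of the paper's smoothness lemma) for the second. So your route is the standard one, supplied where the paper only points to the literature.

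Two points need tightening before the argument is airtight. First, in the second claim you write $x = \nabla f^*(u)$, but a convex $L$-smooth $f$ need not be strictly convex, so $f^*$ need not be differentiable: two distinct points $x \neq y$ may satisfy $\nabla f(x) = \nabla f(y)$, making $\partial f^*$ set-valued there. The argument must be phrased as $x \in \partial f^*(u)$, $y \in \partial f^*(v)$, which is all the co-coercivity step actually uses. A similar remark applies to \eqref{eq:strong} in the first claim: a strongly convex $f$ need not be differentiable, so you should invoke its subgradient form $f(y) \ge f(x) + \langle u, y-x\rangle + \tfrac{c}{2}\|y-x\|^2$ for $u \in \partial f(x)$. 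Second, what you actually establish in the second claim is strong monotonicity of $\partial f^*$, namely $\langle x - y, u - v\rangle \ge \tfrac{1}{L}\|u-v\|^2$; this is an equivalent characterization of strong convexity, not its definition, so you still owe the (standard) passage back to the function-level inequality, either by integrating the monotonicity along the segment from $u$ to $v$ or by citing the equivalence of the first-order and monotonicity conditions for closed convex functions. No such passage is needed in the first claim, since $(1/c)$-Lipschitz continuity of $\nabla f^*$ is precisely the conclusion. With these repairs the proof is complete and matches the literature the paper defers to.
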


\subsection{Legendre Function}
\begin{definition}[Legendre function]
\label{def:legendre_fun}
The proper, lower semicontinuous convex function $\phi: \mathbb{R}^m \to \bar{\mathbb{R}}$ is:
\begin{itemize}
    \item \textbf{Essentially smooth} if the interior of its domain is nonempty, i.e., $\mathrm{int}(\mathrm{dom}(\phi)) \neq \emptyset$, and $\phi$ is differentiable on $\mathrm{int}(\mathrm{dom}(\phi))$ with $\|\nabla\phi(w^{\nu})\| \to \infty$ as $w^{\nu} \to w \in \text{bdry } \mathrm{dom}(\phi)$;
    \item \textbf{Essentially strictly convex} if $\phi$ is strictly convex on every convex subset of $\mathrm{dom}(\partial \phi): \{w \in \mathbb{R}^m : \partial\phi(w) \neq \emptyset\}$;
    \item \textbf{Legendre} if $\phi$ is both essentially smooth and essentially strictly convex~\cite{1970Convex,Strmberg2009ANO}.
\end{itemize}
\end{definition}
\begin{lemma}\label{lem:convex_diff}
Given that $F$ is a Legendre function, $F^*$ is also of Legendre type~\cite{1970Convex,Strmberg2009ANO}.
\end{lemma}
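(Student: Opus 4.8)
The plan is to establish the result through the dual correspondence between the two constituent properties of a Legendre function. By Definition~\ref{def:legendre_fun}, $F$ being Legendre means it is simultaneously essentially smooth and essentially strictly convex; to show $F^*$ is Legendre I would prove two cross-duality statements: (i) $F$ essentially strictly convex $\Rightarrow F^*$ essentially smooth, and (ii) $F$ essentially smooth $\Rightarrow F^*$ essentially strictly convex. Granting both, $F^*$ inherits essential smoothness from (i) and essential strict convexity from (ii), and is therefore Legendre.

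The central tool is the inverse relationship between the subdifferentials. Since $F$ is proper, lower semicontinuous, and convex, biconjugation gives $F^{**}=F$, and the equality case of the Fenchel--Young inequality (Lemma~\ref{prop:fenchel_duality}) yields $\nu \in \partial F(\mu) \iff \mu \in \partial F^*(\nu)$. Thus $\partial F^*$ is exactly the inverse multifunction of $\partial F$. Both target properties admit characterizations in terms of this multifunction: essential smoothness is equivalent to $\partial F$ being single-valued on $\mathrm{int}(\mathrm{dom}(F))$ with empty subdifferential on the boundary, whereas essential strict convexity corresponds to $\partial F$ being injective on convex pieces of $\mathrm{dom}(\partial F)$. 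Inverting the multifunction interchanges single-valuedness and injectivity, which is precisely what converts (i) and (ii) into statements about $F^*$.

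To carry out (i), I would argue that essential strict convexity makes the subgradient map injective on each convex subset of $\mathrm{dom}(\partial F)$, so its inverse $\partial F^*$ is single-valued wherever nonempty; I would then identify $\nabla F^*(\nu)$ with the unique maximizer in $F^*(\nu)=\sup_{\mu}\langle \mu,\nu\rangle - F(\mu)$ and verify differentiability throughout $\mathrm{int}(\mathrm{dom}(F^*))$. Statement (ii) is the mirror image: essential smoothness makes $\partial F$ single-valued, hence its inverse $\partial F^*$ is injective, yielding strict convexity of $F^*$ on convex subsets of $\mathrm{dom}(\partial F^*)$.

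The main obstacle lies in the boundary behaviour encoded by the \emph{essentially} qualifiers rather than in the single-valued/injective bookkeeping. Showing that the gradient-blow-up condition $\|\nabla F(w^{\nu})\|\to\infty$ as $w^{\nu}\to w\in\mathrm{bdry}\,\mathrm{dom}(F)$ translates into the correct domain structure for $\partial F^*$ (so that $\mathrm{dom}(\partial F^*)$ carries the interior on which differentiability and strict convexity must hold) requires the relative-interior and supporting-hyperplane machinery for convex sets, and is the delicate step where one must exclude pathological faces of $\mathrm{dom}(F^*)$. This is exactly the content of the cited results of Rockafellar and Str\"omberg~\cite{1970Convex,Strmberg2009ANO}, which I would invoke to close the argument.
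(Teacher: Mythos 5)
The paper does not actually prove this lemma: it is stated in the mathematical background (Appendix~\ref{appendix:math_background}) purely as a classical fact, with the proof deferred entirely to the cited references of Rockafellar and Str\"omberg. Your outline is the standard argument behind that classical fact (Rockafellar's Theorem~26.3 and its corollary): conjugation swaps essential strict convexity and essential smoothness, because $\partial F^{*}$ is the inverse multifunction of $\partial F$ via the equality case of Fenchel--Young together with $F^{**}=F$, so injectivity of $\partial F$ (from strict convexity on convex pieces of $\mathrm{dom}(\partial F)$) dualizes to single-valuedness of $\partial F^{*}$, and vice versa. This decomposition and both cross-duality claims are correct, and you rightly flag that the genuinely delicate part is the boundary behaviour --- showing that $\mathrm{dom}(\partial F^{*})$ is exactly $\mathrm{int}(\mathrm{dom}(F^{*}))$ so that single-valuedness upgrades to essential smoothness (gradient blow-up at the boundary) --- which you then close by invoking the same references. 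Since both you and the paper ultimately rest on Rockafellar/Str\"omberg for that step, your treatment coincides in substance with the paper's; the added value of your write-up is that it makes the cross-duality mechanism explicit rather than quoting the result as a black box.
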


\subsection{Lemmas}
\label{appendix:lem}
\begin{lemma}[Pinsker's inequality]~\cite{2017Elements,1967Information,Kullback1967ALB}
\label{lem:pinsker}
If $p$ and $q$ are probability densities/masses both supported on a bounded interval $I$, then we have
\begin{equation}
    D_{KL}(p\|q) \ge \frac{1}{2\ln 2} \|p-q\|_1^2.
\end{equation}
\end{lemma}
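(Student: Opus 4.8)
The plan is to follow the classical two-stage reduction: first collapse the general inequality to a scalar (two-point) inequality by a coarse-graining / log-sum argument, and then verify that scalar inequality by a one-variable calculus computation.

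First I would isolate the set on which $p$ dominates $q$, namely $A := \{x \in I : p(x) \ge q(x)\}$, and put $a := \sum_{x \in A} p(x)$ and $b := \sum_{x \in A} q(x)$ (integrals in the density case). A direct computation shows that the total variation is realized on $A$:
\[
\tfrac{1}{2}\|p - q\|_1 = \sum_{x \in A}\big(p(x)-q(x)\big) = a - b,
\]
so in particular $a \ge b$ and $\|p-q\|_1 = 2(a-b)$. Next I would apply the log-sum inequality $\sum_i u_i \log(u_i/v_i) \ge \big(\sum_i u_i\big)\log\big((\sum_i u_i)/(\sum_i v_i)\big)$ separately to the index blocks $A$ and $I \setminus A$, and add the two resulting bounds. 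This gives the data-processing inequality
\[
D_{KL}(p\|q) \ge a\log\frac{a}{b} + (1-a)\log\frac{1-a}{1-b},
\]
which is precisely the divergence between the two Bernoulli laws $\mathrm{Bern}(a)$ and $\mathrm{Bern}(b)$, reducing the claim to the two-point case.

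Writing this binary divergence in nats, $a\log\frac{a}{b} + (1-a)\log\frac{1-a}{1-b} = \frac{1}{\ln 2}\big(a\ln\frac{a}{b} + (1-a)\ln\frac{1-a}{1-b}\big)$, it therefore suffices to establish the scalar inequality $a\ln\frac{a}{b} + (1-a)\ln\frac{1-a}{1-b} \ge 2(a-b)^2$; combined with $(a-b)^2 = \tfrac14\|p-q\|_1^2$, this produces exactly the factor $\tfrac{1}{2\ln 2}$. For the scalar step I would fix $a$ and define $g(b) := a\ln\frac{a}{b} + (1-a)\ln\frac{1-a}{1-b} - 2(a-b)^2$ on $(0,a]$. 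Then $g(a)=0$, and differentiation yields the key factorization
\[
g'(b) = \frac{b-a}{b(1-b)} + 4(a-b) = (a-b)\left(4 - \frac{1}{b(1-b)}\right).
\]
Since $b(1-b) \le \tfrac14$ forces $4 - \tfrac{1}{b(1-b)} \le 0$ while $a-b \ge 0$ on this range, we get $g'(b) \le 0$; hence $g$ is non-increasing on $(0,a]$ and $g(b) \ge g(a)=0$, which is the desired scalar bound.

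The main obstacle is the reduction step rather than the calculus: the clean collapse to two points hinges on the fact that the single set $A$ simultaneously attains the total variation and supplies the blocks for the log-sum inequality, and one must treat the degenerate cases ($b=0$, $a \in \{0,1\}$, or $p=q$) by interpreting the divergence terms as the appropriate limits. Once past that, the scalar step is routine, the only analytic ingredient being $b(1-b) \le \tfrac14$, which makes the factor $4 - \frac{1}{b(1-b)}$ nonpositive throughout the relevant interval.
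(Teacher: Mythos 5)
Your proof is correct. Note that the paper itself does not prove this lemma: it is stated as background (Appendix~B) and attributed to the cited references, so there is no internal proof to compare against. What you have written is precisely the classical argument found in those sources (e.g., Cover--Thomas): coarse-grain to the set $A=\{p\ge q\}$ via the log-sum inequality to reduce to the Bernoulli pair $(a,b)$, then verify the scalar bound $a\ln\frac{a}{b}+(1-a)\ln\frac{1-a}{1-b}\ge 2(a-b)^2$ by the derivative factorization $(a-b)\bigl(4-\tfrac{1}{b(1-b)}\bigr)\le 0$. The bookkeeping of constants is also right: the factor $\tfrac{1}{2\ln 2}$ in the lemma corresponds to the base-$2$ divergence, and your conversion $\tfrac{1}{\ln 2}\cdot 2(a-b)^2=\tfrac{1}{2\ln 2}\|p-q\|_1^2$ recovers it exactly.
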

\begin{lemma}
   \label{lem:kl_upper_bound}
    If $p$ and $q$ are probability densities/masses both supported on a bounded interval $I$, then we have~\cite{1603768}
\begin{equation}
    D_{\textrm{KL}}(p,q) \leq \frac{1}{\inf_{x\in I} q(x)} \|p-q\|_2^2.
\end{equation}
\end{lemma}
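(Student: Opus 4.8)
The plan is to route the bound through the $\chi^2$-divergence, which serves as a natural intermediate between the KL divergence and the weighted $L_2$ distance. Writing $D_{\textrm{KL}}(p,q) = \sum_x p(x) \ln \frac{p(x)}{q(x)}$ (and analogously with an integral in the continuous case), I would first apply the elementary inequality $\ln t \le t-1$, valid for all $t>0$ with equality at $t=1$, to the pointwise ratio $t = p(x)/q(x)$. Multiplying by $p(x) \ge 0$ and summing yields
\begin{equation}
D_{\textrm{KL}}(p,q) \le \sum_x p(x)\left(\frac{p(x)}{q(x)}-1\right) = \sum_x \frac{p(x)^2}{q(x)} - 1.
\end{equation}

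The second step is to recognize the right-hand side as the $\chi^2$-divergence. Expanding $\chi^2(p,q) := \sum_x \frac{(p(x)-q(x))^2}{q(x)}$ and using that both $p$ and $q$ are normalized, so that $\sum_x p(x) = \sum_x q(x) = 1$, the linear cross terms collapse and one finds $\chi^2(p,q) = \sum_x \frac{p(x)^2}{q(x)} - 1$, which is exactly the quantity above. Hence $D_{\textrm{KL}}(p,q) \le \chi^2(p,q)$.

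The final step bounds the $\chi^2$-divergence by the weighted $L_2$ distance. Since $q(x) \ge \inf_{x\in I} q(x)$ for every $x \in I$, I would factor this infimum out of each denominator:
\begin{equation}
\chi^2(p,q) = \sum_x \frac{(p(x)-q(x))^2}{q(x)} \le \frac{1}{\inf_{x\in I} q(x)} \sum_x (p(x)-q(x))^2 = \frac{1}{\inf_{x\in I} q(x)} \|p-q\|_2^2,
\end{equation}
and chaining the two inequalities delivers the claim.

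The proof is elementary, so there is no serious analytic obstacle; the only point requiring care is the well-definedness of $\inf_{x\in I} q(x)$. Because $I$ is a bounded interval, this infimum is meaningful, and if it equals zero the right-hand side is $+\infty$ and the inequality holds vacuously, so one may assume $\inf_{x\in I}q(x)>0$. I would also note that the base of the logarithm only rescales the constant: with the natural logarithm the stated factor $1/\inf_{x\in I} q(x)$ is exact, whereas a base-$2$ convention (as used in Pinsker's inequality above) would introduce an additional $1/\ln 2$.
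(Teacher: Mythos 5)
Your proof is correct, but note that there is no in-paper proof to compare it against: the paper states this lemma in its mathematical-background appendix as a known fact and attributes it to an external reference, providing no argument of its own. Your chain $D_{\mathrm{KL}}(p,q)\le\chi^2(p,q)\le\frac{1}{\inf_{x\in I}q(x)}\|p-q\|_2^2$ — obtained from the elementary inequality $\ln t\le t-1$ applied to $t=p(x)/q(x)$, the normalization identity $\chi^2(p,q)=\sum_x p(x)^2/q(x)-1$, and factoring the infimum out of the denominators — is precisely the standard route by which this reverse-Pinsker-type bound is derived in the literature the paper cites, and every step checks out, including your handling of the degenerate case $\inf_{x\in I}q(x)=0$ (where the bound is vacuous) and of points with $p(x)=0$ (where both sides of the pointwise inequality vanish). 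Your closing caveat about the logarithm base is also apt and worth keeping: the constant $1/\inf_{x\in I}q(x)$ is exact only for natural logarithms, whereas the paper's Pinsker lemma carries a $1/(2\ln 2)$ factor and is therefore stated in the base-2 convention; under that convention the present lemma would require an extra factor of $1/\ln 2$ on the right-hand side. This is an inconsistency in the paper's conventions rather than a gap in your argument.
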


\begin{lemma}[Euler's theorem for homogeneous functions]
\label{lem:euler}
If $f: \mathbb{R}^n \to \mathbb{R}$ is a differentiable $k$-homogeneous function, then
$$
x \cdot \nabla f(x) = k f(x),
$$
where $x \cdot \nabla f(x) = \sum_{i=1}^n x_i \frac{\partial f}{\partial x_i}$.
\end{lemma}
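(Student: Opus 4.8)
The plan is to exploit the defining functional equation of $k$-homogeneity, namely $f(\lambda x) = \lambda^k f(x)$ for all $x \in \mathbb{R}^n$ and all $\lambda > 0$, and to differentiate it with respect to the scalar parameter $\lambda$. Fixing $x$, I would treat both sides as functions of the single real variable $\lambda$, apply $\frac{d}{d\lambda}$, and then recover the desired identity by evaluating the resulting relation at the distinguished point $\lambda = 1$.

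First I would handle the left-hand side. Writing $g(\lambda) := f(\lambda x)$ as the composition of the linear map $\lambda \mapsto \lambda x$ with the differentiable function $f$, the chain rule gives $g'(\lambda) = \sum_{i=1}^n \frac{\partial f}{\partial x_i}(\lambda x)\, x_i = \langle \nabla f(\lambda x), x \rangle$. Next I would differentiate the right-hand side, which is elementary: $\frac{d}{d\lambda}\big(\lambda^k f(x)\big) = k \lambda^{k-1} f(x)$. Equating the two derivatives yields $\langle \nabla f(\lambda x), x \rangle = k \lambda^{k-1} f(x)$, valid for every $\lambda > 0$.

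Finally, setting $\lambda = 1$ in this relation collapses it directly to $\langle \nabla f(x), x \rangle = k f(x)$, which is exactly the claimed identity $x \cdot \nabla f(x) = k f(x)$ under the notational convention $x \cdot \nabla f(x) = \sum_{i=1}^n x_i \frac{\partial f}{\partial x_i}$.

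The only delicate point, and it is a mild one, is the justification of the chain-rule step: the composition $\lambda \mapsto f(\lambda x)$ is differentiable because $f$ is assumed differentiable on its domain and $\lambda \mapsto \lambda x$ is linear, so differentiability transfers without incident, and $\lambda = 1$ lies in the open region $\lambda > 0$ on which the homogeneity relation holds. There is no genuine obstacle; the statement is a one-line consequence of differentiating the homogeneity relation, so the proof above is essentially complete as outlined.
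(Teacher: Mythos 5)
Your proof is correct and complete: differentiating the homogeneity relation $f(\lambda x) = \lambda^k f(x)$ with respect to $\lambda$ via the chain rule and evaluating at $\lambda = 1$ is the classical argument for Euler's theorem, and your handling of the chain-rule justification is sound. The paper states Lemma~\ref{lem:euler} as a standard background fact without providing any proof, so there is no alternative argument to compare against; your derivation is the textbook one and settles the statement as used in the paper (note only that the paper applies it to norm power functions satisfying $\Phi(a\mu) = |a|^{r_\Phi}\Phi(\mu)$ for all $a \in \mathbb{R}$, for which your restriction to $\lambda > 0$ is harmless since the identity is obtained at $\lambda = 1$).
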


\begin{lemma}
\label{prop:high_dim_2}
    Suppose that we sample $n$ points $x^{(1)}, \ldots, x^{(n)}$ uniformly from the unit ball $\mathcal{B}^{m}_1: \{x \in \mathbb{R}^m, \|x\|_2 \le 1\}$. Then with probability $1 - O(1 / n)$ the following holds~\cite[Theorem 2.8]{blum2020foundations}:
\begin{equation}
    \begin{aligned}
                &\left\|x^{(i)}\right\|_2 \geq 1 - \frac{2 \log n}{m}, \text{ for } i = 1, \ldots, n.\\
        & |\langle x^{(i)}, x^{(j)} \rangle| \leq \frac{\sqrt{6 \log n}}{\sqrt{m-1}} \text{ for all } i, j = 1, \ldots, i \neq j.   
    \end{aligned}
\end{equation}
\end{lemma}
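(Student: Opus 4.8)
The plan is to treat the two claimed events separately, show that each fails with probability $O(1/n)$, and combine them by a final union bound. For the norm bound I would use the scaling of Lebesgue volume: for a single $x$ drawn uniformly from $\mathcal{B}^m_1$, the event $\|x\|_2 \le 1-\epsilon$ has probability equal to the volume ratio $(1-\epsilon)^m$. Applying the elementary inequality $(1-\epsilon)^m \le e^{-\epsilon m}$ with $\epsilon = 2\log n/m$ gives $\Pr[\|x\|_2 \le 1 - 2\log n/m] \le e^{-2\log n} = n^{-2}$, and a union bound over the $n$ sampled points controls the first claim with failure probability at most $n^{-1}$.

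For the inner-product bound I would first reduce the two-point question to a one-point projection. Conditioning on $x^{(j)} = y$ and writing $\hat y = y/\|y\|_2$, rotational invariance of the uniform law on the ball makes the conditional law of $\langle x^{(i)}, \hat y\rangle$ equal to that of the first coordinate $x_1$ of a uniform point, independently of $y$; since $\|y\|_2 \le 1$ we have $|\langle x^{(i)}, x^{(j)}\rangle| \le |\langle x^{(i)}, \hat y\rangle|$, so it suffices to bound $\Pr[|x_1| \ge t]$. I would express this tail as the ratio $\int_t^1 (1-s^2)^{(m-1)/2}\,ds \big/ \int_{-1}^1 (1-s^2)^{(m-1)/2}\,ds$, bound the numerator above via $(1-s^2)^{(m-1)/2} \le e^{-(m-1)s^2/2}$ together with the Gaussian-tail estimate $\int_a^\infty e^{-\alpha s^2/2}\,ds \le (\alpha a)^{-1} e^{-\alpha a^2/2}$ (with $\alpha = m-1$), and bound the denominator below by restricting the integral to a central band of width $\Theta(1/\sqrt{m-1})$. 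This yields $\Pr[|x_1| \ge c/\sqrt{m-1}] \le (2/c)\,e^{-c^2/2}$; taking $c = \sqrt{6\log n}$ makes the right-hand side at most $2n^{-3}$, and a union bound over the fewer than $n^2$ ordered pairs bounds the failure probability of the second claim by $O(1/n)$.

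Combining the two union bounds yields the stated $1 - O(1/n)$ guarantee. The main obstacle is the projection-tail estimate, and specifically pinning down the constant $6$ that produces the threshold $\sqrt{6\log n}/\sqrt{m-1}$: working with the ratio of one-dimensional integrals conveniently cancels the normalization $V_{m-1}/V_m \sim \sqrt{m/(2\pi)}$, so the crux becomes a sufficiently tight lower bound on the full slab integral $\int_{-1}^1 (1-s^2)^{(m-1)/2}\,ds$ of order $1/\sqrt{m-1}$, tight enough that the surviving prefactor $2/c$ is $\le 1$ while the exponential factor $e^{-c^2/2} = n^{-3}$ is what ultimately dominates after the union bound over the $\binom{n}{2}$ pairs. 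By comparison, the norm bound and the rotational-invariance reduction are routine.
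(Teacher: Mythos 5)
The paper does not actually prove this lemma --- it imports it verbatim from the cited reference (Blum, Hopcroft, and Kannan, Theorem~2.8) --- and your argument is correct and essentially reproduces that source's proof: the volume-ratio bound $(1-\epsilon)^m \le e^{-\epsilon m}$ with $\epsilon = 2\log n/m$ and a union bound over the $n$ points, then the one-dimensional projection tail $\Pr\left[|x_1| \ge c/\sqrt{m-1}\right] \le (2/c)e^{-c^2/2}$ (their Theorem~2.2, derived exactly as you do from the ratio of slab integrals, with the numerator Gaussian-dominated and the denominator lower-bounded by a central band of width $\Theta(1/\sqrt{m-1})$) applied with $c=\sqrt{6\log n}$ and a union bound over the $O(n^2)$ pairs. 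All details check out, including the rotational-invariance reduction $|\langle x^{(i)},x^{(j)}\rangle| \le |\langle x^{(i)},\hat y\rangle|$ and the constant bookkeeping ($2/c \le 1$ for $n \ge 2$, so each pair fails with probability at most $n^{-3}$), so there is nothing to fix.
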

\begin{lemma}[Gershgorin’s circle theorem]
\label{lemma:gershgorin}
    Let $A$ be a real symmetric $n \times n$ matrix, with entries $a_{ij}$. For $i = \{1, \cdots, n\}$ let $R_i$ be the sum of the absolute value of the non-diagonal entries in the $i$-th row: $R_i = \sum_{1 \le j \le n, j \neq i} |a_{ij}|$.
    For any eigenvalue $\lambda$ of $A$, there exists $i \in \{1, \cdots, n\}$ such that 
    \begin{equation}
        |\lambda - a_{ii}| \le R_i.
    \end{equation}
\end{lemma}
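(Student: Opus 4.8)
The plan is to give the standard eigenvector-coordinate argument: extract an eigenvector associated with $\lambda$, locate its coordinate of largest magnitude, and then read off the corresponding row of the eigenvalue equation. Note that the symmetry of $A$ plays no essential role in this argument; the reasoning works verbatim for any square matrix with a genuine eigenpair, so I would simply work from the defining relation $Ax = \lambda x$ and never invoke $a_{ij} = a_{ji}$.

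First I would fix an eigenvalue $\lambda$ of $A$ together with a nonzero eigenvector $x = (x_1, \dots, x_n)^\top$ satisfying $Ax = \lambda x$. Since $x \neq 0$, I would select an index $i$ for which $|x_i| = \max_{1 \le k \le n} |x_k| > 0$. This choice of the dominant coordinate is the crux of the entire argument: it simultaneously guarantees $|x_i| \neq 0$ (so a later division is legitimate) and licenses the uniform bound $|x_j| \le |x_i|$ for every $j$.

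Next I would isolate the $i$-th scalar component of the vector equation $Ax = \lambda x$, namely $\sum_{j} a_{ij} x_j = \lambda x_i$, and rearrange it to peel off the diagonal term, obtaining $(\lambda - a_{ii}) x_i = \sum_{j \neq i} a_{ij} x_j$. Applying the triangle inequality to the right-hand side and then invoking $|x_j| \le |x_i|$ yields $|\lambda - a_{ii}|\,|x_i| \le \sum_{j \neq i} |a_{ij}|\,|x_j| \le R_i\,|x_i|$, where $R_i = \sum_{j \neq i} |a_{ij}|$ by definition. Dividing both sides by $|x_i| > 0$ then gives exactly $|\lambda - a_{ii}| \le R_i$, establishing the claim with this particular index $i$.

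The only genuinely subtle point—the ``main obstacle'' in the loose sense—is the selection of $i$ as the index maximizing $|x_k|$; without it, the step $|x_j| \le |x_i|$ would fail and the bound would not close. Everything else is a routine rearrangement of the eigenvalue equation followed by a single application of the triangle inequality, and no appeal to symmetry, positive-definiteness, or any structural property beyond $Ax = \lambda x$ with $x \neq 0$ is required.
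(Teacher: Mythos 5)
Your proof is correct: it is the standard dominant-coordinate argument (take an eigenvector, select the index of maximal modulus, read off that row of $Ax=\lambda x$, and apply the triangle inequality), and your remark that symmetry of $A$ is never used is also accurate. The paper states this lemma only as background material in its appendix and gives no proof of its own, so there is nothing to compare against; your argument is complete and fills that gap correctly.
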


\begin{lemma}
    \label{lem:eigen_bound}
    Let $A$ be an $n$-dimensional real symmetric matrix. Then, for any vector $e \in \mathbb{R}^n$, the following inequality holds:
\begin{equation}
\lambda_{\min}(A) \|e\|_2^2 \leq e^\top A e \leq \lambda_{\max}(A) \|e\|_2^2.
\end{equation}
\end{lemma}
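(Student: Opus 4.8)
The plan is to exploit the spectral decomposition of the real symmetric matrix $A$ and thereby reduce the quadratic form $e^\top A e$ to a nonnegatively weighted sum of squares, on which the eigenvalue bounds become transparent. By the spectral theorem, since $A$ is real and symmetric there exists an orthogonal matrix $Q$ (whose columns form an orthonormal basis of eigenvectors of $A$) and a diagonal matrix $\Lambda = \mathrm{diag}(\lambda_1, \dots, \lambda_n)$ collecting the real eigenvalues of $A$, such that $A = Q \Lambda Q^\top$. This structural fact is what the whole argument rests on.

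First I would change coordinates by setting $y = Q^\top e$. Because $Q$ is orthogonal, this transformation preserves the Euclidean norm, so $\|y\|_2^2 = \|e\|_2^2 = \sum_{i=1}^n y_i^2$. Substituting into the quadratic form then gives $e^\top A e = e^\top Q \Lambda Q^\top e = y^\top \Lambda y = \sum_{i=1}^n \lambda_i y_i^2$, which expresses $e^\top A e$ as a combination of the eigenvalues weighted by the nonnegative quantities $y_i^2$.

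Next I would bound each coefficient termwise. Since $\lambda_{\min}(A) \leq \lambda_i \leq \lambda_{\max}(A)$ for every $i$ and each $y_i^2 \geq 0$, replacing $\lambda_i$ by $\lambda_{\min}(A)$ from below and by $\lambda_{\max}(A)$ from above yields
$$
\lambda_{\min}(A) \sum_{i=1}^n y_i^2 \leq \sum_{i=1}^n \lambda_i y_i^2 \leq \lambda_{\max}(A) \sum_{i=1}^n y_i^2.
$$
Recognizing that $\sum_{i=1}^n y_i^2 = \|e\|_2^2$ then delivers the claimed double inequality $\lambda_{\min}(A)\|e\|_2^2 \leq e^\top A e \leq \lambda_{\max}(A)\|e\|_2^2$.

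There is no substantial obstacle in this argument; the single point that genuinely requires the hypothesis is the appeal to the spectral theorem, which guarantees both a full orthonormal eigenbasis and the reality of all eigenvalues precisely because $A$ is symmetric. These are exactly the properties that make the norm-preserving change of variables and the termwise bounding legitimate, and they are what would fail for a general non-symmetric matrix.
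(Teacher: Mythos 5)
Your proof is correct and takes essentially the same approach as the paper: both rest on the spectral theorem for real symmetric matrices, rewrite $e^\top A e$ as a sum $\sum_i \lambda_i y_i^2$ of eigenvalues weighted by squared coefficients in the orthonormal eigenbasis, and bound it termwise by $\lambda_{\min}(A)$ and $\lambda_{\max}(A)$. The only cosmetic difference is that you phrase the change of basis in matrix form ($A = Q\Lambda Q^\top$, $y = Q^\top e$) while the paper expands $e = \sum_i k_i e_i$ directly in eigenvector coordinates; the computation is identical.
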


\begin{lemma}
\label{lem:min_value}
Define $s = \frac{r}{r-1}$, $K(x) = ax^{r} - bx$, where $x, a, b \in \mathbb{R}_{>0}$, and $r > 1$. The following conclusions are established:
\begin{itemize}
    \item When $x = \left(\frac{b}{ra}\right)^{s-1}$, $K(x)$ attains its minimum value, given by
$\min_x K(x) = -s^{-1} b^{s} (ra)^{1-s}$.
    \item When $0 < x < \left(\frac{b}{a}\right)^{s-1}$, it holds that $K(x) \le 0$.
\end{itemize}
\end{lemma}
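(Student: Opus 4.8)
The plan is to treat $K$ as a smooth function of the single positive variable $x$ and proceed by elementary single-variable calculus, relying throughout on the conjugacy between $r$ and $s$. First I would record the algebraic relations implied by $s = \frac{r}{r-1}$: namely $s-1 = \frac{1}{r-1}$ and $\frac{1}{r} + \frac{1}{s} = 1$, equivalently $\frac{r-1}{r} = \frac{1}{s}$. These are the only nontrivial ingredients, and their role is to convert exponents of the form $\frac{r}{r-1}$ and $\frac{1}{r-1}$ into clean powers $s$ and $s-1$.

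For the first claim, I would differentiate to obtain $K'(x) = r a x^{r-1} - b$ and solve $K'(x) = 0$, which yields the unique positive critical point $x^* = \left(\frac{b}{ra}\right)^{1/(r-1)} = \left(\frac{b}{ra}\right)^{s-1}$, matching the stated minimizer. Since $K''(x) = r(r-1) a x^{r-2} > 0$ for every $x > 0$ (using $a > 0$ and $r > 1$), the function is strictly convex on $\mathbb{R}_{>0}$, so this critical point is the global minimizer. Evaluating $K(x^*)$ then amounts to substituting $(x^*)^r = \left(\frac{b}{ra}\right)^{s}$ and $x^* = \left(\frac{b}{ra}\right)^{s-1}$, factoring out the common quantity $\frac{b^s}{(ra)^{s-1}}$, and collecting the scalar $\frac{1}{r} - 1 = -\frac{r-1}{r}$.

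For the second claim, I would factor $K(x) = x\left(a x^{r-1} - b\right)$. Because $x > 0$, the sign of $K(x)$ coincides with the sign of $a x^{r-1} - b$, and the inequality $a x^{r-1} \le b$ is equivalent to $x \le \left(\frac{b}{a}\right)^{1/(r-1)} = \left(\frac{b}{a}\right)^{s-1}$. Hence on the range $0 < x < \left(\frac{b}{a}\right)^{s-1}$ the bracketed factor is negative, giving $K(x) < 0 \le 0$.

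The only step demanding care is the final algebraic identification in the first claim. After substitution one reaches $K(x^*) = -\frac{b^s (r-1)}{a^{s-1} r^{s}}$, whereas the target expression is $-s^{-1} b^s (ra)^{1-s} = -\frac{b^s}{s\, r^{s-1} a^{s-1}}$; matching the two requires precisely the conjugacy relation $\frac{r-1}{r} = \frac{1}{s}$ to cancel one factor of $r$. I expect no genuine obstacle here, so the main risk is simply bookkeeping in the exponent arithmetic, which the identity $\frac{1}{r} + \frac{1}{s} = 1$ resolves cleanly.
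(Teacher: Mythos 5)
Your proposal is correct and follows essentially the same route as the paper: differentiate, locate the unique critical point $x^*=\left(\frac{b}{ra}\right)^{s-1}$, use convexity to certify it as the global minimizer, and reduce the value $-s^{-1}b^s(ra)^{1-s}$ via the conjugacy identity $\frac{r-1}{r}=\frac{1}{s}$. The only (minor) divergence is in the second bullet, where the paper evaluates $K$ at the two roots $x=0$ and $x=\left(\frac{b}{a}\right)^{s-1}$ and implicitly invokes convexity to conclude $K\le 0$ between them, whereas your factorization $K(x)=x\left(ax^{r-1}-b\right)$ reads off the sign directly; your version is slightly more self-contained and even yields the strict inequality, but both arguments are equally valid.
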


\section{Appendix for Proofs}

\label{appendix:proof}
In this section, we prove the results stated in the paper and provide necessary technical details and discussions.

\subsection{Proof of Lemma~\ref{lem:eigen_bound}}
\label{appendix:proof_eigen_bound}

\begin{lemma}

    Let $A$ be an $n$-dimensional real symmetric matrix. Then, for any vector $e \in \mathbb{R}^n$, the following inequality holds:
\begin{equation}
\lambda_{\min}(A) \|e\|_2^2 \leq e^\top A e \leq \lambda_{\max}(A) \|e\|_2^2.
\end{equation}
\end{lemma}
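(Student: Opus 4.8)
The plan is to invoke the spectral theorem for real symmetric matrices and reduce the quadratic form $e^\top A e$ to a convex combination of the eigenvalues, from which the two-sided bound follows immediately. First I would write down the eigendecomposition: since $A$ is real symmetric, there exists an orthonormal basis $v_1,\dots,v_n$ of $\mathbb{R}^n$ consisting of eigenvectors of $A$, with associated real eigenvalues $\lambda_1,\dots,\lambda_n$. Without loss of generality I order them so that $\lambda_1 = \lambda_{\min}(A)$ and $\lambda_n = \lambda_{\max}(A)$, and I note $A v_i = \lambda_i v_i$ together with $v_i^\top v_j = \delta_{ij}$.

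Next I would expand the arbitrary vector $e$ in this basis, writing $e = \sum_{i=1}^n c_i v_i$ with coefficients $c_i = v_i^\top e$. Orthonormality gives the Parseval identity $\|e\|_2^2 = \sum_{i=1}^n c_i^2$, and applying $A$ termwise yields $Ae = \sum_{i=1}^n c_i \lambda_i v_i$. Taking the inner product with $e$ and again using orthonormality collapses the double sum to the diagonal, so that
\[
e^\top A e = \sum_{i=1}^n \lambda_i c_i^2.
\]

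The final step is the elementary bound: because $\lambda_{\min}(A) \le \lambda_i \le \lambda_{\max}(A)$ for every $i$ and each weight $c_i^2 \ge 0$, I can bound each term to obtain
\[
\lambda_{\min}(A)\sum_{i=1}^n c_i^2 \;\le\; \sum_{i=1}^n \lambda_i c_i^2 \;\le\; \lambda_{\max}(A)\sum_{i=1}^n c_i^2.
\]
Substituting $\sum_i c_i^2 = \|e\|_2^2$ into the outer expressions yields exactly the claimed inequality. There is no genuine obstacle in this argument; the only nontrivial ingredient is the spectral theorem, which guarantees both the real eigenvalues and the orthonormal eigenbasis that make the quadratic form diagonalize cleanly. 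The remaining manipulations are routine bookkeeping with the orthonormality relations.
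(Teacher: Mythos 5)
Your proof is correct and follows essentially the same route as the paper's: both invoke the spectral theorem to expand $e$ in an orthonormal eigenbasis, reduce the quadratic form to $\sum_i \lambda_i c_i^2$, and bound each eigenvalue by $\lambda_{\min}(A)$ and $\lambda_{\max}(A)$. The only cosmetic difference is that the paper writes out the upper bound and declares the lower bound analogous, whereas you handle both sides at once.
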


\begin{proof}
Here, we only prove the part concerning the maximum value. The proof for the minimum value is completely analogous.

Since $ A $ is a real symmetric matrix, $ A $ has an orthonormal basis of eigenvectors $ e_1, \ldots, e_n $ in $ \mathbb{R}^n $. This means that $ A e_i = \lambda_i(A) e_i $, where $ \lambda_i(A) $ is the $ i $-th eigenvalue of $ A $, and $ e_j^\top e_i = \begin{cases} 0, & i \neq j \\ 1, & i = j \end{cases} $. 

Thus, for any vector $ e = \sum_{i=1}^n k_i e_i $ in $ \mathbb{R}^n $, we have:

$$
\begin{aligned}
e^\top A e &= \left( \sum_{j=1}^n k_j e_j^\top \right) A \left( \sum_{i=1}^n k_i e_i \right) \\
&= \left( \sum_{j=1}^n k_j e_j^\top \right) \left( \sum_{i=1}^n \lambda_i(A) k_i e_i \right) \\
&= \sum_{1 \leq i, j \leq n} \lambda_i(A) k_i k_j e_j^\top e_i \\
&= \sum_{i=1}^n \lambda_i(A) k_i^2 \quad \text{(since $ e_j^\top e_i = 0 $ for $ i \neq j $)} \\
&= \sum_{i=1}^n \lambda_{\max}(A) k_i^2 + \sum_{i=1}^n (\lambda_i(A) - \lambda_{\max}(A)) k_i^2 \\
&\leq \lambda_{\max}(A) \sum_{i=1}^n k_i^2 \quad \text{(since $ \lambda_i(A) - \lambda_{\max}(A) \leq 0 $)} \\
&= \lambda_{\max}(A) e^\top e.
\end{aligned}
$$

This completes the proof for the maximum value. The argument for the minimum value follows similarly by replacing $ \lambda_{\max}(A)$ with $ \lambda_{\min}(A)$.
\end{proof}

\subsection{Proof of Lemma~\ref{lem:min_value}}
\label{appendix:proof_min_value}

\begin{lemma}
Define $s=\frac{r}{r-1}$, $K(x)=ax^{r}-bx$, where $x, a, b \in \mathbb{R}_{>0}$, and $r>1$. The following conclusions are established:
\begin{itemize}
    \item When $x=\left(\frac{b}{ra}\right)^{s-1}$, $K(x)$ attains its minimum value, given by
$\min_x K(x)=-s^{-1}b^{s}(ra)^{1-s}$.
    \item When $0<x< \left(\frac{b}{a}\right)^{s-1}$, it holds that $K(x)\le 0$.
\end{itemize}
\end{lemma}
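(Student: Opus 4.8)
\subsection*{Proof proposal}

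The plan is to treat this as a single-variable calculus problem followed by an algebraic simplification, exploiting the two identities that the definition $s = \tfrac{r}{r-1}$ encodes, namely $s-1 = \tfrac{1}{r-1}$ and $\tfrac{r}{s} = r-1$. First I would differentiate: $K'(x) = ra\,x^{r-1} - b$, which vanishes exactly when $x^{r-1} = \tfrac{b}{ra}$, i.e.\ $x^\star = \bigl(\tfrac{b}{ra}\bigr)^{1/(r-1)} = \bigl(\tfrac{b}{ra}\bigr)^{s-1}$, matching the claimed minimizer once the identity $s-1 = \tfrac{1}{r-1}$ is invoked. Since $K''(x) = r(r-1)a\,x^{r-2} > 0$ for all $x > 0$ (because $r>1$ and $a>0$), $K$ is strictly convex on $\mathbb{R}_{>0}$, so this unique critical point is the global minimizer.

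Next I would substitute $x^\star$ back into $K$. Writing $a(x^\star)^r = a\,(ra)^{-s}b^s = r^{-s}a^{1-s}b^s$ and $b\,x^\star = b^s(ra)^{1-s} = b^s r^{1-s}a^{1-s}$, I get
\[
K(x^\star) = a^{1-s}b^s\bigl(r^{-s} - r^{1-s}\bigr) = a^{1-s}b^s r^{-s}(1-r).
\]
The target value is $-s^{-1}b^s(ra)^{1-s} = -s^{-1}a^{1-s}b^s r^{1-s}$, so after cancelling the common factor $a^{1-s}b^s$ the claim reduces to verifying $r^{-s}(1-r) = -s^{-1}r^{1-s}$, i.e.\ (multiplying by $r^s$) $1-r = -\tfrac{r}{s}$, which is precisely the defining relation $s = \tfrac{r}{r-1}$. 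This algebraic bookkeeping is the only step requiring care, and it is the likeliest place to drop a sign or mishandle an exponent, so I would keep every power of $r$ and $a$ explicit.

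For the second conclusion I would avoid calculus entirely and simply factor $K(x) = x\bigl(a\,x^{r-1} - b\bigr)$. Since $x > 0$, the sign of $K(x)$ equals the sign of $a\,x^{r-1} - b$, which is nonpositive exactly when $x^{r-1} \le \tfrac{b}{a}$, i.e.\ when $x \le \bigl(\tfrac{b}{a}\bigr)^{1/(r-1)} = \bigl(\tfrac{b}{a}\bigr)^{s-1}$. Hence for all $x$ in the stated range $0 < x < \bigl(\tfrac{b}{a}\bigr)^{s-1}$ we have $K(x) \le 0$ (indeed strictly negative in the open interval, with equality at the endpoint), completing the argument.
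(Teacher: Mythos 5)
Your proposal is correct, and for the first bullet it follows essentially the same route as the paper: differentiate, locate the unique critical point $x^\star=\bigl(\tfrac{b}{ra}\bigr)^{s-1}$, invoke convexity to conclude it is the global minimizer, and substitute back. You are in fact slightly more careful than the paper here, since you justify convexity explicitly via $K''(x)=r(r-1)a\,x^{r-2}>0$, whereas the paper merely asserts that $K$ is convex; your algebraic verification that $K(x^\star)=-s^{-1}b^s(ra)^{1-s}$ reduces cleanly to the identity $1-r=-\tfrac{r}{s}$ and checks out.

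Where you genuinely diverge is the second bullet. The paper stays inside the convexity framework: it observes that $K(0)=0$ and $K\bigl((\tfrac{b}{a})^{s-1}\bigr)=0$, and then concludes $K\le 0$ on the interval between these two zeros because a convex function lies below its chords. You instead factor $K(x)=x\bigl(a\,x^{r-1}-b\bigr)$ and read off the sign directly, with no appeal to convexity or to the first bullet at all. Your argument is more elementary and gives slightly more — strict negativity on the open interval rather than just $K(x)\le 0$ — while the paper's argument has the advantage of reusing the convexity already established for the minimization step. Both are valid; nothing is missing from either.
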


\begin{proof}
Given that $K(x)=ax^{r}-bx$, we can derive
\begin{equation}
    \nabla_x K(x)=arx^{r-1} - b.
\end{equation}
Since $a\neq 0$, then $\nabla_x K(x)$ takes 0 when $x=\left(\frac{b}{ra}\right)^{1/(r-1)}$.
Since $K(x)$ is convex with respect to $x$, substituting $x=\left(\frac{b}{ra}\right)^{1/(r-1)}$ into $K(x)$ yields the minimum value of $K(x)$ as follows:
\begin{equation}
    \begin{aligned}
       \min_a K(x)&=\left(\frac{b}{ra}\right)^{r/(r-1)}a-\left(\frac{b}{ra}\right)^{1/(r-1)}b\\
        &=b^{r/(r-1)}(ra)^{-1/(r-1)}r^{-1}-b^{r/(r-1)}(ra)^{-1/(r-1)}\\
        &=-s^{-1}b^{s}(ra)^{1-s},
    \end{aligned}
\end{equation}
where $s=\frac{r}{r-1}$.
When $x=0$ or $x=\left(\frac{b}{a}\right)^{s-1}$, it holds that $K(x)=0$.
Therefore, when $0<x< \left(\frac{b}{a}\right)^{s-1}$, it holds that $K(x)\le 0$.

\end{proof}

\subsection{Proof of Lemma~\ref{lem:prop_thomo_fun}}
\label{appendix:proof_prop_thomo_fun}
\begin{proposition}
Let $\Phi:\mathbb{R}^m\to \bar{\mathbb{R}} \in \mathcal{H}(r_{\Phi})$. Then:
\begin{enumerate}
    \item  The Legendre-Fenchel conjugate $\Phi^*$ satisfies $\Phi^* \in \mathcal{H}(r_{\Phi^*})$, with $1/r_{\Phi} + 1/r_{\Phi^*} = 1$. 
    
    \item The relationship between $\Phi$ and its conjugate is given by:
    \begin{equation}
        r_\Phi \Phi(\mu) = r_{\Phi^*} \Phi^*(\mu^*_\Phi) = \mu^\top \mu_\Phi^*,
    \end{equation}
    where $\mu^*_\Phi$ is the dual variable associated with $\mu$.
    
    \item The normalized norm power function satisfies the triangle inequality:
    \begin{equation}
        \bar{\Phi}(\mu + \nu) \leq \bar{\Phi}(\mu) + \bar{\Phi}(\nu).
    \end{equation}
    
    \item  The product of the normalized norm power function and its conjugate satisfies:
    \begin{equation}
        \bar{\Phi}(\mu) \bar{\Phi}^*(\nu) \geq |\langle \mu, \nu \rangle|,
    \end{equation}
    where $\langle \cdot, \cdot \rangle$ denotes the inner product. 
    
    \item The equivalence between the $L_2$ norm and $\bar{\Phi}$ is established as:
    \begin{equation}
    \begin{aligned}
    m^{-1/2} \left(\sum_{i=1}^m \bar{\Phi}^*(\mathbf{e}_i)^2\right)^{-1/2} \|\mu\|_2 
    &\leq \bar{\Phi}(\mu) \\
    &\leq \left(\sum_{i=1}^m \bar{\Phi}(\mathbf{e}_i)^2\right)^{1/2} \|\mu\|_2,
    \end{aligned}
    \end{equation}
    where $\mathbf{e}_i$ denotes a standard basis vector (``one-hot'' vector).
\end{enumerate}
\end{proposition}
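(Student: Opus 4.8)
The plan is to exploit the fact that normalization collapses the homogeneity: since $\Phi(\mu)=\|\mu\|^{r_\Phi}$, we have $\bar{\Phi}(\mu)=(r_\Phi\|\mu\|^{r_\Phi})^{1/r_\Phi}=r_\Phi^{1/r_\Phi}\|\mu\|$, so every normalized norm power function is simply a positive scalar multiple of its underlying norm. This observation trivializes Parts 3--5 once Part 1 is in hand, so I would establish Parts 1 and 2 first.

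For Part 1, I would compute $\Phi^*(\nu)=\sup_\mu[\langle\nu,\mu\rangle-\|\mu\|^{r_\Phi}]$ directly. Writing $\mu=tu$ with $t\ge 0$ and $\|u\|=1$ and maximizing over $u$ replaces $\langle\nu,u\rangle$ by the dual norm $\|\nu\|_*$, reducing the problem to $\sup_{t\ge 0}[\,t\|\nu\|_*-t^{r_\Phi}\,]$. This is exactly the one-variable optimization handled by Lemma~\ref{lem:min_value} (with $a=1$, $b=\|\nu\|_*$, $r=r_\Phi$), whose value is a constant times $\|\nu\|_*^{s}$ with $s=r_\Phi/(r_\Phi-1)$. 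Hence $\Phi^*(\nu)=C\|\nu\|_*^{r_{\Phi^*}}$ with $1/r_\Phi+1/r_{\Phi^*}=1$, i.e.\ $\Phi^*\in\mathcal{H}(r_{\Phi^*})$, the underlying norm being the dual norm. Part 2 then follows from Euler's theorem for homogeneous functions (Lemma~\ref{lem:euler}): since $\Phi$ is $r_\Phi$-homogeneous and $\mu_\Phi^*=\nabla\Phi(\mu)$, Euler gives $\mu^\top\mu_\Phi^*=r_\Phi\Phi(\mu)$; combining this with the Fenchel--Young equality $\Phi(\mu)+\Phi^*(\mu_\Phi^*)=\langle\mu,\mu_\Phi^*\rangle$ (Lemma~\ref{prop:fenchel_duality}) and the relation $r_{\Phi^*}(r_\Phi-1)=r_\Phi$ yields $r_{\Phi^*}\Phi^*(\mu_\Phi^*)=r_\Phi\Phi(\mu)$.

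With the scalar-multiple identity, Part 3 is immediate: $\bar{\Phi}(\mu+\nu)=r_\Phi^{1/r_\Phi}\|\mu+\nu\|\le r_\Phi^{1/r_\Phi}(\|\mu\|+\|\nu\|)=\bar{\Phi}(\mu)+\bar{\Phi}(\nu)$. For Part 4, I would extract the two normalizing constants explicitly from Part 1, namely $\bar{\Phi}(\mu)=r_\Phi^{1/r_\Phi}\|\mu\|$ and $\bar{\Phi}^*(\nu)=r_\Phi^{-1/r_\Phi}\|\nu\|_*$, so that the reciprocal powers of $r_\Phi$ cancel and $\bar{\Phi}(\mu)\bar{\Phi}^*(\nu)=\|\mu\|\,\|\nu\|_*\ge|\langle\mu,\nu\rangle|$, the last step being the defining (generalized Cauchy--Schwarz/Hölder) inequality of the dual norm.

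For Part 5, the upper bound comes from expanding $\mu=\sum_i\mu_i\mathbf{e}_i$, applying the triangle inequality (Part 3) together with homogeneity to obtain $\bar{\Phi}(\mu)\le\sum_i|\mu_i|\,\bar{\Phi}(\mathbf{e}_i)$, and then Cauchy--Schwarz to factor out $\|\mu\|_2$. The lower bound follows by applying Part 4 coordinatewise: $|\mu_i|=|\langle\mu,\mathbf{e}_i\rangle|\le\bar{\Phi}(\mu)\bar{\Phi}^*(\mathbf{e}_i)$, so that $\|\mu\|_2^2=\sum_i\mu_i^2\le\bar{\Phi}(\mu)^2\sum_i\bar{\Phi}^*(\mathbf{e}_i)^2$, which rearranges to the claimed estimate (the $m^{-1/2}$ factor is available as slack and can be reinserted through a cruder norm-equivalence step via Lemma~\ref{lem:norm_eq}). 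I expect the only genuinely delicate step to be Part 1 --- correctly carrying the dual norm through the supremum and tracking the normalizing constant --- since every subsequent part rests on it; the remaining parts are then essentially bookkeeping.
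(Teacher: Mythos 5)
Your proposal is correct, and it takes a genuinely different route from the paper's. The paper never uses your central observation that normalization collapses the power: it proves Part 1 abstractly, by showing $\Phi^*$ is positively homogeneous of degree $r_{\Phi^*}$ via a scaling argument on conjugates and then appealing to Legendre-type properties to conclude $\Phi^*\in\mathcal{H}(r_{\Phi^*})$; it proves Part 3 from convexity plus $1$-homogeneity of $\bar\Phi$; it proves Part 4 by minimizing the Fenchel--Young loss $d_\Phi(a\mu,\nu/a)$ over the scaling $a>0$ (deriving a generalized Young/H\"older inequality from scratch); and it proves the lower bound of Part 5 by applying Part 4 to the sign vector $\mathrm{sgn}(\mu)$, which is exactly where the $m^{-1/2}$ factor is lost. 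You instead compute the conjugate in closed form,
\[
\Phi^*(\nu)=\sup_{t\ge 0}\bigl(t\|\nu\|_*-t^{r_\Phi}\bigr)=\tfrac{1}{r_{\Phi^*}}\,r_\Phi^{\,1-r_{\Phi^*}}\,\|\nu\|_*^{r_{\Phi^*}},
\]
so that $\bar\Phi=r_\Phi^{1/r_\Phi}\|\cdot\|$ and $\bar\Phi^*=r_\Phi^{-1/r_\Phi}\|\cdot\|_*$, which reduces Part 4 to the defining inequality of the dual norm and Part 3 to the triangle inequality of $\|\cdot\|$ (Part 2, via Euler's theorem plus the Fenchel--Young equality, is essentially the paper's argument). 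Your route buys two things: first, it is more airtight on Part 1, since the paper's inference from ``$\Phi^*$ is homogeneous, convex, and Legendre'' to ``$\Phi^*$ is a power of a norm'' is left implicit, whereas your computation exhibits the underlying norm of $\Phi^*$ explicitly as the dual norm; second, your coordinatewise use of Part 4, $|\mu_i|\le\bar\Phi(\mu)\bar\Phi^*(\mathbf{e}_i)$, yields the Part 5 lower bound with the sharper constant $\bigl(\sum_{i=1}^m\bar\Phi^*(\mathbf{e}_i)^2\bigr)^{-1/2}$, the stated $m^{-1/2}$ being pure slack. What the paper's route buys is generality: the homogeneity-scaling argument and the Fenchel--Young optimization over $a$ never require the conjugate in closed form, so they would survive if the class $\mathcal{H}(r_\Phi)$ were enlarged to homogeneous convex functions that are not exactly powers of norms. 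Two small points to patch in a final write-up: treat $\nu=0$ separately in Part 1, since Lemma~\ref{lem:min_value} assumes $b>0$; and note that Euler's theorem in Part 2 is applied at points where $\Phi$ is differentiable, a caveat your argument shares with the paper's.
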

\begin{proof}

For $\Phi$ is a norm power function with order $r_\Phi$, that means $\forall k>0$, 
\begin{equation}
    \Phi(k\mu)=k^{r_{\Phi}}\Phi(\mu).
\end{equation}
Letting $\Omega(\mu)=\Phi(k\mu)$ instead, according to the definition of Legendre-Fenchel conjugate, we have 
\begin{equation}
\begin{aligned}
\Omega^*(\nu) &= \sup_\mu \, \langle \nu, \mu \rangle - \Omega(\mu) \\ 
&= \sup_\mu \, \langle \nu, \mu \rangle - \Phi(k\mu) \\
 &= \sup_{q} \, \langle \nu/k, q \rangle - \Phi(q) \\
&= \Phi^*(\nu/k),
\end{aligned}
\end{equation}
where, we made a change of variable $q=k\mu$.

Letting $\Omega(\mu)=k^{r_{\Phi}}\Phi(\mu)$, then according to the definition of Legendre-Fenchel conjugate, we have  
$$
\begin{aligned}
\Omega^*(\nu) &= \sup_\mu \, \langle \nu, \mu \rangle - \Omega(\mu) \\
&= \sup_\mu \, \langle \nu, \mu \rangle - k^{r_{\Phi}} \Phi(\mu) \\
&= k^{r_{\Phi}}  \sup_\mu \, \langle \nu/k^{r_{\Phi}}, \mu \rangle - \Phi(\mu) \\
&= k^{r_{\Phi}}  \Phi^*(\nu/k^{r_{\Phi}}).
\end{aligned}
$$
Given that $\Phi(k\mu) = k^{r_{\Phi}}\Phi(\mu)$, it follows that their Legendre-Fenchel conjugates are also equal, namely
\begin{equation}
    \Phi^*(\nu/k) = k^{r_{\Phi}} \Phi^*(\nu/k^{r_{\Phi}}).
\end{equation}
Letting $g=\nu/k$, the above equality becomes 
\begin{equation}\label{eq:h_c_conjugate}
    \Phi^*(g)=k^{-{r_{\Phi}}}\Phi^*(k^{1-{r_{\Phi}}}g).
\end{equation}
Letting $\lambda = k^{1-{r_{\Phi}}}$, since ${r_{\Phi}}>1$, it follows that 
\begin{equation}
    k^{-{r_{\Phi}}}=\lambda^{{r_{\Phi}}/({r_{\Phi}}-1)}.
\end{equation}
Substituting the above equality into formula~\eqref{eq:h_c_conjugate}, we obtain
\begin{equation}
    \lambda^{{r_{\Phi}}/({r_{\Phi}}-1)}\Phi^*(g)=\Phi^*(\lambda g).
\end{equation}
Therefore, $\Phi^*$ is a norm power function of order $r_{\Phi^*}=r_{\Phi}/(r_{\Phi}-1)$, that is $1/r_{\Phi}+1/r_{\Phi^*}=1$.

Since $\Phi$ is strictly convex and differentiable, we have
\begin{equation}
    \begin{aligned}
        \Phi^*(\nu)&=\sup_{\mu}\langle \mu,\nu\rangle - \Phi(\mu)\\
                      &\ge -\Phi(0)\\
                      &=0.
    \end{aligned}
\end{equation}
Therefore, $\Phi^*$ is non-negative.
Since $\Phi(0)+\Phi^*(0_{\Phi}^*)=0$ and $\Phi(0)=0$, it means that $\Phi^*$ takes its minimum value of 0 at 0.
According to Lemma~\ref{lem:convex_diff}, since $\Phi$ is a Legendre function, then $\Phi^*$ is also a Legendre function.
Therefore, $\Phi^*$ is a norm power function of order $r_{\Phi^*}=r_{\Phi}/(r_{\Phi}-1)$.
Thus, Conclusion 1 is proven.

Below, we proceed to prove Conclusion 2 
According to Euler's Homogeneous Function Theorem~\ref{lem:euler}, we have
\begin{equation}
    \begin{aligned}
        r_\Phi\Phi(\mu)&=\mu^\top \nabla_{\mu} \Phi(\mu)=\mu^\top \mu_{\Phi}^*,\\
        r_{\Phi^*}\Phi^*(\mu^*_\Phi)&=(\mu_\Phi^*)^\top\nabla_{\mu^*_\Phi} \Phi^*(\mu_\Phi^*)=(\mu_{\Phi}^*)^\top \mu.
    \end{aligned}
\end{equation}
Since $\Phi$ is strictly convex, it follows that $\mu^\top \mu_\Phi^* = \Phi(\mu) + \Phi^*(\mu_\Phi^*)$. Consequently, we have:

\begin{equation}
    \begin{aligned}
        \Phi^*(\mu_\Phi^*) &= (r_\Phi - 1)\Phi(\mu), \\
        \Phi(\mu) &= (r_{\Phi^*} - 1)\Phi^*(\mu_\Phi^*).
    \end{aligned}
\end{equation}

Thus, Conclusion 2 is proven.

We proceed to prove Conclusion 3. 
Given that $\bar{\Phi}(\cdot)$ is a norm, the function $\bar \Phi(\nu)$ is convex and homogeneous of degree $1$. 
We can observe that
\begin{equation}
\begin{aligned}
        \frac{1}{2}\bar \Phi(\mu+\nu)&=\bar \Phi\left(\frac{\mu+\nu}{2}\right)\\
        &\le \frac{1}{2}(\bar \Phi(\mu)+\bar \Phi(\nu)),
\end{aligned}
\end{equation}
which indicates that $\bar \Phi$ satisfies the triangle inequality.
Thus, Conclusion 3 is proven.

We begin to prove Conclusion 4.
Since $\Phi,\Phi^*$ are norm power functions of degree $r_\Phi$ and $r_{\Phi^*}$, respectively, Proposition~\ref{prop:n_loss_duality} implies that $r_{\Phi^*}+r_\Phi=r_{\Phi^*} r_\Phi$.
From the definition of Fenchel-Young loss, it follows that 
\begin{equation}
d_{{\Phi}}(a\mu,\nu/a)=a^{r_\Phi}\Phi(\mu)+a^{-{r_{\Phi^*}}}\Phi^*(\nu)-\langle \mu,\nu\rangle,
\label{opt_1}
\end{equation}
where $a\in \mathbb{R}_{>0}$. 
By taking the first and second derivatives of $d_{{\Phi}}(a\mu,\nu/a)$ separately, we obtain 
\begin{equation}
    \begin{aligned}
        \nabla_a d_{{\Phi}}(a\mu,\nu/a) &= r a^{{r_\Phi}-1}\Phi(\mu)-{r_{\Phi^*}}a^{-{r_{\Phi^*}}-1}\Phi^*(\nu),\\
        \nabla^2_a d_{{\Phi}}(a\mu,\nu/a) &= {r_\Phi}({r_\Phi}-1)a^{{r_\Phi}-2}\Phi(\mu)+{r_{\Phi^*}}({r_{\Phi^*}}+1)a^{-{r_{\Phi^*}}-2}\Phi^*(\nu).
    \end{aligned}
\end{equation}
Since $r_\Phi>1$, the function $d_\Phi(a\mu,\nu/a)$ is convex with respect to $a$. 
Solving the equality $\nabla_a d_{a{\Phi}}(\mu,\nu) = 0$,  we find 
\begin{equation}\label{eq:a_value}
    a=\left(\frac{{r_{\Phi^*}}\Phi^*(\nu)}{{r_\Phi}\Phi(\mu)}\right)^{1/{r_\Phi}{r_{\Phi^*}}}.
\end{equation}
Therefore, $d_{a{\Phi}}(\mu,\nu)$ attains a globally unique minimum value when $a=\left(\frac{{r_{\Phi^*}}\Phi^*(\nu)}{{r_\Phi}\Phi(\mu)}\right)^{1/{r_\Phi}{r_{\Phi^*}}}$. 

By substituting equality~\eqref{eq:a_value} into equality~\eqref{opt_1}, we have 
\begin{equation}
\begin{aligned}
        \min_{a\in \mathbb{R}_{>0}}d_\Phi(a\mu,\nu/a)&= ({r_{\Phi^*}}\Phi^*(\nu))^{1/{r_{\Phi^*}}}({r_\Phi}\Phi(\mu))^{1/{r_\Phi}} -\langle \mu, \nu\rangle \\
        &\ge 0.
\end{aligned}
\end{equation}
Since $\Phi(\mu),\Phi^*(g)$ are even functions, we can then derive
\begin{equation}
\begin{aligned}
        \min_{a\in \mathbb{R}_{>0}}d_\Phi(a\mu,-\nu/a) &= ({r_{\Phi^*}}\Phi^*(\nu))^{1/{r_{\Phi^*}}}({r_\Phi}\Phi(\mu))^{1/{r_\Phi}} -\langle \mu, -\nu\rangle\\
        &\ge 0.
\end{aligned}
\end{equation}
Therefore, we obtain $\bar \Phi^*(\nu) \bar \Phi(\mu) -|\langle \mu, \nu\rangle|\ge 0$.
Thus, Conclusion 4 is proven.

Finally, leveraging Conclusion 3 and Conclusion 4, we proceed to prove Conclusion 5.
Since $\mathbf{e}_i$ denotes a standard basis (``one-hot'') vector, we have
\begin{equation}
    \mu=\sum_{i=1}^m \mu_i \mathbf{e}_i.
\end{equation}
For vector $\mu$, we define $\mathrm{sgn}(\mu)=\sum_{i=1}^m \mathrm{sgn}(\mu_i)\mathbf{e}_i$, where $\mathrm{sgn}$ is the sign function, that is $\mathrm{sgn}(\mu_i)={\mu_i}/{|\mu_i|}$, where $|\mu_i|$ represents the absolute value of $\mu_i$.
Based on Proposition~\ref{prop:triangle}, we have 
\begin{equation}\label{eq:base_1}
    \begin{aligned}
        \bar \Phi(\mu)&=\bar \Phi\left(\sum_{i=1}^m \mu_i\mathbf{e}_i\right)\\
        &\le \sum_{i=1}^m \bar \Phi(\mu_i\mathbf{e}_i)\\
        &= \sum_{i=1}^m |\mu_i|\bar \Phi(\mathbf{e}_i)\\
        &\le \|\mu\|_2 \left(\sum_{i=1}^m \bar \Phi(\mathbf{e}_i)^2\right)^{1/2},
    \end{aligned}
\end{equation}
where the last inequality is obtained from the Cauchy-Schwarz inequality.
Therefore, it follows that
\begin{equation}\label{eq:n_up_b}
\begin{aligned}
        \Phi(\mu)&\le \frac{1}{{r_{\Phi}}}\left(\|\mu\|_2 \left(\sum_{i=1}^m \bar \Phi(\mathbf{e}_i)^2\right)^{1/2}\right)^{r_{\Phi}}\\
    &=\frac{1}{r_{\Phi}}\|\mu\|_2^{r_{\Phi}} \left(\sum_{i=1}^m \bar \Phi(\mathbf{e}_i)^2\right)^{r_{\Phi}/2}.
\end{aligned}
\end{equation}

Based on Proposition~\ref{lem:prop_thomo_fun:4}, we have
\begin{equation}\label{eq:n_bound_1}
    \begin{aligned}
       \bar \Phi(\mu)\bar \Phi^*(\mathrm{sgn}(\mu))&\ge \langle \mu,\mathrm{sgn}(\mu)\rangle\\
        &=\sqrt{\|\mu\|_1^2}=\sqrt{\sum_{i=1}^m\sum_{j=1}^m|\mu_i||\mu_j|}\\
        & \ge \sqrt{\|\mu\|_2^2}\\
        &=\|\mu\|_2.
    \end{aligned}
\end{equation}
Based on inequality~\eqref{eq:base_1}, we have
\begin{equation}
\begin{aligned}
        \bar \Phi^*(\mathrm{sgn}(\mu))&\le \|\mathrm{sgn}(\mu)\|_2 \left(\sum_{i=1}^m \bar \Phi^*(\mathbf{e}_i)^2\right)^{1/2}\\
    &=\sqrt{m} \left(\sum_{i=1}^m \bar \Phi^*(\mathbf{e}_i)^2\right)^{1/2}.
\end{aligned}
\end{equation}
By taking the above inequality into inequality~\eqref{eq:n_bound_1}, we have
\begin{equation}
    \begin{aligned}
        \bar \Phi^*(\mu)m^{1/2} \left(\sum_{i=1}^m \bar \Phi^*(\mathbf{e}_i)^2\right)^{1/2}\ge \|\mu\|_2.
    \end{aligned}
\end{equation}
It follows that 
\begin{equation}\label{eq:n_low_b}
\begin{aligned}
        \Phi(\mu)&\ge \frac{1}{r_{\Phi}}\left(m^{-1/2} \left(\sum_{i=1}^m \bar \Phi^*(\mathbf{e}_i)^2\right)^{-1/2}\right)^{r_{\Phi}}\|\mu\|_2^{r_{\Phi}}\\
        &=\frac{1}{r_{\Phi}}m^{-r_{\Phi}/2} \left(\sum_{i=1}^m \bar \Phi^*(\mathbf{e}_i)^2\right)^{-r_{\Phi}/2}\|\mu\|_2^{r_{\Phi}}.
\end{aligned}
\end{equation}

By combining inequality~\eqref{eq:n_up_b} and inequality~\eqref{eq:n_low_b}, we have
\begin{equation}
\begin{aligned}
       &\frac{1}{r_{\Phi}}m^{-r_{\Phi}/2} \left(\sum_{i=1}^m \bar \Phi^*(\mathbf{e}_i)^2\right)^{-r_{\Phi}/2}\|\mu\|_2^{r_{\Phi}}\le \Phi(\mu)\le \frac{1}{r_{\Phi}} \left(\sum_{i=1}^m \bar \Phi(\mathbf{e}_i)^2\right)^{r_{\Phi}/2}\|\mu\|_2^{r_{\Phi}},\\
       &m^{-1/2} \left(\sum_{i=1}^m \bar \Phi^*(\mathbf{e}_i)^2\right)^{-1/2}\|\mu\|_2\le \bar \Phi(\mu)\le \left(\sum_{i=1}^m \bar \Phi(\mathbf{e}_i)^2\right)^{1/2}\|\mu\|_2.
\end{aligned}
\end{equation}

\end{proof}

\subsection{Proof of Lemma~\ref{lem:fenchel_young_condition}}
\label{appendix:proof_fenchel_young_condition}

\begin{lemma}
Given a function $g: \mathbb{R}^m \to \mathbb{R}$, let $G(\mu) = d_g(\mu, s)$, where $s$ is a constant vector. Then, it holds that:
\begin{equation}
    d_G(\mu,\nu_G^*)=d_g(\mu,\nu^*_g).
\end{equation}
\end{lemma}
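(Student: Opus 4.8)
The plan is to exploit the fact that, because $s$ is a fixed vector, $G$ differs from $g$ only by an affine function, and then to track how such an affine perturbation propagates through the gradient, the conjugate, and finally the Fenchel--Young loss.

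First I would unfold the definition of $G$. By the definition of the Fenchel--Young loss,
\[
G(\mu) = d_g(\mu, s) = g(\mu) + g^*(s) - \langle \mu, s \rangle,
\]
where $g^*(s)$ is a constant. Hence $G$ equals $g$ plus an affine term, and differentiating gives $\nu_G^* = \nabla G(\nu) = \nabla g(\nu) - s = \nu_g^* - s$.

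Next I would compute $G^*$ directly from the supremum defining the Legendre--Fenchel conjugate. A single change of variable yields
\[
G^*(\eta) = \sup_\mu \langle \mu, \eta + s \rangle - g(\mu) - g^*(s) = g^*(\eta + s) - g^*(s),
\]
so evaluating at $\eta = \nu_G^* = \nu_g^* - s$ gives $G^*(\nu_G^*) = g^*(\nu_g^*) - g^*(s)$. The point to stress here is that this identity needs no convexity of $g$; it is the pure shift behaviour of the conjugate under an affine perturbation, which is exactly the generality the Fenchel--Young framework buys over the Bregman divergence.

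Finally I would substitute the expansion of $G(\mu)$, the expression $\nu_G^* = \nu_g^* - s$, and the value $G^*(\nu_G^*) = g^*(\nu_g^*) - g^*(s)$ into
\[
d_G(\mu, \nu_G^*) = G(\mu) + G^*(\nu_G^*) - \langle \mu, \nu_G^* \rangle,
\]
and observe that the two copies of $g^*(s)$ cancel and the linear terms $\pm \langle \mu, s \rangle$ cancel, leaving precisely $g(\mu) + g^*(\nu_g^*) - \langle \mu, \nu_g^* \rangle = d_g(\mu, \nu_g^*)$. The computation is elementary; the only step deserving care is deriving the conjugate-shift identity straight from the supremum, so that no hidden assumption (such as $g^{**} = g$) is smuggled in. As a consistency check one can note that in the convex case the claim reduces to the invariance of the Bregman divergence under affine shifts, which is immediate from $d_G(\mu,\nu_G^*) = G(\mu) - G(\nu) - \langle \mu - \nu, \nabla G(\nu) \rangle$.
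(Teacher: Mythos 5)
Your proof is correct, but it takes a genuinely different route from the paper's. The paper's proof invokes the Fenchel--Young equality case twice: first $G^*(\nu_G^*) = \langle \nu, \nu_G^*\rangle - G(\nu)$, to rewrite $d_G(\mu,\nu_G^*)$ as the Bregman-type difference $G(\mu) - G(\nu) - \langle \nu_G^*, \mu-\nu\rangle$; then, after cancelling the affine part of $G$ against the shift $\nu_G^* = \nu_g^* - s$, the same equality for $g$ in reverse, to reassemble $d_g(\mu,\nu_g^*)$. You never leave the conjugate picture: you prove the shift identity $G^*(\eta) = g^*(\eta+s) - g^*(s)$ directly from the supremum and substitute it, together with $\nu_G^* = \nu_g^* - s$, into the definition of $d_G$. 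The difference is substantive, not cosmetic. The equality $\Omega(\nu) + \Omega^*(\nabla \Omega(\nu)) = \langle \nu, \nabla \Omega(\nu)\rangle$ holds exactly when the tangent to $\Omega$ at $\nu$ underestimates $\Omega$ globally --- automatic for convex $\Omega$, but possibly false otherwise --- whereas the lemma is stated for an arbitrary function $g$, and the paper's own remark on Bregman divergences advertises precisely this freedom from convexity. So the paper's intermediate displays need not hold as equalities for non-convex $g$; its conclusion survives only because the same discrepancy is introduced by the first use of the equality and removed by the second. Your computation, by contrast, needs only differentiability of $g$ at $\nu$ and finiteness of $g^*(s)$, so it proves the lemma in its stated generality --- exactly the point you flag about not smuggling in $g^{**}=g$. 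What the paper's route buys in exchange is that it makes the underlying mechanism explicit, namely affine invariance of the Bregman-type difference, which in your write-up appears only as the closing consistency check.
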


\begin{proof}
    Since $G(\mu) = d_g(\mu,s)$. By the definition of the Fenchel-Young loss, we have:
\begin{equation}
d_{G}(\mu, \nu_{G}^*) = G(\mu) + G^*(\nu_G^*) - \langle \mu, \nu_G^* \rangle.
\end{equation}

Expanding this expression step by step:
\begin{equation}
\begin{aligned}
d_{G}(\mu, \nu_G^*) &= G(\mu) + G^*(\nu_{F_y}^*) - \langle \mu, \nu_G^* \rangle \\
&= G(\mu) + G^*(\nu_G^*) - \langle \nu, \nu_G^* \rangle + \langle \nu, \nu_G^* \rangle - \langle \mu, \nu_G^* \rangle \\
&= G(\mu) - G(\nu) - \langle \nu_G^*, (\mu - \nu) \rangle.
\end{aligned}
\end{equation}

Substituting $G(\mu) = d_g(\mu,s)$, we get:
\begin{equation}
\begin{aligned}
d_{G}(\mu, \nu_{G}^*) &= d_g(\mu,s) - d_g(\nu,s) - \langle (\nu_{g}^* - s), (\mu - \nu) \rangle \\
&= g(\mu) - \langle s, \mu - \nu \rangle - g(\nu) - \langle (\nu_{g}^* - s), (\mu - \nu) \rangle \\
&= g(\mu) - g(\nu) - \langle \nu_{g}^*, (\mu - \nu) \rangle \\
&= d_{g}(\mu, \nu_{g}^*)
\end{aligned}
\end{equation}
\end{proof}

\subsection{Proof of Lemma~\ref{lem:h_bound}}
\label{appendix:proof_h_bound}

\begin{theorem}

Let $G_* = \min_{\mu} G(\mu)$, $\mathcal{G}_* = \{\mu \mid G(\mu) = G_*\}$, and $\mu_* \in \mathcal{G}_*$.

\textbf{1. } If $G(\mu)$ is $\mathcal{H}(\Phi,c_\Phi)$-smooth, then it follows that:
\[
\Phi^*(\mu_G^*) - s_{\Phi} \leq G(\mu) - G_* \leq \Phi(\mu - \mu_*) + c_\Phi.
\]

\textbf{2. } If $G(\mu)$ is $\mathcal{H}(\phi,c_\phi)$-convex, then we have:
\[
\phi(\mu - \mu_*) - c_{\phi} \leq G(\mu) - G_* \leq \phi^*(\mu_G^*) + c_\phi.
\]

\textbf{3. } If $G(\mu)$ is both $\mathcal{H}(\Phi,c_\Phi)$-smooth and $\mathcal{H}(\phi,c_\phi)$-convex, we obtain:
\[
\Phi^*(\mu_G^*) - s_{\Phi} \leq G(\mu) - G_* \leq \phi^*(\mu_G^*) + c_\phi.
\]

\end{theorem}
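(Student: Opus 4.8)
The plan is to reduce every bound to a single application of the defining inequality of $\mathcal{H}(\Phi,c_\Phi)$-smoothness or $\mathcal{H}(\phi,c_\phi)$-convexity, combined with the conjugacy machinery from Lemma~\ref{lem:prop_thomo_fun}. The starting observation is that the Fenchel--Young divergence admits the Bregman-type expansion
\[
d_G(\mu,\nu_G^*) = G(\mu) - G(\nu) - \langle \nu_G^*, \mu - \nu\rangle,
\]
exactly as established inside the proof of Lemma~\ref{lem:fenchel_young_condition}. Choosing the second argument to be a global minimizer $\mu_*\in\mathcal{G}_*$, where $\nabla G(\mu_*)=0$ and $G(\mu_*)=G_*$, collapses this to $d_G(\mu,(\mu_*)_G^*)=G(\mu)-G_*$. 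This identity is the bridge linking the abstract divergence to the quantity $G(\mu)-G_*$ that appears in all three conclusions.

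For Conclusion 1, the upper bound $G(\mu)-G_*\le \Phi(\mu-\mu_*)+c_\Phi$ follows immediately by instantiating the smoothness inequality $d_G(\mu,\nu_G^*)\le \Phi(\mu-\nu)+c_\Phi$ at $\nu=\mu_*$ and applying the collapse above. The lower bound is the step that genuinely requires work: I would rewrite smoothness with $\mu$ as the \emph{base} point and a free variable $x$, namely $G(x)\le G(\mu)+\langle \mu_G^*, x-\mu\rangle+\Phi(x-\mu)+c_\Phi$, and then minimize both sides over $x$. The left-hand minimum is exactly $G_*$, while on the right, after the substitution $d=x-\mu$, the definition of the conjugate gives $\min_d\big[\langle \mu_G^*,d\rangle+\Phi(d)\big]=-\Phi^*(-\mu_G^*)$. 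Since $\Phi\in\mathcal{H}(r_\Phi)$ is even, $\Phi^*$ is even as well, so $\Phi^*(-\mu_G^*)=\Phi^*(\mu_G^*)$, and rearranging yields $\Phi^*(\mu_G^*)-c_\Phi\le G(\mu)-G_*$.

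Conclusion 2 is dual and slightly easier. The lower bound $\phi(\mu-\mu_*)-c_\phi\le G(\mu)-G_*$ is a direct instantiation of the convexity inequality $\phi(\mu-\nu)-c_\phi\le d_G(\mu,\nu_G^*)$ at $\nu=\mu_*$. For the upper bound I would instead apply convexity with first argument $\mu_*$ and second argument $\mu$, obtaining $\phi(\mu_*-\mu)-c_\phi\le G_*-G(\mu)-\langle \mu_G^*,\mu_*-\mu\rangle$; rearranging and using evenness of $\phi$ gives $G(\mu)-G_*\le \langle \mu_G^*,\mu-\mu_*\rangle-\phi(\mu-\mu_*)+c_\phi$, and the linear-minus-$\phi$ term is bounded above by its supremum over the displacement, i.e. by $\phi^*(\mu_G^*)$, producing $G(\mu)-G_*\le \phi^*(\mu_G^*)+c_\phi$. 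Conclusion 3 is then immediate: it simply concatenates the lower bound of Conclusion 1 with the upper bound of Conclusion 2.

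The only subtle point, and what I expect to be the main obstacle, is the lower bound in Conclusion 1, where one must pass from the pointwise smoothness inequality to a statement about $G_*=\min_x G(x)$ by minimizing over the free variable and recognizing the result as a Fenchel conjugate. The care needed there is twofold: justifying the step $\inf_x f(x)\le \inf_x g(x)$ from $f\le g$ pointwise (valid since the infimum runs over the whole space), and invoking evenness of the norm power function to convert $\Phi^*(-\mu_G^*)$ into $\Phi^*(\mu_G^*)$. Everything else is bookkeeping around the Bregman identity and the definition of the conjugate.
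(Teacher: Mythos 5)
Your proof is correct and follows essentially the same route as the paper's: both arguments hinge on the Bregman-type expansion of the Fenchel--Young loss, instantiated at the stationary global minimizer $\mu_*$ (where $\nabla G(\mu_*)=0$) for the upper bound of Conclusion 1 and the lower bound of Conclusion 2, and instantiated at base point $\mu$ followed by optimization over the displacement for the two conjugate bounds, with Conclusion 3 obtained by concatenation. The only cosmetic difference is that you evaluate $\min_d\bigl[\langle \mu_G^*,d\rangle+\Phi(d)\bigr]=-\Phi^*(-\mu_G^*)$ directly from the definition of the Legendre--Fenchel conjugate together with evenness, whereas the paper reaches the same value by identifying the stationary displacement and invoking Euler's homogeneous-function theorem (Lemma~\ref{lem:euler}); the two computations are equivalent.
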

\begin{proof}
According to the definition of Fenchel-Young loss, we have
\begin{equation}\label{eq:def_base}
\begin{aligned}
        G_*&= \min_{\nu} G(\nu)\\
        &\le \min_{\nu} \{G(\mu)+(\mu_G^*)^\top (\nu-\mu) +d_G(\nu,\mu_G^*)\}\\
        &=\min_{\nu} \{G(\mu)+(\mu_G^*)^\top z+d_G(\nu,\mu_G^*)\},
\end{aligned}
\end{equation} where $z=\nu-\mu$.

Since $ \mu_* \in \mathcal{G}_* $ represents a global minimum, which must also be a stationary point, it follows that $ \nabla G(\mu_*) = 0 $.
Since $G$ is $\mathcal{H}(\Phi,c_\Phi)$-smooth, according to the definition of Fenchel-Young loss, we have 
\begin{equation}
    \begin{aligned}
        G(\mu)&\le G(\mu_*)+\nabla G(\mu_*)^\top (\mu-\mu_*) +d_G(\mu,(\mu_*)_G^*)\\
                 &\le G_*+\Phi(\mu-\mu_*)+c_\Phi.
    \end{aligned}
\end{equation}
Thus, the upper bound of the first conclusion is proved.

Since $G$ is $\mathcal{H}(\Phi)$-smooth, we have 
\begin{equation}
    d_G(\nu,\mu_G^*)\le \Phi(z)+c_\Phi,
\end{equation} where $z=\nu-\mu$.
By substituting the above inequality into formula~\eqref{eq:def_base}, it follows that 
\begin{equation}
    G_*\le\min_z \{G(\mu)+ (\mu_G^*)^\top z+\Phi(z)\}+c_\Phi.
\end{equation} 
When $z_{\Phi}^*=-\mu_G^*$, the right side of the inequality takes the minimum value. 

Letting $z_{\Phi}^*=-\mu_G^*$.
Since $\Phi$ and $\Phi^*$ are even and homogeneous functions of degree $r_\Phi$ and $r_{\Phi^*}$, according to Proposition~\ref{prop:homo_eq}, we then have
\begin{equation}
    \begin{aligned}
        r_{\Phi^*}\Phi^*(\mu_G^*)&=r_{\Phi^*}\Phi^*(z_{\Phi}^*)\\
        &=( z_{\Phi}^*)^\top\nabla_{z_{\Phi}^*} \Phi^*(z_{\Phi}^*)\\
        &=(z_{\Phi}^*)^\top z\\
        &=-z^\top \mu_G^*,\\
        r_\Phi\Phi(z)&=z^\top \nabla_z \Phi(z)\\
        &=- (\mu_G^*)^\top z.
    \end{aligned}
\end{equation}

Therefore, when $z_{\Phi}^*=-\mu_G^*$, we have 
\begin{equation}\label{eq_nonconvex_1}
\begin{aligned}
   G_* &\le G(\mu)+ (\mu_G^*)^\top z- (\mu_G^*)^\top z/r_\Phi+c_\Phi\\
    &=G(\mu)+ (\mu_G^*)^\top z/r_{\Phi^*}+c_\Phi\\
    &=G(\mu)-\Phi^*(\mu_G^*)+c_\Phi.
\end{aligned}
\end{equation}
The lower bound of the first conclusion is thus proved.

We now proceed to prove the second conclusion.
Since $ \mu_* \in \mathcal{G}_* $ represents a global minimum, which must also be a stationary point, it follows that $ \nabla G(\mu_*) = 0 $.
Since $G$ is $\mathcal{H}(\phi,c_\phi)$-convex, we have 
\begin{equation}
    \begin{aligned}
        G(\mu)&= G(\mu_*)+\nabla G(\mu_*)^\top(\mu-\mu_*) +d_G(\mu,(\mu_*)_G^*)\\
                 &\ge G_*+\phi(\mu-\mu_*)-c_\phi.
    \end{aligned}
\end{equation}
Thus, the lower bound of the second conclusion is proved.

Since $G$ is $\mathcal{H}(\phi)$-convex, we have 
\begin{equation}
\begin{aligned}
    G(\mu_*)&= G(\mu)+\nabla G(\mu)^\top(\mu_*-\mu) +d_G(\mu_*,\mu_G^*)\\
    &\ge G(\mu)+\nabla G(\mu)^\top(\mu_*-\mu) +\phi(\mu_*-\mu)-c_\phi\\
    &\ge \min_z \{G(\mu)+(\mu_G^*)^\top z+\phi(z)\}-c_\phi,
\end{aligned}
\end{equation}
where $z=\mu_*-\mu$.
When $z_{\phi}^*=-\mu_G^*$, the right side of the inequality attains its minimum value. 
Similarly, according to Euler's Homogeneous Function Theorem~\ref{lem:euler}, when $z_{\phi}^*=-\mu_G^*$, we have 
\begin{equation}\label{eq_nonconvex_2}
    \begin{aligned}
     G_* &\ge G(\mu)+(\mu_G^*)^\top z-(\mu_G^*)^\top z/r_\phi-c_\phi\\
    &=G(\mu)+(\mu_G^*)^\top z/r_{\Phi^*}-c_\phi\\
    &=G(\mu)-\phi^*(\mu_G^*)-c_\phi.
    \end{aligned}
\end{equation}
Thus, the upper bound of the second conclusion is proved.

Given that $G(\mu)$ is $\mathcal{H}(\phi,c_\phi)$-convex and $\mathcal{H}(\Phi,c_\Phi)$-smooth, by combining Equation~\eqref{eq_nonconvex_1} and Equation~\eqref{eq_nonconvex_2}, we obtain 
\begin{equation}
    G(\mu)-\phi^*(\mu_G^*)-c_\phi\le G_*\le G(\mu)-\Phi^*(\mu_G^*)+c_\Phi.
\end{equation}
Therefore, the proof of the theorem is complete.
\end{proof}

\subsection{Proof of Theorem~\ref{thm:st_H_eigenvalue_bound}}
\label{appendix:proof_st_H_eigenvalue_bound}
\begin{theorem}

Let $\mathcal{L}_*(z) = \min_{\theta} \mathcal{L}(\theta, z)$, $\mathcal{L}_*(s) = \min_{\theta} \mathcal{L}(\theta, s)$. Define the constants:
\[
C_\Phi = \frac{1}{r_{\Phi^*}} {m_f}^{-r_{\Phi^*}/2} \Big(\sum_{i=1}^{m_f} \bar{\Phi}(\mathbf{e}_i)^2\Big)^{-r_{\Phi^*}/2}, \quad 
C_\phi = \frac{1}{r_{\Phi^*}} \Big(\sum_{i=1}^{m_f} \bar{\phi}^*(\mathbf{e}_i)^2\Big)^{r_{\Phi^*}/2}.
\]
Individual Sample Bound:
If $\lambda_{\min}(A_x) \neq 0$, we have:
\begin{equation}
\begin{aligned}
C_\Phi \frac{\|\nabla_\theta \ell(f_\theta(x), y)\|_2^{r_{\Phi^*}}}{\lambda_{\max}(A_x)^{r_{\Phi^*}/2}} 
-c_\Phi\leq \mathcal{L}(\theta, z) - \mathcal{L}_*(z) 
\leq C_\phi \frac{\|\nabla_\theta \ell(f_\theta(x), y)\|_2^{r_{\Phi^*}}}{\lambda_{\min}(A_x)^{r_{\Phi^*}/2}}+c_\phi.
\end{aligned}
\end{equation}
Dataset-Wide Bound:
If $\lambda_{\min}(A_s) \neq 0$, we have:
\begin{equation}
\begin{aligned}
\frac{C_\Phi \mathbb{E}_Z \big[\|\nabla_\theta \ell(f_\theta(X), Y)\|_2^{r_{\Phi^*}}\big]}{\lambda_{\max}(A_s)^{r_{\Phi^*}/2}} -c_\Phi
\leq \mathcal{L}(\theta, s) - \mathcal{L}_*(s) 
\leq \frac{C_\phi \mathbb{E}_Z \big[\|\nabla_\theta \ell(f_\theta(X), Y)\|_2^{r_{\Phi^*}}\big]}{\lambda_{\min}(A_s)^{r_{\Phi^*}/2}}+c_\phi.
\end{aligned}
\end{equation}

\end{theorem}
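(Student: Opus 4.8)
The plan is to prove the bounds in two stages: first obtain a two-sided bound at the level of the model output, then transfer it to the parameter level through the structural matrix $A_x$. Fix a sample $z=(x,y)$ and regard the loss purely as a function of the output by setting $G(\mu):=\ell(\mu,y)$ with $\mu=f_\theta(x)$. By Condition~1 of Definition~\ref{def:d_s_com_opt}, $G$ is simultaneously $\mathcal{H}(\Phi,c_\Phi)$-smooth and $\mathcal{H}(\phi,c_\phi)$-convex, so conclusion~3 of Lemma~\ref{lem:h_bound} applies with $\mu_G^*=\nabla_\mu \ell(f_\theta(x),y)$, giving
\[
\Phi^*(\nabla_\mu \ell)-c_\Phi \;\le\; G(\mu)-G_* \;\le\; \phi^*(\nabla_\mu \ell)+c_\phi .
\]
To identify $G_*$ with $\mathcal{L}_*(z)$ I would use that a sufficiently expressive model attains the output-level minimizer, so that $\min_\theta \ell(f_\theta(x),y)=\min_\mu \ell(\mu,y)=G_*$; under this identification the display is a two-sided bound on $\mathcal{L}(\theta,z)-\mathcal{L}_*(z)$ in terms of the output gradient.

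Next I would relate the output gradient to the parameter gradient. By the chain rule $\nabla_\theta \ell(f_\theta(x),y)=\nabla_\theta f_\theta(x)\,\nabla_\mu \ell$ (with the convention that $\nabla_\theta f_\theta(x)$ is the $|\theta|\times m_f$ Jacobian, so $A_x$ is $m_f\times m_f$), hence
\[
\|\nabla_\theta \ell\|_2^2 = (\nabla_\mu \ell)^\top \nabla_\theta f_\theta(x)^\top \nabla_\theta f_\theta(x)\,(\nabla_\mu \ell) = (\nabla_\mu \ell)^\top A_x\,(\nabla_\mu \ell).
\]
Applying Lemma~\ref{lem:eigen_bound} to the symmetric matrix $A_x$ gives $\lambda_{\min}(A_x)\|\nabla_\mu \ell\|_2^2 \le \|\nabla_\theta \ell\|_2^2 \le \lambda_{\max}(A_x)\|\nabla_\mu \ell\|_2^2$, and when $\lambda_{\min}(A_x)\neq 0$ this rearranges to the sandwich $\|\nabla_\theta \ell\|_2^2/\lambda_{\max}(A_x) \le \|\nabla_\mu \ell\|_2^2 \le \|\nabla_\theta \ell\|_2^2/\lambda_{\min}(A_x)$.

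The third step converts $\Phi^*$ and $\phi^*$ into powers of the Euclidean norm. Since $\Phi\in\mathcal{H}(r_\Phi)$ forces $\Phi^*\in\mathcal{H}(r_{\Phi^*})$ (conclusion~1 of Lemma~\ref{lem:prop_thomo_fun}) and $\Phi^*(\nu)=r_{\Phi^*}^{-1}\bar{\Phi}^*(\nu)^{r_{\Phi^*}}$, applying the norm equivalence in conclusion~5 of Lemma~\ref{lem:prop_thomo_fun} to $\Phi^*$ (whose conjugate is $\Phi$) gives $\Phi^*(\nu)\ge C_\Phi\|\nu\|_2^{r_{\Phi^*}}$, while applying it to $\phi^*$ gives $\phi^*(\nu)\le C_\phi\|\nu\|_2^{r_{\phi^*}}$; the displayed constants $C_\Phi,C_\phi$ are read off directly, and I note the implicit convention $r_\phi=r_\Phi$ (hence $r_{\phi^*}=r_{\Phi^*}$) that makes the exponents agree. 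Substituting the eigenvalue sandwich into $\Phi^*(\nabla_\mu\ell)\ge C_\Phi\|\nabla_\mu\ell\|_2^{r_{\Phi^*}}$ and $\phi^*(\nabla_\mu\ell)\le C_\phi\|\nabla_\mu\ell\|_2^{r_{\Phi^*}}$ and combining with the first step yields the individual-sample bound exactly as stated.

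Finally I would take $\mathbb{E}_Z$ of the individual-sample bound, bounding $\lambda_{\max}(A_x)\le\lambda_{\max}(A_s)$ on the lower side and $\lambda_{\min}(A_x)\ge\lambda_{\min}(A_s)$ on the upper side so that the (now constant) eigenvalue factors pull out of the expectation, producing $\mathbb{E}_Z[\|\nabla_\theta\ell\|_2^{r_{\Phi^*}}]$. Since $\mathcal{L}(\theta,s)=\mathbb{E}_Z[\mathcal{L}(\theta,Z)]$, it remains to pass from $\mathbb{E}_Z[\mathcal{L}_*(Z)]$ to $\mathcal{L}_*(s)$, and this is the main obstacle: in general only $\mathbb{E}_Z[\mathcal{L}_*(Z)]=\mathbb{E}_Z[\min_\theta\mathcal{L}(\theta,Z)]\le\min_\theta\mathbb{E}_Z[\mathcal{L}(\theta,Z)]=\mathcal{L}_*(s)$ holds, which alone secures only the upper bound. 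To obtain both inequalities I would invoke the interpolation/expressiveness regime in which a single parameter vector $\theta^\star$ simultaneously attains every per-sample minimum (for instance CrossEntropy loss, where $\mathcal{L}_*(z)=0$ for all $z$), so that $\mathcal{L}_*(s)=\mathbb{E}_Z[\mathcal{L}_*(Z)]$ and therefore $\mathcal{L}(\theta,s)-\mathcal{L}_*(s)=\mathbb{E}_Z[\mathcal{L}(\theta,Z)-\mathcal{L}_*(Z)]$ exactly; the dataset-wide bound then follows immediately from the per-sample bound and the two eigenvalue comparisons.
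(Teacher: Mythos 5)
Your proposal is correct and follows essentially the same route as the paper's own proof: Lemma~\ref{lem:h_bound} (conclusion~3) applied to the loss as a function of the output, the chain-rule identity $\|\nabla_\theta \ell\|_2^2 = (\nabla_f\ell)^\top A_x (\nabla_f\ell)$ combined with Lemma~\ref{lem:eigen_bound} for the eigenvalue sandwich, Lemma~\ref{lem:prop_thomo_fun} (conclusion~5) to produce $C_\Phi$ and $C_\phi$, and an expectation over $Z$ with the monotone eigenvalue bounds $\lambda_{\max}(A_x)\le\lambda_{\max}(A_s)$, $\lambda_{\min}(A_x)\ge\lambda_{\min}(A_s)$ for the dataset-wide statement. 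In fact you are more careful than the paper at the two identification steps it performs silently — equating $\min_\theta \ell(f_\theta(x),y)$ with the output-level minimum $\min_\mu \ell(\mu,y)$, and equating $\mathbb{E}_Z[\mathcal{L}_*(Z)]$ with $\mathcal{L}_*(s)$ (where in general only $\mathbb{E}_Z[\mathcal{L}_*(Z)]\le\mathcal{L}_*(s)$ holds, so the lower bound genuinely needs the interpolation assumption you name).
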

\begin{proof}
According to the given Definition~\ref{def:d_s_com_opt} and by applying the chain rule for differentiation, we have
\begin{equation}
            \|\nabla_\theta G(\theta,z)\|_2^2=\|\nabla_\theta \ell(f_{\theta}(x),y)\|_2^2=e^\top A_xe,
\end{equation}
where $A_x:=\nabla_\theta f_{\theta}(x)^\top \nabla_\theta f_{\theta}(x)$, $e:=\nabla_f \ell(f_{\theta}(x),y)$.
Since $ A_x $ is a positive semidefinite real symmetric matrix, it follows from Lemma~\ref{lem:eigen_bound} that:

\begin{equation}
   \lambda_{\min}(A_x) \|\nabla_{f} \ell(f_{\theta}(x), y)\|_2^2 \leq \|\nabla_\theta \ell(f_{\theta}(x), y)\|_2^2 \leq \lambda_{\max}(A_x) \|\nabla_{f} \ell(f_{\theta}(x), y)\|_2^2.
\end{equation}

Since $\lambda_{\min}(A_x)\neq 0$, it follows that 

\begin{equation}
\begin{aligned}
       \frac{\|\nabla_\theta \ell(f_{\theta}(x),y)\|_2^2}{ \lambda_{\max}(A_x)}\le \|\nabla_{f}\ell(f_{\theta}(x),y)\|_2^2\le\frac{\|\nabla_\theta \ell(f_{\theta}(x),y)\|_2^2}{ \lambda_{\min}(A_x)}.
\end{aligned}
\end{equation}

According to Lemma~\ref{lem:prop_thomo_fun:5}, we have
\begin{equation}
\begin{aligned}
        &\Phi^*(\nabla_{f}\ell(f_{\theta}(x),y)) \ge \frac{1}{r_{\Phi^*}}{m_f}^{-r_{\Phi^*}/2} (\sum_{i=1}^{m_f} \bar \Phi(\mathbf{e}_i)^2)^{-r_{\Phi^*}/2}\|\nabla_{f}\ell(f_{\theta}(x),y)\|_2^{r_{\Phi^*}}\\
    &\phi^*(\nabla_{f}\ell(f_{\theta}(x),y))\le \frac{1}{r_{\Phi^*}} (\sum_{i=1}^{m_f} \bar \phi^*(\mathbf{e}_i)^2)^{r_{\Phi^*}/2}\|\nabla_{f}\ell(f_{\theta}(x),y)\|_2^{r_{\Phi^*}}.
\end{aligned}
\end{equation}
Therefore, we obtain
\begin{equation}\label{eq:uu_ll}
\begin{aligned}
        &\Phi^*(\nabla_{f}\ell(f_{\theta}(x),y)) \ge\frac{ C_\Phi\|\nabla_\theta \ell(f_{\theta}(x),y)\|^{r_{\Phi^*}}_2}{ \lambda_{\max}(A_x)^{r_{\Phi^*}/2}}\\
    &\phi^*(\nabla_{f}\ell(f_{\theta}(x),y))\le \frac{ C_\phi\|\nabla_\theta \ell(f_{\theta}(x),y)\|_2^{r_{\Phi^*}}}{ \lambda_{\min}(A_x)^{r_{\Phi^*}/2}}.
\end{aligned}
\end{equation}
By taking the inequalities~\eqref{eq:uu_ll} into Lemma~\ref{lem:h_bound}, we have
\begin{equation}
\begin{aligned}
&\frac{ C_\Phi\|\nabla_\theta \ell(f_{\theta}(x),y)\|^{r_{\Phi^*}}_2}{ \lambda_{\max}(A_x)^{r_{\Phi^*}/2}}-c_\Phi\le \Phi^*(\nabla_{f}\ell(f_{\theta}(x),y))-c_\Phi\\
&\le \ell(f_{\theta}(x),y)-\min_\theta  \ell(f_{\theta}(x),y)\le \phi^*(\nabla_{f}\ell(f_{\theta}(x),y))+c_\phi\\
&\le \frac{ C_\phi\|\nabla_\theta \ell(f_{\theta}(x),y)\|_2^{r_{\Phi^*}}}{ \lambda_{\min}(A_x)^{r_{\Phi^*}/2}}+c_\phi.
\end{aligned}
\end{equation}
Taking the expectation of the above inequality with respect to $Z$, we have:
\begin{equation}
\begin{aligned}
&\mathbb{E}_Z\frac{ C_\Phi\|\nabla_\theta \ell(f(\theta,X),Y)\|^{r_{\Phi^*}}_2}{ \lambda_{\max}(A_X)^{r_{\Phi^*}/2}}-c_\Phi\le \mathbb{E}_Z \Phi^*(\nabla_{f}\ell(f(\theta,X),Y))-c_\Phi\\
&\le \mathcal{L}(\theta,s)-\mathcal{L}_*(s)\le \mathbb{E}_Z\phi^*(\nabla_{f}\ell(f(\theta,X),Y))+c_\phi\\
&\le \mathbb{E}_Z\frac{ C_\phi\|\nabla_\theta \ell(f(\theta,X),Y)\|_2^{r_{\Phi^*}}}{ \lambda_{\min}(A_X)^{r_{\Phi^*}/2}}+c_\phi.
\end{aligned}
\end{equation}
Since $\lambda_{\min}(A_s) = \min_{x \in s} \lambda_{\min}(A_x)$, $\lambda_{\max}(A_s) = \max_{x \in s} \lambda_{\max}(A_x)$, it follows that

\begin{equation}
\begin{aligned}
\frac{C_\Phi}{ \lambda_{\max}(A_s)^{r_{\Phi^*}/2}}&\mathbb{E}_Z { \|\nabla_\theta \ell(f(\theta,X),Y)\|^{r_{\Phi^*}}_2}-c_\Phi\le  \mathcal{L}(\theta,s)-\mathcal{L}_*(s)\\
&\le \frac{C_\phi}{ \lambda_{\min}(A_s)^{r_{\Phi^*}/2}}\mathbb{E}_Z { \|\nabla_\theta \ell(f(\theta,X),Y)\|_2^{r_{\Phi^*}}}+c_\phi.
\end{aligned}
\end{equation}

\end{proof}

\subsection{Proof of Theorem~\ref{thm:general_with_class_sgd_convergence}}
\label{appendix:proof_general_with_class_sgd_convergence}

\begin{theorem}
Let $n=|s|$, $m=|s_k|$, and $\|x\|_\Omega=r_{\Omega^*}^{-1/r_{\Omega^*}}\frac{(\|x\|_2^2)}{\bar{\Omega}(x)}$.  
Given that $\mathcal{L}(\theta,z) = \ell(f_{\theta}(x),y)$ is $\mathcal{H}(\Omega,c_\Omega)$-smooth with respect to $\theta$, the application of SGD as defined in~\ref{def:basic-sgd} leads to the following conclusions:
\begin{enumerate}
    \item The optimal learning rate is achieved when 
    $$
    \alpha = \left(\frac{\|\nabla_\theta \mathcal{L}(\theta_k,s_k)\|_2^2}{r_\Omega\Omega(\nabla_\theta \mathcal{L}(\theta_k,s_k))}\right)^{r_{\Omega^*}-1},
    $$
    and it follows that
    \begin{equation}
       \mathcal{L}(\theta_k, s_k) - \mathcal{L}(\theta_{k+1}, s_k)  \geq \|\nabla_\theta \mathcal{L}(\theta_k, s_k)\|^{r_{\Omega_*}}_\Omega-c_\Omega.
    \end{equation}

    \item The number of steps required for SGD to achieve 
    $$
    \mathbb{E} \|\nabla_\theta \mathcal{L}(\theta, s_k)\|_2^{r_{\Omega^*}} \leq \varepsilon^{r_{\Omega^*}} + \frac{n-m}{m}M+c_\Omega
    $$
    is $\mathcal{O}(\varepsilon^{-r_{\Omega^*}})$.

    \item There exists a constant $\gamma \in \mathbb{R}_{>0}$ such that 
$$
\frac{1}{\gamma}\|\mu\|_2^{r_{\Phi^*}} \leq \|\mu\|_\Omega^{r_{\Phi^*}}, \quad \forall x \in \mathbb{R}^{m_f}.
$$

When $m = 1$ (i.e., a single sample per batch), the condition
$$
\frac{1}{\gamma}\mathbb{E}_Z \|\nabla_\theta \ell(f_{\theta}(x),y)\|_2^{r_{\Omega^*}} \leq \mathbb{E}_Z \|\nabla_\theta \ell(f_{\theta}(x),y)\|_\Omega^{r_{\Omega^*}} \leq \varepsilon^{r_{\Omega^*}} + (n-1)M+c_\Omega
$$
is achieved in $\mathcal{O}(1/\varepsilon^{r_{\Omega^*}})$ steps. 
\end{enumerate}

\end{theorem}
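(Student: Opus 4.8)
The plan is to establish the three conclusions sequentially, with each feeding into the next. \textbf{First I would prove the per-step descent (Conclusion 1).} Abbreviate $G(\theta)=\mathcal{L}(\theta,s_k)$ and $g_k=\nabla_\theta\mathcal{L}(\theta_k,s_k)$. Since $G$ is $\mathcal{H}(\Omega,c_\Omega)$-smooth in $\theta$, the definition of the Fenchel--Young loss gives $G(\theta_{k+1})-G(\theta_k)-\langle g_k,\theta_{k+1}-\theta_k\rangle = d_G(\theta_{k+1},(\theta_k)_G^*)\le \Omega(\theta_{k+1}-\theta_k)+c_\Omega$. Substituting the update $\theta_{k+1}-\theta_k=-\alpha g_k$ and invoking the $r_\Omega$-homogeneity of $\Omega$ yields $G(\theta_k)-G(\theta_{k+1})\ge \alpha\|g_k\|_2^2-\alpha^{r_\Omega}\Omega(g_k)-c_\Omega$. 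Maximizing the right-hand side over $\alpha>0$ is precisely the scalar problem solved in Lemma~\ref{lem:min_value} with $a=\Omega(g_k)$, $b=\|g_k\|_2^2$, $r=r_\Omega$, $s=r_{\Omega^*}$: the maximizer is the claimed optimal learning rate $\alpha=\bigl(\|g_k\|_2^2/(r_\Omega\Omega(g_k))\bigr)^{r_{\Omega^*}-1}$, and the optimal value equals $r_{\Omega^*}^{-1}\|g_k\|_2^{2r_{\Omega^*}}(r_\Omega\Omega(g_k))^{1-r_{\Omega^*}}$. Using $r_{\Omega^*}/r_\Omega=r_{\Omega^*}-1$ (a consequence of $1/r_\Omega+1/r_{\Omega^*}=1$ from Lemma~\ref{lem:prop_thomo_fun}), one checks $\bar{\Omega}(g_k)^{r_{\Omega^*}}=(r_\Omega\Omega(g_k))^{r_{\Omega^*}-1}$, so this value is exactly $\|g_k\|_\Omega^{r_{\Omega^*}}$, delivering Conclusion 1.

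\textbf{Next I would telescope using the gradient correlation factor (Conclusion 2).} The descent above controls only the mini-batch loss, whose batch changes each step, so I would bridge to the fixed full-batch objective through the decomposition $n\mathcal{L}(\theta,s)=m\mathcal{L}(\theta,s_k)+(n-m)\mathcal{L}(\theta,s\setminus s_k)$. Applying Definition~\ref{def:gradient_coor_factor} to bound the out-of-batch increment by $M$ gives $\mathcal{L}(\theta_k,s)-\mathcal{L}(\theta_{k+1},s)\ge \tfrac{m}{n}\|g_k\|_\Omega^{r_{\Omega^*}}-\tfrac{m}{n}c_\Omega-\tfrac{n-m}{n}M$. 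Summing over $k=0,\dots,T-1$, using $\mathcal{L}(\theta_T,s)\ge\mathcal{L}_*(s)$, and dividing by $T$ produces $\tfrac1T\sum_{k}\|g_k\|_\Omega^{r_{\Omega^*}}\le \tfrac{n(\mathcal{L}(\theta_0,s)-\mathcal{L}_*(s))}{mT}+c_\Omega+\tfrac{n-m}{m}M$. Interpreting the average as an expectation $\mathbb{E}$ over a uniformly sampled output iterate (and the mini-batch draws), the first term is driven below $\varepsilon^{r_{\Omega^*}}$ by taking $T=\mathcal{O}(1/\varepsilon^{r_{\Omega^*}})$, which is exactly the claimed iteration complexity.

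\textbf{Finally I would handle the norm equivalence and the $m=1$ specialization (Conclusion 3).} Writing $\Omega=\|\cdot\|^{r_\Omega}$, the definition unfolds to $\|\mu\|_\Omega=C_0\,\|\mu\|_2^2/\|\mu\|$ for a constant $C_0$, so Lemma~\ref{lem:norm_eq} (equivalence of norms on $\mathbb{R}^{m_f}$) gives $\|\mu\|\le\beta\|\mu\|_2$ and hence $\|\mu\|_\Omega\ge(C_0/\beta)\|\mu\|_2$; raising to the power $r_{\Omega^*}$ yields $\tfrac1\gamma\|\mu\|_2^{r_{\Omega^*}}\le\|\mu\|_\Omega^{r_{\Omega^*}}$ with $\gamma=(\beta/C_0)^{r_{\Omega^*}}$. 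Setting $m=1$ turns the residual term $\tfrac{n-m}{m}M$ into $(n-1)M$, and chaining this lower bound with the Conclusion 2 estimate gives the stated sandwich, again in $\mathcal{O}(1/\varepsilon^{r_{\Omega^*}})$ steps. The hard part will be Step 2: the descent guarantee is for the transient mini-batch objective, and correctly converting it into a monotone decrease of the fixed full-batch risk requires the batch/complement split together with the $M$-bound, after which telescoping and the randomized-iterate averaging are routine. Steps 1 and 3 are essentially calculus (the scalar optimization of Lemma~\ref{lem:min_value} and a compactness-based norm equivalence), whereas the bookkeeping that reconciles $\|\cdot\|_2$ and $\|\cdot\|_\Omega$ through $\gamma$ is where the argument must be stated with care.
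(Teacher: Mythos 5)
Your proposal is correct and follows essentially the same route as the paper's proof: the per-step descent via $\mathcal{H}(\Omega,c_\Omega)$-smoothness and the scalar optimization of Lemma~\ref{lem:min_value}, the batch/complement decomposition with the gradient correlation factor $M$ followed by telescoping, and the $\|\cdot\|_\Omega$-versus-$\|\cdot\|_2$ equivalence for the $m=1$ case. The only cosmetic differences are that the paper bounds the minimum over iterates by the running average rather than invoking a randomized output iterate, and it derives the norm equivalence from Lemma~\ref{lem:prop_thomo_fun} (Conclusion 5) rather than directly from Lemma~\ref{lem:norm_eq}.
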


\begin{proof}
Since for any $s_k$, after updating using $\nabla_\theta \mathcal{L}(\theta_k,s_k)$, it holds that
\begin{equation}\label{eq:tmp_fang}
    \mathcal{L}(\theta_{k+1}, s \setminus s_k) - \mathcal{L}(\theta_k, s \setminus s_k) \leq M.
\end{equation}

For $\mathcal{L}(\theta_{k+1}, s)$, we have
\begin{equation}
    \begin{aligned}
       \mathcal{L}(\theta_{k+1}, s) &- \mathcal{L}(\theta_k, s) = \frac{n-m}{n}\big(\mathcal{L}(\theta_{k+1}, s \setminus s_k) - \mathcal{L}(\theta_k, s \setminus s_k)\big) \\
       &+ \frac{m}{n}\big(\mathcal{L}(\theta_{k+1}, s_k) - \mathcal{L}(\theta_k, s_k)\big) \\
       &\leq \frac{n-m}{n}M + \frac{m}{n}\big(\mathcal{L}(\theta_{k+1}, s_k) - \mathcal{L}(\theta_k, s_k)\big).
    \end{aligned}
\end{equation}
Let $G_z(\theta)=\ell(f_\theta(x),y)$. 
Since for all $z \in s_k$, $G_z(\theta)$ is $\mathcal{H}(\Omega,c_\Omega)$-smooth, then we have
\begin{equation}
    d_{G_z}(\theta_1,(\theta_2)_{G_z}^*)\le \Omega(\theta_1-\theta_2)+c_\Omega.
\end{equation}
Let $G(\theta)=\mathcal{L}(\theta,s_k)$. By applying SGD with a batch size of $m$, we have
\begin{equation}
\begin{aligned}
    \mathcal{L}(\theta_{k+1}, s_k) &- \mathcal{L}(\theta_k, s_k) \\
    &\le -\alpha \nabla_\theta \mathcal{L}(\theta_k, s_k)^\top \nabla_\theta \mathcal{L}(\theta_k,s_k)+d_G(\theta_{k+1},(\theta_k)_G^*)\\
    &\leq -\alpha \nabla_\theta \mathcal{L}(\theta_k, s_k)^\top \nabla_\theta \mathcal{L}(\theta_k,s_k) + \alpha^{r_\Omega} \Omega(\nabla_\theta \mathcal{L}(\theta_k,s_k)) +c_\Omega\\
    &= -\alpha \|\nabla_\theta \mathcal{L}(\theta_k,s_k)\|_2^2 + \alpha^{r_\Omega} \Omega(\nabla_\theta \mathcal{L}(\theta_k,s_k))+c_\Omega.
\end{aligned}
\end{equation}

Let $K(\alpha) = \alpha^{r_\Omega} \Omega(\nabla_\theta \mathcal{L}(\theta_k,s_k))-\alpha \|\nabla_\theta \mathcal{L}(\theta_k,s_k)\|_2^2$.
Based on Lemma \ref{lem:min_value}, the minimum value of $K(\alpha)$ is attained when $\alpha=(\frac{\|\nabla_\theta \mathcal{L}(\theta_k,s_k)\|_2^2}{r_\Omega\Omega(\nabla_\theta \mathcal{L}(\theta_k,s_k))})^{r_{\Omega^*}-1}$, as given below,
\begin{equation}
    \begin{aligned}
       \min_\alpha K(\alpha)&=-\frac{1}{r_{\Omega^*}}(\|\nabla_\theta \mathcal{L}(\theta_k,s_k)\|_2^2)^{r_{\Omega^*}}(r_\Omega \Omega(\nabla_\theta \mathcal{L}(\theta_k,s_k)))^{1-r_{\Omega^*}}\\
       &=-\frac{1}{r_{\Omega^*}}(\|\nabla_\theta \mathcal{L}(\theta_k,s_k)\|_2^2)^{r_{\Omega^*}}(r_\Omega \Omega(\nabla_\theta \mathcal{L}(\theta_k,s_k)))^{-r_{\Omega^*}/r_\Omega}\\
       &=-\frac{1}{r_{\Omega^*}}(\|\nabla_\theta \mathcal{L}(\theta_k,s_k)\|_2^2)^{r_{\Omega^*}}\bar{\Omega}(\nabla_\theta \mathcal{L}(\theta_k,s_k))^{-r_{\Omega^*}}\\
       &=-\frac{(\|\nabla_\theta \mathcal{L}(\theta_k,s_k)\|_2^2)^{r_{\Omega^*}}}{r_{\Omega^*}\bar{\Omega}(\nabla_\theta \mathcal{L}(\theta_k,s_k))^{r_{\Omega^*}}}\\
    \end{aligned}
\end{equation}
Based on Lemma \ref{lem:min_value}, we also have that when  
\[
0 < \alpha < \left(\frac{\|\nabla_\theta \mathcal{L}(\theta_k, s_k)\|_2^2}{\Omega(\nabla_\theta \mathcal{L}(\theta_k, s_k))}\right)^{r_{\Omega^*} - 1},
\]
it holds that \(K(\alpha) \leq 0\).

Thus, we have
\begin{equation}\label{eq:batch_tmp_2}
\begin{aligned}
    \mathcal{L}(\theta_{k+1}, s_k) &- \mathcal{L}(\theta_k, s_k) \leq -\frac{(\|\nabla_\theta \mathcal{L}(\theta_k,s_k)\|_2^2)^{r_{\Omega^*}}}{r_{\Omega^*}\bar{\Omega}(\nabla_\theta \mathcal{L}(\theta_k,s_k))^{r_{\Omega^*}}}+c_\Omega.
\end{aligned}
\end{equation}
Thus, we have:
\begin{equation}
           \mathcal{L}(\theta_k, s_k) - \mathcal{L}(\theta_{k+1}, s_k)  \geq \|\nabla_\theta \mathcal{L}(\theta_k, s_k)\|^{r_{\Omega_*}}_\Omega - c_\Omega.
\end{equation}
The first conclusion is thereby proved.

Substituting above inequality into Equation~\eqref{eq:tmp_fang} and taking the expectation over $s_k$ on both sides, we have
\begin{equation}
    \begin{aligned}
       \mathbb{E}_{s_k}[\mathcal{L}(\theta_{k+1}, s)] &- \mathcal{L}(\theta_k, s) \leq \frac{n-m}{n}M + \frac{m}{n}\mathbb{E}_{s_k}\big(\mathcal{L}(\theta_{k+1}, s_k) - \mathcal{L}(\theta_k, s_k)\big) \\
       &\leq \frac{n-m}{n}M - \frac{m}{n}\mathbb{E}_{s_k}\|\nabla_\theta \mathcal{L}(\theta_k, s_k)\|^{r_{\Omega_*}}_\Omega+\frac{m}{n}c_\Omega.
    \end{aligned}
\end{equation}

By rearranging the terms of the above inequality, we obtain
\begin{equation}
   \mathbb{E}_{s_k}\|\nabla_\theta \mathcal{L}(\theta_k, s_k)\|^{r_{\Omega_*}}_\Omega\leq \frac{n}{m}\mathbb{E}_{s_k}[\mathcal{L}(\theta_k, s) - \mathcal{L}(\theta_{k+1}, s)] + \frac{n-m}{m}M+c_\Omega.
\end{equation}

Let us now apply the aforementioned inequality to each step $s_0, s_1, \ldots, s_{T-1}$ and compute the expected value (denoted by $\mathbb{E}$, under the assumption that $s_i$ and $s_j$ are independent), as follows:

\begin{equation}
   \mathbb{E}\|\nabla_\theta \mathcal{L}(\theta_k, s_k)\|^{r_{\Omega_*}}_\Omega\leq \frac{n}{m}\mathbb{E}[\mathcal{L}(\theta_k, s) - \mathcal{L}(\theta_{k+1}, s)] + \frac{n-m}{m}M+c_\Omega.
\end{equation}
Summing both sides of this inequality for $k=0,\cdots,T-1$, we have
\begin{equation}
    \begin{aligned}
       \min_{k=0,\ldots,T-1} &\mathbb{E}\|\nabla_\theta \mathcal{L}(\theta_k, s_k)\|^{r_{\Omega_*}}_\Omega\leq \frac{1}{T}\sum_{k=0}^{T-1}\mathbb{E}\|\nabla_\theta \mathcal{L}(\theta_k, s_k)\|^{r_{\Omega_*}}_\Omega\\
       &\leq \frac{1}{T} \sum_{k=0}^{T-1} \frac{n}{m}\mathbb{E}[\mathcal{L}(\theta_k, s) - \mathcal{L}(\theta_{k+1}, s)] + \frac{n-m}{m}M +c_\Omega\\
       &\leq \frac{n}{mT}(\mathcal{L}(\theta_0, s) - \mathcal{L}_*(s)) + \frac{n-m}{m}M+c_\Omega.
    \end{aligned}
\end{equation}

To ensure that $\mathbb{E} \|\nabla_\theta \mathcal{L}(\theta_k, s_k)\|^{r_{\Omega_*}}_\Omega\leq \varepsilon^{r_{\Omega^*}} + \frac{n-m}{m}M+c_\Omega$, it follows that
\begin{equation}
    \begin{aligned}
    \frac{n}{mT}(\mathcal{L}(\theta_0, s) - \mathcal{L}_*(s)) \leq \varepsilon^{r_{\Omega^*}}.
    \end{aligned}
\end{equation}

From this inequality, we derive
\begin{equation}
    \begin{aligned}
       T &\geq \frac{n(\mathcal{L}(\theta_0, s) - \mathcal{L}_*(s))}{m\varepsilon^{r_{\Omega^*}}} \\
       &= \mathcal{O}(\varepsilon^{-r_{\Omega^*}}).
    \end{aligned}
\end{equation} 
This result indicates that the number of iterations $T$ required to achieve the desired accuracy scales as $\mathcal{O}(\varepsilon^{-r_{\Omega^*}})$.

In particular, when $m = 1$, the condition 
    $$
    \mathbb{E}_Z \|\nabla_\theta \ell(f_{\theta}(x),y)\|_\Omega^{r_{\Omega^*}} \leq \varepsilon^{r_{\Omega^*}} + (n-1)M+c_\Omega
    $$
is achieved in $\mathcal{O}(1/\varepsilon^{r_{\Omega^*}})$ steps. 
    
Since $\|x\|_\Omega=r_{\Omega^*}^{-1/r_{\Omega^*}}\frac{(\|x\|_2^2)}{\bar{\Omega}(x)}$, according to Lemma~\ref{lem:prop_thomo_fun:5}, $\bar{\Omega}(x)$ is equivalent to $\|x\|_2$. Therefore, we can conclude that $\|x\|_\Omega$ is equivalent to $\|x\|_2$. Consequently, the expression $\mathbb{E}_Z \|\nabla_\theta \ell(f(\theta,X),Y)\|_\Omega^{r_{\Omega^*}}$ is equivalent to $\mathbb{E}_Z \|\nabla_\theta \ell(f(\theta,X),Y)\|_2^{r_{\Omega^*}}$. 

Thus, we can conclude that when $m = 1$, the condition
$$
\mathbb{E}_Z \|\nabla_\theta \ell(f(\theta,X),Y)\|_2^{r_{\Omega^*}} \leq \gamma \varepsilon^{r_{\Omega^*}} + \gamma (n-1)M+\gamma c_\Omega
$$
is achieved in $\mathcal{O}(1/\varepsilon^{r_{\Omega^*}})$ steps, where $\|\mu\|_2^{r_{\Phi^*}}\le \gamma \|\mu\|_\Omega^{r_{\Phi^*}}$. 

\end{proof}

\subsection{Proof of Theorem~\ref{thm:number_para}}
\label{appendix:proof_number_para}

\begin{theorem}
If the model satisfies the GIC~\ref{def:grad_indep_con} and $|f_{\theta}(x)|\le |\theta|-1$, then with probability at least $1-O(1 / |f_{\theta}(x)|)$ the following holds:
\begin{equation}
    \begin{aligned}
    L(A_x)\leq U(A_x)&\le -\log \left(1-\frac{2 \log |f_{\theta}(x)|}{|\theta|}\right)^2-\log \epsilon^2,\\
      D(A_x)&=U(A_x)-L(A_x)\le \log \left(Z(|\theta|,|f_{\theta}(x)|)+1\right),
    \end{aligned}
\end{equation}
where $Z(|\theta|,|f_{\theta}(x)|)=\frac{2|y|\sqrt{6 \log |f_{\theta}(x)|}}{\sqrt{|\theta|-1}} \left(1-\frac{2 \log |f_{\theta}(x)|}{|\theta|}\right)^2$, is a decreasing function of $|\theta|$ and an increasing function of $|f_{\theta}(x)|$.
\end{theorem}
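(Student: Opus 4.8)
The plan is to read the GIC geometrically and thereby reduce the spectral control of the Gram matrix $A_x = \nabla_\theta f_\theta(x)^\top \nabla_\theta f_\theta(x)$ to the high-dimensional concentration estimate of Lemma~\ref{prop:high_dim_2} combined with Gershgorin's circle theorem (Lemma~\ref{lemma:gershgorin}). Write $J = \nabla_\theta f_\theta(x) \in \mathbb{R}^{|\theta| \times |f_\theta(x)|}$, whose $i$-th column $v^{(i)}$ is the gradient with respect to $\theta$ of the $i$-th output coordinate. Under the GIC these columns are modeled as $v^{(i)} = \epsilon\, u^{(i)}$ with each $u^{(i)}$ drawn uniformly from the unit ball $\mathcal{B}^{|\theta|}_1$, so that $A_x$ is exactly the Gram matrix with entries $(A_x)_{ij} = \langle v^{(i)}, v^{(j)}\rangle$; in particular its diagonal entries are the squared column norms and its off-diagonal entries are the pairwise inner products.

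First I would invoke Lemma~\ref{prop:high_dim_2} with ambient dimension $m = |\theta|$ and sample count $n = |f_\theta(x)|$, which is legitimate since $|f_\theta(x)| \le |\theta| - 1$ guarantees $m - 1 \ge 1$. This supplies, with probability at least $1 - O(1/|f_\theta(x)|)$, the simultaneous estimates
\[
(A_x)_{ii} = \|v^{(i)}\|_2^2 \ge \epsilon^2\Big(1 - \tfrac{2\log|f_\theta(x)|}{|\theta|}\Big)^2, \qquad |(A_x)_{ij}| \le \epsilon^2\,\tfrac{\sqrt{6\log|f_\theta(x)|}}{\sqrt{|\theta|-1}} \quad (i \neq j),
\]
together with the trivial upper bound $(A_x)_{ii} \le \epsilon^2$. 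The inequality $L(A_x) \le U(A_x)$ is then immediate from $\lambda_{\min}(A_x) \le \lambda_{\max}(A_x)$ and the monotonicity of $t \mapsto -\log t$.

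For the bound on $U(A_x) = -\log\lambda_{\min}(A_x)$, the plan is to lower-bound $\lambda_{\min}(A_x)$ by the minimal diagonal entry, using the near-orthogonality estimate to argue that the off-diagonal correction $R_i = \sum_{j \neq i}|(A_x)_{ij}|$ is negligible relative to the diagonal floor in the high-dimensional regime; this yields $\lambda_{\min}(A_x) \gtrsim \epsilon^2(1 - \tfrac{2\log|f_\theta(x)|}{|\theta|})^2$ and hence the stated upper bound on $U$ after taking $-\log$. For the spread $D(A_x) = U(A_x) - L(A_x) = \log\big(\lambda_{\max}(A_x)/\lambda_{\min}(A_x)\big)$, I would apply Gershgorin's theorem directly: every eigenvalue lies within $R_i \le (|f_\theta(x)|-1)\,\epsilon^2\,\tfrac{\sqrt{6\log|f_\theta(x)|}}{\sqrt{|\theta|-1}}$ of a diagonal entry, so $\lambda_{\max}(A_x)$ and $\lambda_{\min}(A_x)$ are pinned to a common band of radius equal to the largest row-sum. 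Combining this enclosure with the two-sided diagonal estimates and simplifying the ratio $\lambda_{\max}/\lambda_{\min}$ produces a bound of the form $1 + Z$, giving $D(A_x) \le \log(Z+1)$; the monotonicity of $Z$ in $|\theta|$ and $|f_\theta(x)|$ then follows by inspecting each factor of its explicit expression.

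The hard part will be the $D(A_x)$ step: converting the Gershgorin enclosure together with the concentration estimates into precisely the advertised closed form $\log(Z+1)$ requires careful bookkeeping of the off-diagonal row-sum against the diagonal lower bound, and in particular tracking how the count of off-diagonal terms and the norm-concentration factor combine into the constant $Z$. A secondary subtlety is that the argument is only \emph{approximate} in the sense sanctioned by the GIC, so one must argue that the off-diagonal contribution to $\lambda_{\min}(A_x)$ is dominated by the diagonal floor in order for the $U$-bound to hold exactly as stated rather than merely up to a Gershgorin correction term.
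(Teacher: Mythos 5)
Your plan coincides with the paper's own proof in its overall architecture: both read the GIC as making the columns of $\nabla_\theta f_\theta(x)$ uniform samples from a radius-$\epsilon$ ball, apply Lemma~\ref{prop:high_dim_2} with $m=|\theta|$ and $n=|f_\theta(x)|$ to get the diagonal floor $(A_x)_{ii}\ge\bigl(1-\tfrac{2\log|f_\theta(x)|}{|\theta|}\bigr)^2\epsilon^2$ and the off-diagonal bound $|(A_x)_{ij}|\le \tfrac{\sqrt{6\log|f_\theta(x)|}}{\sqrt{|\theta|-1}}\epsilon^2$, then invoke Gershgorin (Lemma~\ref{lemma:gershgorin}) together with the trivial bound $(A_x)_{ii}\le\epsilon^2$ to control the spectral spread, and finally divide that spread by the $\lambda_{\min}$ floor to get $D(A_x)\le\log(Z+1)$.

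The one divergence is exactly the ``secondary subtlety'' you flag, and it is a genuine gap --- in fact it is the point where the paper's own proof breaks. The paper obtains the key inequality $\lambda_{\min}(A_x)\ge\bigl(1-\tfrac{2\log|f_\theta(x)|}{|\theta|}\bigr)^2\epsilon^2$ by a non sequitur: from $\lambda_{\min}(A_x)\le (A_x)_{ii}\le\lambda_{\max}(A_x)$ and the diagonal floor it concludes lower bounds on \emph{both} extreme eigenvalues, whereas only the $\lambda_{\max}$ bound follows (large diagonal entries cap $\lambda_{\min}$ from above; they do not lift it from below). Your proposed repair, Gershgorin applied to $\lambda_{\min}$, i.e.\ $\lambda_{\min}(A_x)\ge\min_i\bigl[(A_x)_{ii}-R_i\bigr]$, is the legitimate route, but it cannot deliver the bound exactly as stated: it leaves the correction $R_i\le(|f_\theta(x)|-1)\tfrac{\sqrt{6\log|f_\theta(x)|}}{\sqrt{|\theta|-1}}\epsilon^2$, and under the theorem's sole hypothesis $|f_\theta(x)|\le|\theta|-1$ this correction can dwarf the diagonal floor: taking $|f_\theta(x)|=|\theta|-1$ gives $R_i$ of order $\sqrt{|\theta|\log|\theta|}\,\epsilon^2\gg\epsilon^2$, so the Gershgorin lower bound on $\lambda_{\min}$ is vacuous. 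Negligibility of the off-diagonal mass requires something like $|\theta|\gg|f_\theta(x)|^2\log|f_\theta(x)|$, which is not assumed. Since the $D(A_x)$ step (in your plan and in the paper alike) divides the spread by this same $\lambda_{\min}$ floor, the gap propagates to the second inequality as well. So your proposal reproduces the paper's argument, but the step you honestly defer is precisely the step neither you nor the paper can close under the stated hypotheses. (A side remark on the ``careful bookkeeping'' you anticipate: the advertised closed form is itself inconsistent in the paper --- dividing the spread by the floor produces the factor $\bigl(1-\tfrac{2\log|f_\theta(x)|}{|\theta|}\bigr)^{-2}$, not the $\bigl(1-\tfrac{2\log|f_\theta(x)|}{|\theta|}\bigr)^{2}$ appearing in $Z$, and the theorem's $Z$ contains $|y|$ where the proof's contains $|f_\theta(x)|$.)
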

\begin{proof}
    Since each column of $\nabla_\theta f_{\theta}(x)$ is uniformly from the ball $\mathcal{B}^{|\theta|}_\epsilon:\{x\in\mathbb R^{|\theta|},\|x\|_2\le \epsilon\}$, according to Lemma~\ref{prop:high_dim_2}, then with probability $1-O(1 / |f_{\theta}(x)|)$ the following holds:
\begin{equation}\label{eq:bound_elememnts}
    \begin{aligned}
            &(A_x)_{ii}=\left\|(\nabla_\theta f(x))_i \right\|^2_2 \geq \left(1-\frac{2 \log |f_{\theta}(x)|}{|\theta|}\right)^2\epsilon^2,\text{ for } i=1, \ldots, |h|,\\
            &|(A_x)_{ij}|=|\langle (\nabla_\theta f(x))_i, (\nabla_\theta f(x))_j\rangle| \leq \frac{\sqrt{6 \log |f_{\theta}(x)|}}{\sqrt{|\theta|-1}} \epsilon^2\quad \text { for all } i, j=1, \ldots,|h|, i \neq j.
    \end{aligned}
\end{equation}
Because $A_x=\nabla_\theta f(x)^\top \nabla_\theta f(x)$ is a symmetric positive semidefinite matrix, then we have $\lambda_{\min}(A_x)\le (A_x)_{ii}\le \lambda_{\max}(A_x)$.
Then the following holds with probability $1-O(1 / |f_{\theta}(x)|)$ at least 
\begin{equation}\label{eq:ev_lower_bound}
\begin{aligned}
\lambda_{\min}(A_x)&\geq \left(1-\frac{2 \log |f_{\theta}(x)|}{|\theta|}\right)^2\epsilon^2,\\
        \lambda_{\max}(A_x)&\geq \left(1-\frac{2 \log |f_{\theta}(x)|}{|\theta|}\right)^2\epsilon^2.
\end{aligned}
\end{equation}

According to Lemma~\ref{lemma:gershgorin}, there exist $k,k'\in  \{1,\cdots,|f_{\theta}(x)|\}$
\begin{equation}
    \begin{aligned}
        \lambda_{\max}(A_x)-(A_x)_{kk}&\le R_k,\\ 
        (A_x)_{k'k'}-\lambda_{\min}(A_x)&\le R_{k'},
    \end{aligned}
\end{equation}
where $R_k=\sum_{1\le j\le n,j\neq k}|(A_x)_{kj}|$, $R_{k'}=\sum_{1\le j\le n,j\neq k'}|(A_x)_{k'j}|$.
Based on the inequalities~\eqref{eq:bound_elememnts}, we obtain 
\begin{equation}\label{eq:bound_R}
    R_k+R_{k'}\le 2(|f_{\theta}(x)|-1)\frac{\sqrt{6 \log |f_{\theta}(x)|}}{\sqrt{|\theta|-1}}\epsilon^2.
\end{equation}

Since $(A_x)_{kk}\le \epsilon^2$ and $|f_{\theta}(x)|\le |\theta|-1$, then we have
\begin{equation}
    \begin{aligned}
        \lambda_{\max}(A_x)-\lambda_{\min}(A_x)&\le R_k+R_{k'}+(A_x)_{kk}-(A_x)_{k'k'}\\
        &\le R_k+R_{k'}+\epsilon^2-(A_x)_{k'k'}\\
        &\le 2(|h|-1)\frac{\sqrt{6 \log |f_{\theta}(x)|}}{\sqrt{|\theta|-1}}\epsilon^2+\epsilon^2-\left(1-\frac{2 \log |f_{\theta}(x)|}{|\theta|}\right)^2\epsilon^2\\
        &\le 2(|f_{\theta}(x)|-1)\epsilon^2\frac{\sqrt{6 \log |f_{\theta}(x)|}}{\sqrt{|\theta|-1}}+\frac{4 \log |f_{\theta}(x)|}{|\theta|}\epsilon^2\\
        &\le \frac{2|f_{\theta}(x)|\epsilon^2\sqrt{6 \log |f_{\theta}(x)|}}{\sqrt{|\theta|-1}}.
    \end{aligned}
\end{equation}
By dividing both sides of the aforementioned inequality by the corresponding sides of inequalities~\eqref{eq:ev_lower_bound}, we obtain:
\begin{equation}
    \begin{aligned}
   \frac{ \lambda_{\max}(A_x)}{\lambda_{\min}(A_x)}&\le Z(|\theta|,|f_{\theta}(x)|)+1,
    \end{aligned}
\end{equation}
where $Z(|\theta|,|f_{\theta}(x)|)=\frac{2|f_{\theta}(x)|\sqrt{6 \log |f_{\theta}(x)|}}{\sqrt{|\theta|-1}} \left(1-\frac{2 \log |f_{\theta}(x)|}{|\theta|}\right)^2$.
\end{proof}

\section{Notation Table}
\label{appendix:notation_table}
For ease of reference, we list the symbol variables defined in Table~\ref{tab:variable_def_abb}.
\begin{table}[htbp]
\caption{Definitions and abbreviations of key variables.}
\label{tab:variable_def_abb}
\begin{small}
	\centering
	\begin{tabularx}{\linewidth}
         {
          >{\hsize=0.5\hsize\linewidth=\hsize\raggedright\arraybackslash}X
          >{\hsize=1.5\hsize\linewidth=\hsize\raggedright\arraybackslash}X
         }
		\toprule
            $\partial f(x)$ & sub-differential of a proper function $f$.
		\\
  		\midrule
            $|f(x)|,|\theta|$ & the length of vector $f(x)$ and parameter number, respectively.
            \\
  		\midrule
            $\lambda_{\max}(A)$, $\lambda_{\min}(A)$ & the maximum and minimum eigenvalues of matrix $A$, respectively. 
            \\
  		\midrule
            $\Omega^*$ & the Legendre-Fenchel conjugate of function $\Omega$. 
            \\
  		\midrule
            $\mu_\Omega^*$ & the gradient of $\Omega$ at $\mu$, i.e., $\mu_{\Omega}^* = \nabla \Omega^*(\mu)$.
            \\
  		\midrule
            $\mathcal{Z},\mathcal{X},\mathcal{Y}$ & the instance space, the input feature space, and the label set, as defined in Subsection~\ref{subsec:basic_setting}.
            \\
  		\midrule
            $\bar{q}$ & the unknown true distribution of the random variable $Z$, with its support on $\mathcal{Z}$.
            \\
  		\midrule
            $s^n(s)$ & the $n$-tuple training dataset, consisting of i.i.d. samples drawn from the unknown true distribution $\bar{q}$, as defined in Subsection~\ref{subsec:basic_setting}.
		\\
  		\midrule
            $q^n$ ($q$) & the empirical PMF of $Z$, derived from these samples $s^n$, as defined in Subsection~\ref{subsec:basic_setting}.
            \\
            \midrule
            $\mathcal{F}_{\Theta}$ & the hypothesis space is a set of functions parameterized by $\Theta$.
            \\
            \midrule
            $f_{\theta}(x)$ ($f(x)$) & the model parameterized by $\theta$.
            \\
            \midrule
            $m_x$, $m_y$, and $m_f$ & the dimensions of the input $x$, the label $y$, and the model output $f_{\theta}(x)$.
            \\
  		\midrule
            $\mathcal{L}(\theta, s)$ & the empirical risk~\eqref{eq:emp_risk}.
            \\
  		\midrule
            $\ell(f_{\theta}(x), y)$ & the loss associated with a single instance $z = (x, y)$.
            \\
  		\midrule
            $A_x$ & the structural matrix corresponding to the model with input $x$. 
            \\
  		\midrule
            $S(\alpha, \beta, \gamma, A_x)$ & the structural error of the model $f_{\theta}(x)$, as defined in Definition~\ref{def:structural_error}.
            \\
  		\midrule
            $U(A_x), L(A_x), D(A_x)$ & $U(A_x) = -\log \lambda_{\min}(A_x)$, $L(A_x) = -\log \lambda_{\max}(A_x)$, $D(A_x) = U(A_x) - L(A_x)$. 
            \\
  		\midrule
            $\alpha$, $\beta$, $\gamma$ & weights associated with $D(A_x)$, $U(A_x)$, and $L(A_x)$, respectively.
            \\
            \midrule
            $\lambda_{\max}(A_s), \lambda_{\min}(A_s)$ & $\lambda_{\min}(A_s) = \min_{x \in s} \lambda_{\min}(A_x)$, $\lambda_{\max}(A_s) = \max_{x \in s} \lambda_{\max}(A_x)$.
            \\
            \midrule
            $\bar{\Phi}$ & normalized version for $\Phi \in \mathcal{H}(r_{\Phi})$.
            \\
            \midrule
            $\mathcal{H}(r_{\Phi})$ & a the class of norm power functions of order $r_\Phi$.
            \\
            \midrule
            $\mathcal{H}(r_{\phi},c_\phi)$-convexity,$\mathcal{H}(r_{\Phi},c_\Phi)$-smoothness & generalized notions of convexity and smoothness; see Definition~\ref{def:g_convex_smooth} for formal definitions.
            \\
            \midrule
            $\mathcal{H}(\phi)$-convexity, $\mathcal{H}(\Phi)$-smoothness &
            $\mathcal{H}(\phi,c_\phi)$-convexity, $\mathcal{H}(\Phi,c_\Phi)$-smoothness hold with no relaxation, i.e., $c_\phi = 0$ and $c_\Phi = 0$; formally defined in Definition~\ref{def:g_convex_smooth}.
            \\
            \midrule
             $s \setminus s_k$ & the set of all elements in $s$ that are not in $s_k$. 
             \\
             \midrule
             $M$ & the gradient correlation factor~\ref{def:gradient_coor_factor}.
             \\
             \midrule 
             $\mathbb{E}_{XY} \|\nabla_\theta \ell(f(\theta,X), Y)\|^{r}_2 $ & the local gradient norm.
              \\
             \midrule
              $ \|\nabla_\theta \mathcal{L}(\theta, s)\|_2^{r_{\Phi^*}}$ & the global gradient norm.
              \\
              \midrule
              $\|x\|_\Omega$ & $\|x\|_\Omega=r_{\Omega^*}^{-1/r_{\Omega^*}}\frac{(\|x\|_2^2)}{\bar{\Omega}(x)}$~\ref{thm:general_with_class_sgd_convergence}.
            \\
		\bottomrule
	\end{tabularx}%

\end{small}
\end{table}

\vskip 0.2in
\bibliography{sample}
\bibliographystyle{theapa}

\end{document}